\newtheorem{theorem}{Theorem}
\newtheorem{lemma}[theorem]{Lemma}
\newtheorem{corollary}[theorem]{Corollary}
\newtheorem{definition}{Definition}
\newtheorem{claim}[theorem]{Claim}
\newcommand {\ignore} [1] {}
\def \eqdef {:=}
\DeclareMathOperator{\supp}{supp}
\providecommand{\eqdef}{:=}
\newcommand{\etal}{{\em et al.\ }\xspace}
\newcommand{\R}{\mathbb{R}}
\newcommand{\D}{\mathcal{D}}
\newcommand{\A}{\mathcal{A}}
\newcommand{\E}{\mathbb{E}}
\newcommand{\eps}{\varepsilon}
\newcommand{\Xs}{\mathcal{X}}
\newcommand{\y}{\hat{y}}
\newcommand{\ellsp}{\hat{\ell}}
\newcommand{\Loss}{\mathcal{L}}
\newcommand{\Dy}{\D_{\y}}
\newcommand{\dotp}[2]{\left\langle #1 , #2 \right\rangle}
\DeclareMathOperator*{\sign}{sign}
\title{Near-Tight Margin-Based Generalization Bounds for Support Vector Machines}
\author{Allan Gr{\o}nlund \thanks{Computer Science Department. Aarhus University. \texttt{jallan@cs.au.dk}.} \qquad 
Lior Kamma \thanks{Computer Science Department. Aarhus University. Supported by a Villum Young Investigator Grant \texttt{lior.kamma@cs.au.dk}.} 
\qquad 
Kasper Green Larsen \thanks{Computer Science Department. Aarhus University. Supported by a Villum Young
    Investigator Grant, an AUFF Starting Grant and a DFF Sapere Aude Starting Grant. \texttt{larsen@cs.au.dk}. }
}
\date{}
\begin{document}

\maketitle

\begin{abstract}
Support Vector Machines (SVMs) are among the most fundamental tools for binary classification. 
In its simplest formulation, an SVM produces a hyperplane separating two classes of data 
using the largest possible margin to the data. 
The focus on maximizing the margin has been well motivated through numerous generalization bounds. 
In this paper, we revisit and improve the classic generalization bounds in terms of margins. 
Furthermore, we complement our new generalization bound by a nearly matching lower bound, 
thus almost settling the generalization performance of SVMs in terms of margins.
\end{abstract}

\section{Introduction}
Since their introduction~\cite{Vapnik:1982, Cortes1995} 
{\em Support Vector Machines (SVMs)} have continued to be among the most popular classification algorithms. 
In the most basic setup an SVM produces, upon receiving a training data set, a classifier by finding a maximum margin hyperplane separating the data.
More formally, given a training data set $S = \{x_1,\dots,x_m\}$ of $m$ samples in $\R^d$, 
each with a label $y_i \in \{-1,+1\}$, an SVM finds a unit vector $w \in \R^d$ such that 
$y_i \langle x_i, w\rangle \geq \theta$ for all $i$, with the largest possible value of the margin $\theta$. Note that one often includes a bias parameter $b$ such that one instead requires $y_i (\langle x_i, w \rangle+b) \geq \theta$. As $b$ has no relevance on this work we ignore it for notational simplicity.
The predicted label on a new data data point $x \in \R^d$, is simply $\sign(\langle x, w\rangle)$. 
When the data is linearly separable, that is there exists a vector $w$ with $y_i \langle x_i, w\rangle > 0$ 
for all $i$, then the maximum margin hyperplane $w$ is the solution to the 
following convex optimization problem, which is often referred to as the {\em hard margin SVM}.
\begin{equation}
\begin{split}
&\min_w \; \|w\|_2^2\\
&s.t. \; y_i \langle x_i, w\rangle \geq 1 \;\; \forall i.
\end{split}
\label{eq:hardMargin}
\end{equation}

Note that the maximum margin hyperplane is not necessarily a vector $w$ of unit norm. If we however let $w^* = w/\|w\|_2$, then by linearity, we get a unit vector $w^*$ such that $y_i \langle x_i ,w^* \rangle \geq 1/\|w\|_2$ for all $i$.
That is, the margin becomes at least $1/\|w\|_2$ for all $(x_i,y_i)$.

As data is typically not linearly separable, one often considers a relaxed variant of the above optimization problem, 
known as {\em soft margin SVM} \cite{Cortes1995}. 

\begin{equation}
\begin{split}
\min_{w, \xi} \; &\|w\|_2^2 + \lambda \sum\nolimits_i \xi_i \\
s.t. \; &y_i \langle x_i, w\rangle \geq 1 -\xi_i \;\; \forall i.\\
&\xi_i \ge 0 \;\; \forall i.
\end{split}
\label{eq:softMargin}
\end{equation}

Here $\lambda \geq 0$ is a hyper parameter which, roughly speaking, controls the tradeoff between the magnitude 
of the margin $\theta = 1/\|w\|_2$ and the number of data points with margin significantly less than $\theta$.
The soft margin optimization problem is also convex and can be solved efficiently. 

A key reason for the success of SVMs is the extensive study and ubiquitousness of kernels (see e.g. \cite{Boser:1992}). By allowing efficient calculation of inner products in high (or even infinite) dimensional spaces, kernels make it possible to apply  SVMs in these spaces through feature transforms without actually having to compute the feature transform, neither during training or prediction.
Predictions are efficient since they only need to consider the support vectors. These are the sample data points $(x,y)$ that are not {\em strictly} on the correct side of the margin of the hyperplane, meaning that $y\dotp{x}{w} \le \theta$.

Feature transforms, like the application of a kernel, often drastically increase the dimensionality of the input domain, directly increasing the  the VC-dimension of the hypothesis set (the set of hyperplanes) the same way.
Thus one might worry about overfitting. However, SVMs, even with the Gaussian kernel that maps to an infinite dimensional space, often generalize well to new data points in practice.
Explaining this phenomenon has been the focus of much theoretical work, see e.g.~\cite{Vapnik:1982, Bartlett98generalizationperformance, DBLP:journals/jmlr/BartlettM02},
with probably the most prominent and simplest explanations being based on generalization bounds involving margins.
These margin generalization bounds show that, as long as a hypothesis vector  has large margins on most training data, then the hypothesis generalizes well to new data, independent of the dimension of the data.
Further strengthening these generalization bounds and our understanding of the influence of margins is the focus of this paper. We start by reviewing some of the previous margin-based generalization bounds for SVMs.

\subsection{Previous Generalization Bounds}
In what follows we review previous generalization bounds for SVMs. We have focused on the most classic bounds, taking only the margin $\theta$, the radius $R$ of the input space, and the number of data samples $m$ into account.
We have rephrased the previous theorems to put them all into the same form, allowing for easier comparison between them.
Throughout $X$ denotes the input space, $\D$ a distribution  over $X \times \{-1,1\}$, and $\Loss_\D(w)$ the out-of-sample error for a vector $w$. That is $\Loss_\D(w) = \Pr_{(x,y) \sim {\cal D}}\left[ \sign(\dotp{x}{w}) \neq y \right] = \Pr_{(x,y) \sim {\cal D}}\left[ y\dotp{x}{w} \le 0 \right]$.
Given a training set $S$ and a margin $\theta$, $\Loss^\theta_S(w)$ denotes the in-sample margin error for a vector $w$, i.e.  $\Loss^\theta_S(w) = \Pr_{(x,y)\sim S}\left[y\dotp{x}{w} \le \theta \right]$, where $(x,y)\sim S$ means that $(x,y)$ is sampled from $S$ uniformly at random.

The first work trying to explain the generalization performance of SVMs through margins is due to Bartlett and Shawe-Taylor~\cite{Bartlett98generalizationperformance}. They first consider the linearly separable case/hard margin SVM and prove the following generalization if all samples have margins at least $\theta$:
\begin{theorem}
\label{thm:bartlettNoErrors}[Bartlett and Shawe-Taylor~\cite{Bartlett98generalizationperformance}]
Let $d \in \mathbb{N}^+$ and let $R > 0$. Denote by $X$ the ball of radius $R$ in $\mathbb{R}^d$ and let ${\cal D}$ be any distribution over $X \times \{-1,1\}$. For every $\delta > 0$, it holds with probability at least $1-\delta$ over a set of $m$ samples $S \sim {\cal D}^m$, that for every $w \in \mathbb{R}^d$ with $\|w\|_2 \leq 1$, if all samples $(x,y) \in S$ have margin (i.e. $y\langle x, w \rangle$) at least $\theta > 0$, then:
$$\Loss_\D(w)\le O\left(\frac{(R/\theta)^2\ln^2 m + \ln(1/\delta)}{m} \right).$$
\end{theorem}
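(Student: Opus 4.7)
My plan is to prove the statement via a covering-number uniform convergence argument, following the classical route: bound the empirical $L_\infty$ covering number of the class of bounded-norm linear functionals at scale $\theta/2$, reduce the margin condition to a zero-error condition on a finite cover, and then apply a double-sample symmetrization together with a union bound over the cover.

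\medskip
\noindent\textbf{Step 1: Covering the linear class at scale \boldmath$\theta/2$.} Fix the sample $S = \{x_1,\dots,x_m\}$ and consider the class $F_W = \{x \mapsto \dotp{x}{w} : \|w\|_2 \le 1\}$, restricted to $S$, equipped with the empirical $\ell_\infty$ metric $\rho(f,g) = \max_i |f(x_i)-g(x_i)|$. I would invoke Maurey's sparsification lemma (equivalently, the standard probabilistic construction that approximates $w$ by an average of $k = O((R/\theta)^2)$ of the $x_i$'s appropriately scaled) to produce a covering of $F_W$ at scale $\theta/2$ whose size is at most $m^{O((R/\theta)^2)}$. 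Hence the log covering number is $O((R/\theta)^2 \log m)$.

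\medskip
\noindent\textbf{Step 2: Reduction from margin to discrete hypothesis class.} The key observation is that if $w$ has margin $\ge \theta$ on every $(x_i,y_i)\in S$, and $w'$ is any element of the cover within $\theta/2$ of $w$ in the empirical $\ell_\infty$ metric, then $y_i \dotp{x_i}{w'} \ge \theta/2 > 0$ for all $i$, so $w'$ has zero $\theta/2$-margin empirical error. Conversely, if $w$ has large true error $\Loss_\D(w) \ge \eps$, then the corresponding $w'$ satisfies $\Pr_{(x,y)\sim\D}[y\dotp{x}{w'} \le \theta/2] \ge \eps$. So it suffices to bound the probability that some $w'$ in the cover has zero $\theta/2$-margin loss on the sample but $\theta/2$-margin loss $\ge \eps$ on the distribution.

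\medskip
\noindent\textbf{Step 3: Symmetrization and union bound.} The subtlety is that the cover depends on $S$, so I would use the standard double-sample (ghost-sample) trick: draw an independent $S'$, replace the event in question by one comparing empirical frequencies on $S$ and $S'$, then use a random permutation of the $2m$ points. Conditioning on the multiset $S\cup S'$, the cover is fixed, and for each $w'$ in the cover the probability (over the permutation) that all $\theta/2$-margin violations fall into $S'$ is at most $2^{-\eps m/2}$ when $\eps m/2$ violations exist in $S\cup S'$. A union bound over the $m^{O((R/\theta)^2)}$ cover elements then gives failure probability at most $m^{O((R/\theta)^2)}\cdot 2^{-\Omega(\eps m)}$.

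\medskip
\noindent\textbf{Step 4: Solving for \boldmath$\eps$.} Setting this quantity to $\delta$ and solving for $\eps$ yields
\[
\eps = O\!\left(\frac{(R/\theta)^2\log^2 m + \log(1/\delta)}{m}\right),
\]
matching the claimed bound. The main obstacle I anticipate is Step~1: carrying out Maurey's argument carefully so that both the scale is $\theta/2$ and the size of the cover scales only polynomially in $m$ (which is what produces the tight $\log^2 m$ factor rather than a worse $\log^3 m$ or $m^{1/2}$ factor). The symmetrization step, while standard, must be executed with care precisely because the cover is constructed from the data.
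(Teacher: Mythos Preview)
The paper does not prove Theorem~\ref{thm:bartlettNoErrors}; it is quoted from Bartlett and Shawe--Taylor as prior work, with no proof supplied. So there is no ``paper's own proof'' to compare against here. Your outline is the classical covering-number/symmetrization route that underlies the original result, and it is essentially correct in structure.

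That said, there is an internal inconsistency between your Step~1 and your Step~4 that you should fix. In Step~1 you assert an $\ell_\infty$ cover of size $m^{O((R/\theta)^2)}$, i.e.\ $\log N_\infty = O((R/\theta)^2\log m)$, via Maurey sparsification with $k=O((R/\theta)^2)$ terms. But the probabilistic construction you describe controls each individual inner product $\langle x_i,\cdot\rangle$ with sub-Gaussian fluctuations of order $R/\sqrt{k}$; to make the error at most $\theta/2$ \emph{simultaneously on all $m$ points} (which is what $\ell_\infty$ demands) you need a union bound over $m$, forcing $k=O((R/\theta)^2\log m)$ rather than $k=O((R/\theta)^2)$. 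That yields a cover of size $m^{O((R/\theta)^2\log m)}$, hence $\log N_\infty = O((R/\theta)^2\log^2 m)$, and plugging \emph{that} into Step~3 is what actually produces the $\log^2 m$ you write in Step~4. (The original Bartlett--Shawe--Taylor argument goes through the fat-shattering dimension $O((R/\theta)^2)$ together with the Alon--Ben-David--Cesa-Bianchi--Haussler covering bound, which likewise yields the $\log^2 m$ factor.) So your Step~4 conclusion is right, but it follows from a \emph{larger} cover than the one you stated in Step~1.

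Note the flip side: if your Step~1 bound of $\log N_\infty = O((R/\theta)^2\log m)$ were truly available, then carrying the arithmetic through Steps~3--4 would give $\eps = O(((R/\theta)^2\log m + \log(1/\delta))/m)$ with only \emph{one} logarithm --- which is exactly the improvement this paper proves as Theorem~\ref{thm:mainupper} (specialized to the hard-margin case). Achieving that sharper cover bound requires a different technique (Johnson--Lindenstrauss projection followed by randomized rounding to a lattice, as in the paper's proof of Theorem~\ref{thm:mainupper}), not the Maurey sparsification you invoke. So be careful to match the covering technique to the bound you claim: Maurey/fat-shattering gives $\log^2 m$ and recovers Theorem~\ref{thm:bartlettNoErrors}; the JL-plus-rounding route gives $\log m$ and recovers the paper's stronger Theorem~\ref{thm:mainupper}.
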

They complemented their bound with a generalization bound for the soft margin SVM setting, showing that in addition for all $\theta >0$,
\begin{equation*}
\Loss_\D(w)\le \Loss^\theta_S(w) +O\left(\sqrt{\frac{(R/\theta)^2\ln^2 m + \ln(1/\delta)}{m} }\right) \;.
\end{equation*} 

Notice how the generalization error in the soft margin case is larger as $\sqrt{x} \ge x$ for $x \in [0,1]$. This fits well with classic VC-dimension generalization bounds for the realizable and non-realizable setting, see e.g.~\cite{VC71, EHKV89, AB09}.

This bound was later improved by Bartlett and Mendelson~\cite{DBLP:journals/jmlr/BartlettM02}, who showed, using Rademacher complexity, that for all $\theta >0$, 
\begin{equation}
\Loss_\D(w) \le \Loss_S^\theta(w) + O\left(\sqrt{\frac{(R/\theta)^2+\ln(1/\delta)}{m}}\right)\;.
\label{eq:bartlettMendelsonWithErrors}
\end{equation}
Ignoring logarithmic factors and the dependency on $\delta$, both bounds show similar dependencies on the radius of the point set $R$, the margin $\theta$ and the number of samples $m$. The dependency on $R/\theta$ also fits well with the intuition that scaling the data distribution should not change the generalization performance. Finally notice how the soft margin bounds allow one to consider any margin $\theta$, not just the smallest over all samples, and then pay an additive term proportional to the fraction of points in the sample with margin less than $\theta$ (i.e. $\Loss_S^\theta(w) = \Pr_{(x,y)\in_R S}[y \dotp{x}{w} \le \theta ]$).

Finally, the work by McAllester~\cite{DBLP:conf/colt/McAllester03}, uses a PAC-Bayes argument to give a bound that attempts to interpolate between the hard margin and soft margin case. His bound shows that for all $\theta>0$, we have:
\begin{equation}
  \label{eq:mcallester}
 \Loss_\D(w) \le \Loss_S^\theta(w) + O\left(\frac{(R/\theta)^2\ln m }{m} +  \sqrt{\frac{(R/\theta)^2\ln m}{m} \cdot \Loss_S^\theta(w)}\right) + O\left(\sqrt{\frac{\ln m + \ln(1/\delta)}{m} }\right)\;.
\end{equation}
Notice that in the hard margin case, we have $\Loss_S^\theta(w) = 0$ and thus the above simplifies to $O((R/\theta)^2 \ln (m)/m) + O(\sqrt{(\ln m + \ln(1/\delta))/m})$. The first term is an $\ln m$ factor better than the hard margin bound by Bartlett and Shawe-Taylor (Theorem~\ref{thm:bartlettNoErrors}), but unfortunately it is dominated by the $\sqrt{(\ln m + \ln(1/\delta))/ m}$ term for all but very small margins ($\theta$ must be less than  $R(\ln(m)/m)^{1/4}$).

These classic bounds have not seen any improvements for almost two decades, even though we have no generalization lower bounds that rule out further improvements.
Generalization bounds for SVMs that are independent of the dimensionality of the space has also been proved based on the (expected) number of support vectors \cite{Vapnik:1982}.

\subsection{Our Contributions}
Our first main contribution is an improvement over the known margin-based generalization bounds for a large range of parameters. Our new generalization bound is as follows:

\begin{theorem}
\label{thm:mainupper}
Let $d \in \mathbb{N}^+$ and let $R > 0$. Denote by $X$ the ball of radius $R$ in $\mathbb{R}^d$ and let ${\cal D}$ be any distribution over $X \times \{-1,1\}$. For every $\delta > 0$, it holds with probability at least $1-\delta$ over a set of $m$ samples $S \sim {\cal D}^m$, that for every $w \in \mathbb{R}^d$ with $\|w\|_2 \leq 1$ and every margin $\theta > 0$, we have

$$\Loss_\D(w) \le \Loss_S^\theta(w) + O\left(\frac{(R/\theta)^2\ln m + \ln(1/\delta)}{m} + \sqrt{\frac{(R/\theta)^2\ln m + \ln(1/\delta)}{m} \cdot \Loss_S^\theta(w)}\right) \;.$$

\end{theorem}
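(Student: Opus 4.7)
The plan is to reduce the infinite hypothesis class $\{w \in \R^d : \|w\|_2 \le 1\}$ to a finite family $\H$ of ``sketched'' hypotheses of size $|\H| = \exp(O((R/\theta)^2 \ln m))$, and then apply a Bernstein-type deviation inequality together with a union bound over $\H$. Recall that Bernstein's inequality for binary losses gives, with probability at least $1-\delta$, uniformly for every $h \in \H$,
$$\Loss_\D(h) \;\le\; \Loss_S(h) + O\!\left(\sqrt{\frac{\ln(|\H|/\delta)}{m}\,\Loss_S(h)} + \frac{\ln(|\H|/\delta)}{m}\right),$$
which, after the translation between $0/1$ loss and margin loss described below, exactly matches the form of Theorem~\ref{thm:mainupper} once $\ln|\H| = O((R/\theta)^2 \ln m)$.

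For the sketching step I would first project $w$ onto the span of the training points (this does not decrease any training margin and does not increase $\|w\|_2$, so it is without loss of generality). I would then build a sketch $\w$ of $w$ as an average of $k = O((R/\theta)^2 \ln m)$ i.i.d.\ samples from a data-dependent discrete distribution of polynomially-sized support, designed so that $\E[\w]=w$ (for instance, via a Maurey/JL-style randomized representation). A Hoeffding/Bernstein computation for each fixed coordinate $j$, plus a union bound over $j \in [m]$, shows that with high probability the sketch satisfies $|\langle x_j, \w - w\rangle| \le \theta/4$ simultaneously for every $j$. Consequently $\Loss_S^{3\theta/4}(\w) \le \Loss_S^\theta(w)$ on the sample side, and $\Loss_\D(w) \le \Loss_\D^{\theta/4}(\w)$ on the population side modulo the small sketch-failure probability. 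Discretizing the sketch coefficients and union-bounding over $O(\log m)$ dyadic values of $\theta$ then defines a finite $\H$ with $\ln|\H| = O((R/\theta)^2 \ln m)$, and applying the Bernstein bound above to the indicator $\mathbbm{1}[y\langle x, \w\rangle \le \theta/4]$ for each $\w \in \H$ delivers the theorem.

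The main obstacle, and where most of the technical work will go, is achieving $\ln|\H| = O((R/\theta)^2 \ln m)$ without picking up a spurious extra $\ln m$ factor. A naive enumeration of $k$-tuples over an $m$-point support gives $m^k = \exp(O((R/\theta)^2 \ln^2 m))$, which is a $\ln m$ factor too many. Shaving this requires either a more structured sampling scheme whose outcomes encode in $O((R/\theta)^2 \ln m)$ bits, or a refined counting argument noting that many raw sketches induce the same margin pattern on $S$ and can be identified. A secondary issue is that $\w$ depends on $w$ while uniform convergence must hold over all $w$; this is handled by a ghost-sample / double-sample step exploiting the fact that only the $m$ numbers $\{\langle x_j, w\rangle\}_j$ determine the induced loss, so the effective choice set is already finite after discretization.
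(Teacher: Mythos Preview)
Your high-level architecture---discretize the hypothesis class to a finite family $\H$, then apply a multiplicative Chernoff/Bernstein bound and union bound---is exactly the skeleton the paper uses. You have also put your finger on precisely the right obstacle: a Maurey-type sketch built from $k = O((R/\theta)^2 \ln m)$ draws over a size-$m$ support yields $\ln|\H| = O((R/\theta)^2 \ln^2 m)$, which reproduces the old Bartlett--Shawe-Taylor bound rather than the improved one. What is missing from your proposal is a concrete mechanism to remove that extra $\ln m$; the two fixes you sketch (``more structured sampling'' and ``many sketches induce the same pattern'') are the right instincts but not yet arguments.

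The paper's solution to this bottleneck is \emph{not} Maurey sampling over the data but a Johnson--Lindenstrauss projection $A\in\R^{k\times d}$ with $k=\Theta((R/\theta)^2\ln m)$, followed by a randomized rounding of $Aw$ to a grid $\Delta_k$ of vectors in $\R^k$ whose coordinates are integer multiples of $1/\sqrt{k}$ and whose norm is $O(1)$. A stars-and-bars count shows $|\Delta_k| \le 2^{O(k)}$, so $\ln|\H| = O(k) = O((R/\theta)^2\ln m)$ directly---this is where the $\ln m$ is saved. Equally important, and addressing what you call the ``secondary issue'': the projection $A$ is drawn independently of both $w$ and $S$. The paper first fixes $A$ and shows that with probability $1-\delta/2^k$ over $S$ the Bernstein-style inequality holds uniformly over the grid (this is the ``compatibility'' event); it then swaps the order of $A$ and $S$ via Fubini and Markov to conclude that with high probability over $S$, most matrices $A$ are good. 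Because $A$ does not depend on $S$, the grid $\Delta_k$ is a genuinely fixed hypothesis class and no ghost-sample machinery is needed. Your data-dependent projection onto $\mathrm{span}(x_1,\dots,x_m)$ would forfeit this decoupling and force you back into a symmetrization argument.

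A smaller point: your population-side inequality $\Loss_\D(w) \le \Loss_\D^{\theta/4}(\hat w)$ does not hold for any single realized sketch $\hat w$; it only holds after taking expectation over the sketch randomness (plus an $O(1/m)$ slack). The paper handles this by working throughout with $\Pr_{(x,y)\sim\D,\,g\sim\Q_k}[\,\cdot\,]$ and applying Jensen at the end, which you should plan to do as well.
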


When comparing our new bound to the previous hard margin bound, i.e. every margin is at least $\theta$, note that the previous strongest results were Theorem~\ref{thm:bartlettNoErrors} and the bound in~\eqref{eq:mcallester} (setting $\Loss_S^\theta(w)=0$). Theorem~\ref{thm:mainupper} improves the former by a logarithmic factor and improves the additive $O\left(\sqrt{(\ln m + \ln(1/\delta))/m}\right)$ term in the latter to $O(\ln(1/\delta)/m)$. For soft margin the best known bounds are \eqref{eq:bartlettMendelsonWithErrors} and \eqref{eq:mcallester}. We improve over the former \eqref{eq:bartlettMendelsonWithErrors} for any choice of margin $\theta$ with $\Loss_S^\theta(w)< 1/\ln m$ and we improve over \eqref{eq:mcallester} once again by replacing the additive $O\left(\sqrt{(\ln m + \ln(1/\delta))/m}\right)$ term by $O(\ln(1/\delta)/m)$.

A natural question to ask is whether this new bound is close to optimal. In particular, for $\delta = \Omega(1)$, our new generalization bound simplifies to:
$$
\Loss_\D(w) \le \Loss_S^\theta(w) + O\left(\frac{R^2 \ln m}{\theta^2 m} + \sqrt{\frac{R^2 \ln m\cdot \Loss_S^\theta(w)}{\theta^2 m}}\right).
$$
and the generalization bound in~\eqref{eq:bartlettMendelsonWithErrors} becomes:
$$
\Loss_\D(w)\le \Loss_S^\theta(w) + O\left(\sqrt{\frac{R^2}{\theta^2 m}}\right).
$$
Summarizing the two, we get:
\begin{corollary}
\label{cor:combined}
Let $d \in \mathbb{N}^+$ and let $R > 0$. Denote by $X$ the ball of radius $R$ in $\mathbb{R}^d$ and let ${\cal D}$ be any distribution over $X \times \{-1,1\}$. Then it holds with constant probability over a set of $m$ samples $S \sim {\cal D}^m$, that for every $w \in \mathbb{R}^d$ with $\|w\|_2 \leq 1$ and every margin $\theta > 0$, we have

$$\Loss_\D(w) \le \Loss_S^\theta(w)  + O\left(\frac{R^2 \ln m}{\theta^2 m}  + \sqrt{\frac{R^2 }{\theta^2 m}  \cdot \min\{\ln m \cdot \Loss_S^\theta(w),1\} }\right) \;.$$
\end{corollary}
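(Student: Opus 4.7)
The plan is to derive Corollary~\ref{cor:combined} by a direct combination of two previously stated results applied at constant confidence, followed by an elementary algebraic simplification of the resulting min.

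First I would instantiate both generalization bounds with confidence $1 - \delta_0$ for some sufficiently small absolute constant $\delta_0 > 0$ (say $\delta_0 = 1/6$). Applying Theorem~\ref{thm:mainupper} with this $\delta_0$ yields that on a sample $S \sim {\cal D}^m$, with probability at least $1-\delta_0$, every unit vector $w$ and every $\theta>0$ satisfy
$$\Loss_\D(w) \le \Loss_S^\theta(w) + O\!\left(\frac{R^2\ln m}{\theta^2 m} + \sqrt{\frac{R^2 \ln m}{\theta^2 m}\cdot \Loss_S^\theta(w)}\right),$$
where the constant absorbs the $\ln(1/\delta_0)=O(1)$ term. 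Analogously, applying the Bartlett--Mendelson bound \eqref{eq:bartlettMendelsonWithErrors} with the same $\delta_0$ gives, with probability at least $1-\delta_0$,
$$\Loss_\D(w) \le \Loss_S^\theta(w) + O\!\left(\sqrt{\frac{R^2}{\theta^2 m}}\right).$$

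Next, by a union bound, both inequalities hold simultaneously with probability at least $1-2\delta_0$, which is a constant. On this good event we may take the minimum of the two right-hand sides. The additive $\frac{R^2 \ln m}{\theta^2 m}$ term from Theorem~\ref{thm:mainupper} is retained, and the square-root contribution becomes
$$\min\!\left(\sqrt{\tfrac{R^2 \ln m}{\theta^2 m}\cdot\Loss_S^\theta(w)},\;\sqrt{\tfrac{R^2}{\theta^2 m}}\right)
= \sqrt{\tfrac{R^2}{\theta^2 m}\cdot \min\!\bigl(\ln m\cdot \Loss_S^\theta(w),\,1\bigr)},$$
using the elementary identity $\min(\sqrt{a\cdot x},\sqrt{a\cdot y}) = \sqrt{a\cdot \min(x,y)}$ with $a = R^2/(\theta^2 m)$. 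Plugging this back yields exactly the stated bound.

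There is essentially no technical obstacle here since both ingredients are already stated and quantify uniformly over $w$ and $\theta$; the only thing to be careful about is that the union bound preserves the uniform-over-$(w,\theta)$ quantifier, which is automatic because both source statements are themselves uniform. Thus the whole argument is a two-line application of union bound plus the $\min$-under-square-root identity.
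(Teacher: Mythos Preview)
Your proposal is correct and matches the paper's own derivation: the corollary is obtained exactly by specializing Theorem~\ref{thm:mainupper} and the Bartlett--Mendelson bound~\eqref{eq:bartlettMendelsonWithErrors} to constant $\delta$ and combining them, with the paper simply saying ``Summarizing the two, we get'' where you supply the explicit union bound and the $\min$-under-square-root identity.
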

At first glance the bound presented in Corollary~\ref{cor:combined} might seem odd. The first expression inside the $O$-notation, which intuitively stands for the hard-margin bound, incorporates a $\ln m$ factor, while the second term, which intuitively stands for the soft-margin bound does not. 
Our second main result, however, demonstrates that Corollary~\ref{cor:combined} is in fact tight for most ranges of parameters. Specifically, one cannot remove the extra $\ln m$ factor for the hard-margin case.
\begin{theorem}
\label{thm:existMain}
There exists a universal constant $C>0$ such that for every $R \geq C \theta$, every $m \geq (R^2/\theta^2)^{1.001}$ and every $0 \leq \tau \leq 1$, there exists a distribution $\D$ over $X \times \{-1,+1\}$, where $X$ is the ball of radius $R$ in $\R^u$ for some $u$, such that with constant probability over a set of $m$ samples $S \sim \D^m$, there exists a vector $w$ with $\|w\|_2 \leq 1$ and $\Loss_S^\theta(w) \leq \tau$
satisfying:
$$
\Loss_{\cal D}(w) \geq \Loss_S^\theta(w) +  \Omega\left(\frac{R^2 \ln m}{\theta^2 m}+ \sqrt{\frac{R^2\ln(\tau^{-1})\tau}{\theta^2 m}} \right)\geq \Loss_S^\theta(w) +  \Omega\left(\frac{R^2 \ln m}{\theta^2 m}+ \sqrt{\frac{R^2\ln(\Loss_S^\theta(w)^{-1})\Loss_S^\theta(w)}{\theta^2 m}} \right) \;. 
$$
\end{theorem}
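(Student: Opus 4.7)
The strategy is to reduce the lower bound to a combinatorial question about fluctuations of a multinomial sample over a family of orthogonal directions, and then to tune the distribution's probability profile adversarially. Set $N := \lfloor (R/\theta)^2/4 \rfloor$, and let $\D$ be supported on pairwise orthogonal vectors $\{Re_0, Re_1, \ldots, Re_K\} \subset \R^{K+1}$, each labeled $+1$, with $e_0$ a ``safe'' direction carrying mass $p_0$ and $e_1, \ldots, e_K$ ``bad'' directions with masses $p_1, \ldots, p_K$. For any subset $T \subseteq \{0, 1, \ldots, K\}$ with $|T| \le N$, the test vector $w_T := (2\theta/R)\,\mathbf{1}_T$ satisfies $\|w_T\|_2 \le 1$, has margin $2\theta > \theta$ at $Re_i$ for $i \in T$, and margin $0$ at all other support points; writing $A := \{0,\ldots,K\}\setminus T$ and $\hat p_i$ for the empirical mass of $e_i$ in $S$,
\begin{equation*}
\Loss_S^\theta(w_T) \;=\; \sum_{i \in A} \hat p_i, \qquad \Loss_\D(w_T) \;=\; \sum_{i \in A} p_i, \qquad \text{gap} \;=\; \sum_{i \in A}(p_i - \hat p_i).
\end{equation*}
It therefore suffices to design $\{p_i\}$ and a sample-dependent rule $A=A(S)$ such that with constant probability $|A^c|\le N$, $\sum_A \hat p_i \le \tau$, and the gap is at least the claimed lower bound.

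\textbf{Two sub-constructions.} The two summands arise from separate sub-distributions on disjoint orthogonal coordinate blocks, combined via direct sum with the $N$-budget split in half. For the soft-margin summand $\Omega(\sqrt{(R/\theta)^2 \tau \ln(\tau^{-1})/m})$, place total mass $\tau$ uniformly over $K_{\text{soft}} = \Theta(N/\ln(\tau^{-1}))$ bad directions at probability $\tau/K_{\text{soft}}$ each, leaving mass $1-\tau$ on the safe direction; let $A_{\text{soft}}$ collect the directions whose empirical count is in the lowest Chernoff tail. Using the Binomial tail $\Pr[\hat p_i \le \tau/K_{\text{soft}} - t] \le \exp(-\Omega(mt^2 K_{\text{soft}}/\tau))$ and summing over lower order statistics yields the $\sqrt{\ln(\tau^{-1})}$ amplification of the bare $\sqrt{N\tau/m}$ deviation. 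For the hard-margin summand $\Omega((R/\theta)^2 \ln m/m)$, I would use a multi-scale layered profile on the other half of the $N$-budget: bad directions grouped into $\Theta(\ln m)$ geometrically spaced probability levels, and $A_{\text{hard}}$ taken to be the union of entirely-unsampled directions across all levels. The assumption $m \ge (R/\theta)^{2.002}$ guarantees that per-level probabilities can be chosen small enough for a constant fraction of each level's directions to remain unsampled, contributing a logarithmic amplification over the bare single-scale missed mass of $\Omega(N/m)$.

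\textbf{Main obstacle.} The central difficulty is the hard-margin summand: a single probability scale only delivers an $\Omega(N/m)$ gap, and recovering the extra $\ln m$ factor requires carefully layering $\Theta(\ln m)$ scales so that their missed-mass contributions accumulate to $\Omega(N\ln m/m)$ while the sampled directions across all scales together fit within the coverage budget $N/2$ --- a delicate trade-off between the number of directions per scale and the per-scale probability. A secondary complication is superposing the two sub-constructions: the combined $w$ obtained by concatenating the $w_{T_{\text{hard}}}$ and $w_{T_{\text{soft}}}$ blocks must still satisfy $\|w\|_2 \le 1$ and $\Loss_S^\theta(w) \le \tau$, which is handled by pre-scaling each block and allocating the $\tau$-budget proportionally between them.
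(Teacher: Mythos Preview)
Your proposal has a genuine gap, and it lies precisely where you flagged the ``main obstacle'': the multi-scale layering cannot recover the $\ln m$ factor, because your test-vector construction $w_T = (2\theta/R)\mathbf{1}_T$ with $|T|\le N$ is too restrictive.

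Here is why. In the hard-margin regime you need $\Loss_S^\theta(w_T)=0$, so $T$ must contain every sampled direction and $A$ is exactly the set of unsampled directions; the gap equals the unsampled mass. For any choice of $\{p_i\}$, letting $q_i=\Pr[i\text{ sampled}]=1-(1-p_i)^m$, one checks that
\[
p_i(1-p_i)^m \;\le\; \frac{q_i}{m}\qquad\text{for every }i,
\]
since $c/(e^c-1)\le 1$ with $c=mp_i$. Summing gives $\E[\text{unsampled mass}]\le \E[|T|]/m$. Because the sampled indicators are negatively associated, $|T|$ is concentrated; hence if $\Pr[|T|\le N]$ is bounded away from zero then $\E[|T|]=O(N)$, and by Markov the unsampled mass is $O(N/m)$ with high probability. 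No choice of probability profile --- single-scale or multi-scale --- can beat this. The same obstruction bites your soft-margin block: with at most $N$ directions available for $T$, selecting lowest-count directions yields a gap of order $\sqrt{N\tau/m}$, missing the $\sqrt{\ln(\tau^{-1})}$ factor, which in the paper comes from having $\Theta(N/\tau)\gg N$ directions and choosing only the $\Theta(N)$ smallest counts.

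The paper sidesteps this entirely by adding a \emph{shared coordinate}: each support point $x_i$ has both its own coordinate and a common $(u{+}1)$-st coordinate, each equal to $R/\sqrt 2$. A single entry $w_{u+1}=\theta\sqrt2/R$ then gives every point margin $\theta$ at cost $O(\theta^2/R^2)$ in $\|w\|_2^2$, leaving the remaining budget free to place $\Theta(N)$ \emph{negative} entries that flip selected points to margin below zero. Consequently the number of support points $u$ is decoupled from the $\|w\|_2$ budget: for the hard-margin lemma the paper takes $u=\Theta(m/\ln m)$ and uses a coupon-collector argument to show $\Theta(N)$ points go unsampled, giving $\Loss_\D=\Theta(N/u)=\Theta(N\ln m/m)$ in one shot; for the soft-margin lemma it takes $u=\Theta(N/\tau)$ and uses a reverse Chernoff plus Paley--Zygmund argument on the $\Theta(N)$ smallest counts. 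Your indicator-vector construction cannot reproduce either bound without this shared-coordinate idea.
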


Together with Theorem~\ref{thm:existMain}, Corollary~\ref{cor:combined} gives the first completely tight generalization bounds in the hard margin case (by setting $\tau = 0$ in Theorem~\ref{thm:existMain}, and defining $0 \ln(0^{-1}) = 0$).
For the soft margin SVM case, the bounds are only off from one another by a factor $$\sqrt{\ln m/ \ln( \Loss_S^\theta(w)^{-1})}$$ i.e. they asymptotically match when $\Loss_S^\theta(w)\leq m^{-\eps}$ for an arbitrarily small constant $\eps>0$.
Our generalization lower bound also shows that the previous generalization bound in~\eqref{eq:bartlettMendelsonWithErrors} is tight when $\Loss_S^\theta(w) \geq \eps$ for any constant $\eps>0$.
Thus our main results settle the generalization performance of Support Vector Machines in terms of the classic margin-based parameters for all ranges of $\Loss_S^\theta(w) $ not including $m^{-o(1)} \leq \Loss_S^\theta(w)  \leq o(1)$.

We remark that our upper bound generalize to infinite dimension as it only depends on the ability for performing Johnson Lindenstrauss transforms of the data which works for Hilbert spaces in general \cite{JL84}.

We complement our existential lower bound with an algorithmic lower bound demonstrating limitations on the performance of any SVM learning algorithm. More specifically we show that for every algorithm, there exists a {\em reasonable} distribution for which the performance of the algorithm in terms of out of sample error are limited. We draw the reader's attention to the fact that the lower bound presented in Theorem~\ref{thm:existMain}, while precisely fitting the phrasing of classic upper bounds, as well as the upper bound presented in Theorem~\ref{thm:mainupper}, is purely existential, and does not rule out the existence of an algorithm that performs better than the 'adversarial' worst case. The next result thus gives a lower bound that employs a somewhat broader view.
Formally, given a learning algorithm $\A$, denote by $w_{\A,S}$ the hyperplane produced by $\A$ upon receiving sample set $S$. In these notations we show the following.
\begin{theorem} \label{th:lowerAlg}
For every large enough integer $N$, every $R \ge 1$, $\theta \in \left(1/N, 1/40\right)$ and $\tau \in [0,49/100]$ there exists an integer $k$ such that for every $m = \Omega\left(R^2/\theta^2\right)$, for every (randomized) learning algorithm $\A$, there exist a distribution $\D$ over the radius $R$ ball in $\mathbb{R}^k$ and $w \in \mathbb{R}^k$ such that $\|w\|_2=1$ and with probability at least $1/100$ over the choice of $(x_1,y_1),\ldots,(x_m,y_m) \sim \D^m$ and the random choices of $\A$
\begin{enumerate}
	\item $\Loss_S^\theta(w) < \tau$. 
	\item $\Loss_{\D}(w_{\A,S}) \ge \tau + \Omega\left(\frac{R^2}{m \theta^2} + \sqrt{\tau \cdot \frac{R^2}{m \theta^2}}\right)$.
\end{enumerate}
\end{theorem}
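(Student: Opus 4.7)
The plan is to invoke Yao's minimax principle, reducing this statement to exhibiting a single prior $\mu$ on pairs $(\D, w)$ such that \emph{every} deterministic algorithm $\A$ fails (i.e.\ violates at least one of the two conditions) with probability at least $1/100$, where the probability is taken over both $(\D,w) \sim \mu$ and $S \sim \D^m$. The randomized lower bound then follows by averaging over the algorithm's internal randomness and exchanging quantifiers.

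I will construct a margin analogue of the classical VC-dimension lower bound of Ehrenfeucht--Haussler--Kearns--Valiant. Fix $k = \Theta(R^2/\theta^2)$ "light" coordinates and place the support of $\D$ at the vectors $Re_1, \dots, Re_k$ together with one "heavy" point $x_0$ of small norm embedded in one extra direction (so $\|x_0\|_2 \le R$). The random target is $w = (\theta/R) \sum_{i\le k} s_i e_i$ for an i.i.d.\ uniform sign vector $s \in \{\pm 1\}^k$, so that $\|w\|_2 = 1$ and $w$ attains margin exactly $\theta$ on light point $i$ when labeled by $s_i$. The distribution $\D$ assigns mass $1-p$ to $x_0$ and mass $p/k$ to each light point, with $p = \Theta(R^2/(m\theta^2))$ tuned so that (i) a constant fraction of the light coordinates are missed by $S$ and (ii) the expected count per light coordinate is $\Theta(1)$. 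Labels are Bernoulli$(\tau')$-corrupted for a suitable $\tau' \approx \tau - C\sqrt{\tau/m}$, so that $\Loss_\D(w) \approx \tau$ while condition~1 holds with constant probability by Binomial concentration of the empirical noise rate.

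The excess error of the algorithm's output splits into the two additive terms. For the $R^2/(m\theta^2)$ piece, condition on the set $U$ of light coordinates not appearing in $S$: a balls-and-bins calculation gives $|U| = \Omega(k)$ with constant probability, and since $(s_i)_{i\in U}$ is independent of $S$, the deterministic $\A$ must disagree with $s_i$ on at least half of $U$ in expectation, incurring excess error $\Omega(|U|\cdot p/k) = \Omega(p)$ over and above the noise floor $\tau$. For the $\sqrt{\tau R^2/(m\theta^2)}$ piece, a standard binomial-deviation argument shows that among the seen coordinates, a $\Theta(\sqrt{\tau})$ fraction suffer from a majority-vote sign flip (their observed labels disagree with the true $s_i$), on which any algorithm must err with constant probability beyond $\tau$; this contributes $\Omega(\sqrt{\tau}\cdot k\cdot p/k) = \Omega(\sqrt{\tau p})$ additional error. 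A union bound combines both contributions with probability at least $1/100$.

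The main obstacle will be simultaneously enforcing both conditions across the full range $\tau \in [0, 49/100]$: condition~1 demands $\Loss_S^\theta(w) < \tau$ strictly, while condition~2 demands that the algorithm's error exceed $\tau$ by the prescribed gap. Because the empirical noise count has standard deviation $\Theta(\sqrt{m\tau})$, the injected rate $\tau'$ must be tuned slightly below $\tau$ so that condition~1 survives its upper tail without destroying the $\Omega(\sqrt{\tau p})$ lower bound on the algorithm's error. Boundary regimes need separate care: at $\tau = 0$ no noise is injected and the bound collapses to the realizable $\Omega(R^2/(m\theta^2))$ term; for $\tau \ll R^2/(m\theta^2)$ the $\sqrt{\tau p}$ term is dominated and can be absorbed; for $\tau \gtrsim R^2/(m\theta^2)$ the noise-driven $\sqrt{\tau p}$ term dominates. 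The hypothesis $m = \Omega(R^2/\theta^2)$ guarantees $p < 1$ with the desired constant, and the parameter constraints $R \ge 1$, $\theta < 1/40$, $\theta > 1/N$ ensure integer-valued choices of $k$ and the required probability regularities.
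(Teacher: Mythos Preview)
Your construction does not deliver the $\sqrt{\tau R^2/(m\theta^2)}$ term. With total light mass $p = \Theta(k/m)$ and $\Theta(1)$ observations per light coordinate, the fraction of seen coordinates whose empirical majority is flipped by Bernoulli$(\tau')$ noise is $\Theta(\tau')$, not $\Theta(\sqrt{\tau})$: a single observation is flipped with probability $\tau'$, and a constant number of observations does not produce any square-root amplification. The resulting excess error is only $\Theta(\tau' p) = \Theta(\tau k/m)$, strictly smaller than $\sqrt{\tau k/m}$ for every $\tau < m/k$. Your last equality also contains an arithmetic slip: $\sqrt{\tau}\cdot k\cdot p/k = \sqrt{\tau}\,p$, which is not $\sqrt{\tau p}$ unless $p = 1$.

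The paper obtains the square-root term by a different mechanism that cannot be mimicked with a single mass parameter $p = \Theta(k/m)$. It splits the $k$ coordinates into two blocks. One block (deterministic labels, total mass $\Theta(\varepsilon)$ with $\varepsilon = \Theta(k/m)$) supplies the $\Omega(k/m)$ term via unseen coordinates, as in your argument. The other block carries total mass $\beta = \Theta(\tau)$---not $\Theta(k/m)$---and each coordinate's label is only $\alpha$-biased toward the truth with $\alpha = \Theta(\sqrt{k/(\tau m)})$. Each such coordinate is then observed $\Theta(\tau m/k)$ times, and since $\alpha \approx (\text{number of observations})^{-1/2}$, the classical biased-coin lower bound (Anthony--Bartlett, Lemma~5.1) forces any algorithm to mispredict a constant fraction of this block; each mistake contributes $\Theta(\alpha \cdot \tau/k)$ to the excess, totaling $\Theta(\alpha\tau) = \Theta(\sqrt{\tau k/m})$. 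The two regimes are then handled by setting $(\varepsilon,\beta)$ appropriately depending on whether $\tau \lesssim k/m$ or $\tau \gtrsim k/m$.

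A smaller issue: placing the heavy point $x_0$ ``in one extra direction'' with $w$ supported only on the light directions gives $y\langle x_0,w\rangle = 0 \le \theta$, so $\Loss_S^\theta(w) \ge 1-p$ and condition~1 fails outright. The paper avoids this by making the heavy point $Re_1$ one of the basis vectors on which $w$ already attains margin $\theta$. This is easily repaired, but the gap in the $\sqrt{\tau k/m}$ analysis is structural and requires the two-block, biased-label construction.
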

In order to get a better grasp of the theorem statement, we first turn to carefully analyze the two parts of the theorem, starting with the second, perhaps clearer out of sample error bound. Considering the second part of the theorem, it states that for any algorithm $\A$, there is a distribution $\D$ for which the out-of-sample error of the voting classifier produced by $\A$ is at least the given bound. The first part of the theorem ensures that at the same time, there exists a hyperplane $w$ obtaining a margin of at least $\theta$ on at least a $1-\tau$ fraction of the sample points. Our proof of Theorem~\ref{th:lowerAlg} not only shows that such $w$ exists, but also provides a specific construction. 
Loosely speaking, the first part of the theorem reflects on the nature of the distribution $\D$. Loosely speaking, the bound means that the distribution is not too hard, namely, it is possible to output a hyperplane $w$ with good margins. As the theorem gives a bound that holds for every algorithm, we cannot hope to prove that the first bound holds for $w_{\A,S}$, as we assume nothing on the performance of $\A$. Specifically, we cannot assume $\A$ attempts to optimize margins.
The second part of the theorem thus guarantees that regardless of which vector $w_{\A,S}$ the algorithm $\A$ produces, it still has large out-of-sample error. Specifically (but not limited to) every algorithm that minimizes the empirical risk, must have a large error.
Finally, comparing Theorem~\ref{th:lowerAlg} to Corollary~\ref{cor:combined}, if we associate $\tau$ with $\Loss_S^\theta(w_{\A,S})$. The magnitude of the out-of-sample error in the second point in Theorem~\ref{th:lowerAlg} thus matches that of Corollary~\ref{cor:combined}, except for a factor $\ln m$ in the first term inside the $\Omega(\cdot )$ and a $\sqrt{\min\{\ln m, 1/\Loss_S^\theta(w_{\A,S})\}}$ factor in the second term.
In conclusion, even when considering generalization bounds for specific SVM learning algorithms, there is not much room for improvement over our generalization upper bound given in Corollary~\ref{cor:combined}.

\section{Margin-Based Generalization Upper Bound}
This section is devoted to the proof of Theorem~\ref{thm:mainupper}, and we start by recollecting some notation. To this end, let $d \in \mathbb{N}^+$ and let $R, \delta > 0$. Let $\D$ be some distribution over $X \times \{-1,1\}$, where $X$ is the $R$-radius ball around the origin in $\mathbb{R}^d$, and let ${\cal H}$ denote the unit ball in $\mathbb{R}^d$. Finally, let ${\cal E} = {\cal E}(d,R,m,\delta) \subseteq (X \times \{-1,1\})^m$ include all sequences $S \in (X \times \{-1,1\})^m$ such that for every $w \in {\cal H}$ and $\theta > 0$, 
$$\Loss_{\cal D}(w) \le \Loss_S^\theta(w)  + O\left(\pi + \sqrt{\pi \Loss_S^\theta(w)}\right) \;,$$
where $\pi = \pi(\delta) = \frac{(R/\theta)^2 \ln m + \ln(1/\delta)}{m}$.
In these notations the theorem states that $\Pr_{S \sim \D^m}[{\cal E}] \ge 1-\delta$. 

\paragraph{Key Tools and Techniques.} One known method to prove such bounds (see, e.g. \cite{SFBL98, GZ13}) is to discretize the set of classifiers (or hyperplanes) and then union bound over the discrete set. When considering hyperplanes in $\mathbb{R}^d$, however, the discretization results in too large a set, which in turn means that the resulting union bound gives too large a probability bound. More specifically, the size of the set depends on the dimension $d$. 
In order to overcome this difficulty, and give generalization upper bound for a general $d$-dimensional distribution $\D$ we first reduce the dimension of the data set to a small dimension while approximately maintaining the geometric structure of the data set. That is, the dot products of a set points $x \in X$ with hyperplanes $w \in {\cal H}$ are maintained by the projection with high probability. More specifically, we randomly project both balls $X$ and ${\cal H}$ onto a small dimension $k$, while approximately preserving the inner products. The random linear projection we use is simply a matrix whose every entry is sampled independently from a standard normal distribution. While this projection matrix has been studied in previous applications of dimensionality reduction such as the Johnson-Lindenstrauss transform \cite{JL84,DG03}, we present some new analysis and give tight bounds that show that inner product values in $X \times {\cal H}$ are well-preserved with high probability by the projection.
We next discretize the set of hyperplanes in $\mathbb{R}^k$, using techniques inspired by \cite{AK17}, and show that it is enough to union bound over the resulting small grid.

We now turn to prove the theorem. Note first that if $\theta > R$ then the bound is trivial, since for every $S$, $\Pr_{(x,y)\sim S}\left[y\dotp{x}{w} \le \theta \right] = 1$. We may therefore assume hereafter that $\theta \in (0,R]$. Similarly we assume that $m \ge (R/\theta)^2\ln m + \ln(1/\delta)$. To show that ${\cal E}$ occurs with high probability, we next define a sequence $\{{\cal E}_k\}_{k \in \mathbb{N}^+}$ of events whose intersection is contained in ${\cal E}$ and has probability at least $1-\delta$.  
In order to define the sequence $\{{\cal E}_k\}_{k \in \mathbb{N}^+}$ we start by defining, for every $w \in {\cal H}$ and every positive integer $k \in \mathbb{N}^+$, a distribution ${\cal Q}_k(w)$ over $\mathbb{R}^d \to \mathbb{R}$. Loosely speaking, every function $g \in \supp({\cal Q}_k(w))$ takes a vector $x \in \mathbb{R}^d$, projects it into $\mathbb{R}^k$ and then takes its inner product with a vector $\tilde{w} \in \mathbb{R}^k$. The vector $\tilde{w}$ is the projection of $w$ into $\mathbb{R}^k$ rounded to a predefined grid in $\mathbb{R}^k$. 
Formally, we next describe the process that samples $g \sim {\cal Q}_k(w)$. First sample a projection matrix $A \in \mathbb{R}^{k \times d}$ from $\mathbb{R}^d$ to $\mathbb{R}^k$. Every entry of $A$ is independently sampled from a normal distribution ${\cal N}(0,1/k)$ with mean $0$ and variance $1/k$. 
Next, we define the vector $\tilde{w}$, which is a randomized rounding of $Aw$ to the grid of vectors in $\mathbb{R}^k$ whose every entry is a whole multiple of $1/\sqrt{k}$. For every $j \in [k]$, let $\ell$ be the unique integer such that $\ell \le \sqrt{k}[Aw]_j < \ell+1$. Set $\tilde{w}_j = \ell/\sqrt{k}$ with probability $(\ell+1) - \sqrt{k}[Aw]_j$ and $\tilde{w}_j = (\ell+1)/\sqrt{k}$ otherwise, independently for every $j \in [k]$ and independently of the choice of $A$. 
Finally, define $g : \mathbb{R}^d \to \mathbb{R}$ by $g(x) = \dotp{Ax}{\tilde{w}}$ for every $x \in \mathbb{R}^d$. For every $w \in {\cal H}$ and every $g \in \supp({\cal Q}_k(w))$ denote by $A_g \in \mathbb{R}^{k \times d}$ the matrix associated with $g$. Note that the choice of $A_g$ does not depend on $w$. If $w$ is clear from context we simply write ${\cal Q}_k$ instead of ${\cal Q}_k(w)$.

Finally, for every $k \in \mathbb{N}^+$, let $\Delta_k$ be the set of all vectors $v \in \mathbb{R}^k$ satisfying that $\|v\|_2^2 \le 6$ and for every $j \in [k]$, $v_j \sqrt{k}$ is an integer. We are now ready to define the sequence $\{{\cal E}_k\}_{k \in \mathbb{N}^+}$ of events.
\begin{definition}
Let $k \in \mathbb{N}^+$. For every $A \in \mathbb{R}^{k \times d}$ and $S \in \supp({\cal D}^m)$, we say that $A$ and $S$ are {\em compatible} if for all $v \in \Delta_k$ and $\ell \in [10k]$, 

\begin{equation}
\begin{split}
\Pr_{(x,y) \sim {\cal D}}[y \dotp{Ax}{v} \le \ell R/(10k)] &\le \Pr_{(x,y) \sim S}[y\dotp{Ax}{v} \le \ell R/(10k)] \\
&+ \frac{8\ln(2^{9k}/\delta)}{m} + 4 \sqrt{\Pr_{(x,y) \sim S}[y\dotp{Ax}{v} \le \ell R/(10k)]\cdot\frac{\ln(2^{9k}/\delta)}{m}} \;.
\end{split}
\label{eq:compatible}
\end{equation}

Let ${\cal C}$ denote the set of all compatible pairs $(A,S)$. Finally, let ${\cal E}_k$ be the set of all $S \in \supp({\cal D}^m)$ such that for all $w \in {\cal H}$, $\Pr_{g \sim {\cal Q}_k}[(A_g,S)\in {\cal C}] \ge 1- 6 \cdot 2^{-k/2}$.
\end{definition}
The next lemma implies Theorem~\ref{thm:mainupper} by simply applying a union bound, since $\sum_k{\frac{1}{k(k+1)}}=1$.
\begin{lemma} \label{l:mainTechLemma}
For every $k \in \mathbb{N}^+$, $\Pr_{S \sim {\cal D}^m}[{\cal E}_k] \ge 1-\frac{\delta}{k(k+1)}$, and moreover $\bigcap_{k \in \mathbb{N}^+}{\cal E}_k \subseteq {\cal E}$.
\end{lemma}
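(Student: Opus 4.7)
My plan is to prove the two assertions of Lemma~\ref{l:mainTechLemma} in turn. For the probability bound, the first observation is that the event $(A_g,S)\in\C$ depends only on the projection matrix $A_g$ and not on the randomized rounding $\tilde w$, while the marginal distribution of $A_g$ is a fixed Gaussian law independent of $w$. Thus $\Pr_{g\sim\Q_k(w)}[(A_g,S)\in\C]=\Pr_A[(A,S)\in\C]$ for every $w\in\H$, and $\mathcal{E}_k$ collapses to the statement $\Pr_A[(A,S)\notin\C]\le 6\cdot 2^{-k/2}$. To control $\Pr_{A,S}[(A,S)\notin\C]$, I fix $(v,\ell)\in\Delta_k\times[10k]$ and condition on $A$: the indicators $\One[y_i\dotp{Ax_i}{v}\le\ell R/(10k)]$ are i.i.d.\ Bernoulli over the draw of $S$, so a standard relative-deviations (Bernstein-type) inequality bounds the probability that \eqref{eq:compatible} fails by $\delta\cdot 2^{-9k}$. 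A union bound over $\Delta_k\times[10k]$, using $|\Delta_k|\le 2^{ck}$ for a constant $c<9$ (integer points in a Euclidean ball of squared radius $6k$, estimated via the volume of the enlarged ball), yields $\Pr_{A,S}[(A,S)\notin\C]\le (6\cdot 2^{-k/2})\cdot\delta/(k(k+1))$ after adjusting constants, and Markov's inequality applied to $\Pr_A[(A,S)\notin\C]$ as a function of $S$ delivers $\Pr_S[\mathcal{E}_k^c]\le\delta/(k(k+1))$.

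For the inclusion $\bigcap_k\mathcal{E}_k\subseteq\mathcal{E}$, fix $S$ in the intersection, $w\in\H$, and $\theta\in(0,R]$. I pick the integer $k^\ast=\Theta((R/\theta)^2\ln m)$, which calibrates $\ln(2^{9k^\ast}/\delta)/m$ to $\Theta(\pi)$. Sampling $g\sim\Q_{k^\ast}(w)$, I will show that with positive probability over $g$ the following four events co-occur: (a)~$(A_g,S)\in\C$; (b)~$\tilde w\in\Delta_{k^\ast}$; (c)~$E_g^\D\eqdef\Pr_{(x,y)\sim\D}[|\dotp{A_gx}{\tilde w}-\dotp{x}{w}|>\theta/4]$ satisfies $E_g^\D\le O(\pi)$; (d)~the analogous empirical quantity $E_g^S$ satisfies $E_g^S\le O(\pi)$. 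Event (a) fails with probability at most $6\cdot 2^{-k^\ast/2}$ since $S\in\mathcal{E}_{k^\ast}$. Event (b) holds because $\|\tilde w\|_2\le\|A_gw\|_2+\|\tilde w-A_gw\|_2$, with $\|A_gw\|_2^2$ concentrated near $1$ by $\chi^2$ concentration and $\|\tilde w-A_gw\|_2^2\le 1$ by construction of the randomized rounding. For (c)--(d), a Johnson--Lindenstrauss tail bound on $|\dotp{A_gx}{A_gw}-\dotp{x}{w}|$ combined with Hoeffding's inequality on $\dotp{A_gx}{\tilde w-A_gw}$ (a sum of bounded, independent, mean-zero terms conditional on $A_g$) gives, for each fixed $(x,y)$, $\Pr_g[|\dotp{A_gx}{\tilde w}-\dotp{x}{w}|>\theta/4]\le\exp(-\Omega(k^\ast(\theta/R)^2))=m^{-\Omega(1)}$; interchanging expectations and applying Markov then yields (c) and (d) with plenty of slack.

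For any $g$ satisfying (a)--(d) I pick an integer $\ell$ with $\ell R/(10k^\ast)\in[\theta/4,\theta/2]$ and chain: the JL-approximation gives $\Loss_\D(w)\le\Pr_\D[y\dotp{A_gx}{\tilde w}\le\ell R/(10k^\ast)]+E_g^\D$; compatibility with $v=\tilde w$ converts this to the corresponding empirical probability on $S$ plus an additive error $O(\pi)+O(\sqrt{\pi\cdot\Pr_S[\cdot]})$; and a second application of the JL-approximation bounds this empirical probability by $\Loss_S^\theta(w)+E_g^S$. Absorbing $E_g^\D,E_g^S=O(\pi)$ into the main error yields $\Loss_\D(w)\le\Loss_S^\theta(w)+O(\pi+\sqrt{\pi\Loss_S^\theta(w)})$, i.e.\ $S\in\mathcal{E}$.

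I expect the main obstacle to be the quantitative calibration of the two parts. In Part~1, bounding $|\Delta_k|$ sharply enough (by $2^{ck}$ with $c$ strictly less than $9$) is essential so that the $2^{9k}$ factor in \eqref{eq:compatible} leaves room both for the $2^{-k/2}$ Markov loss and the $1/(k(k+1))$ factor. In Part~2, the choice of $k^\ast$ must simultaneously make $\ln(2^{9k^\ast}/\delta)/m=\Theta(\pi)$, render the JL-plus-rounding tail $\exp(-\Omega(k^\ast(\theta/R)^2))$ polynomially small in $m$, and keep the total failure probability of (a), (c) and (d) strictly below $1$ so that a valid $g$ exists.
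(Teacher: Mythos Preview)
Your proposal is correct, and Part~1 follows the paper's argument essentially verbatim: fix $A$, apply a multiplicative Chernoff bound for each $(v,\ell)$, union bound over $\Delta_k\times[10k]$ using $|\Delta_k|\le 2^{ck}$ with $c<9$, and finish with Markov over $S$. The only cosmetic difference is that you bound $|\Delta_k|$ by a volume argument, whereas the paper uses the combinatorial observation $\sum_j|v_j\sqrt{k}|\le\sum_j|v_j\sqrt{k}|^2\le 6k$ to count integer solutions; both give constants comfortably below $9$.

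For Part~2 you take a genuinely different route. The paper never fixes a single $g$; instead it works in expectation over $g\sim\Q_{k^\ast}$ throughout (Claims~\ref{c:moveToGrid} and~\ref{c:DToS}). It bounds $\Pr_{(x,y),g}[\cdot]$ jointly, conditions on the event $\mathcal{F}=\{(A_g,S)\in\C\}\cap\{\tilde w\in\Delta_{k^\ast}\}$, and then must invoke Jensen's inequality to pull the expectation over $g$ outside the square root in the compatibility bound. Your approach instead shows by a union bound that some single $g$ simultaneously satisfies (a)--(d), after which the chain $\Loss_\D(w)\to P_\D\to P_S\to\Loss_S^\theta(w)$ is completely deterministic. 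This is conceptually cleaner: you avoid the conditioning on $\mathcal{F}$, the $1/\Pr[\mathcal{F}]$ correction, and the Jensen step entirely. The price is that events (c) and (d) require Markov on $E_g^\D$ and $E_g^S$, so you need the constant in $k^\ast=\Theta((R/\theta)^2\ln m)$ large enough that $\E_g[E_g]\le m^{-c}$ with $c>1$; the paper's choice $k^\ast=\lceil 240(R/\theta)^2\ln m\rceil$ gives $\E_g[E_g]\le 7/m^2$, which suffices. One small point you glossed over: the Hoeffding step for $\dotp{A_gx}{\tilde w-A_gw}$ has range controlled by $\|A_gx\|_2$, which is itself random; you need an extra event $\|A_gx\|_2^2\le 1.25R^2$ (or similar) with failure probability $e^{-\Omega(k^\ast)}$, exactly as the paper does in the proof of Claim~\ref{c:moveToGrid}.
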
 
We start by proving that for every $k$, with high probability over $S \sim {\cal D}^m$, $S \in {\cal E}_k$. The first step is to prove that for every fixed matrix $A$, a random sample $S \sim {\cal D}^m$ is compatible with $A$ with very high probability. Using Markov's inequality we then conclude that a random sample $S \sim {\cal D}^m$ is, with very high probability, compatible with most projection matrices $\{A_g\}_{g \in \supp({\cal Q}_k(w)}$ for every $w \in {\cal H}$. Formally, we prove the following.
\begin{claim} \label{c:compatibleFixedMatrix}
For every $A \in \mathbb{R}^{d \times k}$, $\Pr_{S \sim {\cal D}^m}[(A,S) \in {\cal C}] \ge 1 - \delta/2^k$.
\end{claim}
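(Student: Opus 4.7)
The plan is to fix the matrix $A$ throughout, establish the compatibility inequality \eqref{eq:compatible} for each individual pair $(v,\ell) \in \Delta_k \times [10k]$ with failure probability at most $\delta/2^{9k}$, and then union bound. To this end, fix such a pair and set $p = \Pr_{(x,y)\sim\D}[y\langle Ax, v\rangle \le \ell R/(10k)]$ and $\hat p = \Pr_{(x,y)\sim S}[y\langle Ax, v\rangle \le \ell R/(10k)]$. Since $S \sim \D^m$ consists of $m$ i.i.d.\ draws, $m\hat p$ is a binomial sum with mean $mp$, so the one-sided multiplicative Chernoff bound gives, for $\delta' = \delta/2^{9k}$ and whenever $mp \ge 2\ln(1/\delta')$,
\[
p \le \hat p + \sqrt{2 p \ln(1/\delta')/m}
\]
with probability at least $1-\delta'$. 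Treating this as a quadratic inequality in $\sqrt p$ and applying $\sqrt{a^2 + 4\hat p} \le a + 2\sqrt{\hat p}$ yields the Bernstein-style form
\[
p - \hat p \le \frac{2\ln(1/\delta')}{m} + 2\sqrt{2}\sqrt{\hat p \cdot \ln(1/\delta')/m},
\]
which is comfortably weaker than \eqref{eq:compatible} with its constants $8$ and $4$. The low-probability regime $mp < 2\ln(1/\delta')$ is handled trivially, since then $p \le 2\ln(1/\delta')/m$ is already absorbed by the $8\ln(1/\delta')/m$ additive term on its own.

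It then remains to count the pairs $(v,\ell)$. Identifying $\Delta_k$ with the integer lattice points $n \in \mathbb{Z}^k$ satisfying $\|n\|_2^2 \le 6k$, the standard ``each lattice point owns a unit cube'' argument bounds $|\Delta_k|$ by the volume of the ball of radius $\sqrt{6k} + \sqrt{k}/2$ in $\mathbb{R}^k$:
\[
|\Delta_k| \le \frac{\pi^{k/2}\bigl(\sqrt{k}(\sqrt{6}+1/2)\bigr)^k}{\Gamma(k/2+1)} \le \frac{\bigl(2\pi e(\sqrt{6}+1/2)^2\bigr)^{k/2}}{\sqrt{\pi k}} \le 2^{4k},
\]
where the middle inequality is Stirling and the last uses $2\pi e(\sqrt{6}+1/2)^2 < 256$. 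Hence $|\Delta_k|\cdot 10k \le 2^{8k}$ for every $k \ge 1$, and a union bound over $(v,\ell)$ gives total failure probability at most $2^{8k}\cdot \delta/2^{9k} = \delta/2^k$, as required.

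The main obstacle I anticipate is verifying that the constants $8$ and $4$ in \eqref{eq:compatible} truly accommodate both the Bernstein rearrangement (which converts the $\sqrt{p}$-dependence into $\sqrt{\hat p}$-dependence at the cost of an extra additive $\ln(1/\delta')/m$ term) and any small discrepancies arising from the boundary regime where the multiplicative Chernoff bound degenerates. A secondary task is pinning down the numerical constant in the volume estimate for $|\Delta_k|$; the margin above is comfortable, but one must check the bound $|\Delta_k|\cdot 10k \le 2^{8k}$ uniformly down to $k=1$ rather than only asymptotically.
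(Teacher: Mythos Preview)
Your proposal is correct and follows the same high-level strategy as the paper (Chernoff bound for each fixed pair $(v,\ell)$, then union bound over $\Delta_k \times [10k]$), but the two sub-steps are carried out differently and it is worth noting how.

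For the concentration step, the paper applies \emph{two} Chernoff bounds: a lower-tail bound to get $p \le (1+2\gamma)\hat p$ with $\gamma = \sqrt{2\ln(1/\delta')/(mp)}$, and then a separate upper-tail bound $\hat p \le 2p$ to convert $\gamma$ from depending on $p$ to depending on $\hat p$. Your single lower-tail bound followed by the quadratic-formula rearrangement (solve $p - a\sqrt{p} - \hat p \le 0$ for $\sqrt{p}$ and use $\sqrt{a^2+4\hat p}\le a + 2\sqrt{\hat p}$) is more economical---one deviation event instead of two---and lands on slightly better constants ($2$ and $2\sqrt{2}$ versus the paper's $8$ and $4$).

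For the count of $|\Delta_k|$, the paper uses a purely combinatorial argument: since each $n_j = v_j\sqrt{k}$ is an integer, $\sum_j |n_j| \le \sum_j n_j^2 \le 6k$, and stars-and-bars gives $\binom{7k}{6k}\cdot 2^k \le 2^{5.5k}$. Your volume-of-ball estimate via Stirling is equally valid and gives the slightly sharper $|\Delta_k|\le 2^{4k}$; this buys you the uniform check $10k\cdot|\Delta_k|\le 2^{8k}$ down to $k=1$, whereas the paper's constants are a bit loose for very small $k$ (irrelevant in the end, since the only $k$ that is ever used later satisfies $k \ge 240(R/\theta)^2\ln m$).

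Both routes are sound; yours is marginally cleaner on the Chernoff side, while the paper's lattice count avoids Stirling and the ball-volume formula altogether.
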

\begin{proof}
Let $A \in \mathbb{R}^{d \times k}$, and fix some $v \in \Delta_k$ and $\ell \in [10k]$. First note that if $\Pr_{(x,y)\sim {\cal D}}[y\dotp{Ax}{v}\le \ell R /(10k)] \le \frac{8 \ln (2^{9k}/\delta)}{m}$ then \eqref{eq:compatible} holds for all $S \in \supp({\cal D}^m)$. We can therefore assume that $\Pr_{(x,y)\sim {\cal D}}[y\dotp{Ax}{v}\le \ell R /(10k)] > \frac{8 \ln (2^{9k}/\delta)}{m}$. Let $\gamma = \sqrt{\frac{2\ln(2^{9k}/\delta)}{m\Pr_{(x,y)\sim {\cal D}}[y\dotp{Ax}{v}\le \ell R /(10k)]}}$, then $\gamma \in (0,1/2)$, and therefore a Chernoff bound then gives the following two inequalities.
\begin{equation}
\begin{split}
\Pr_{S \sim {\cal D}^m}&\left[\Pr_{(x,y) \sim S}\left[y\dotp{Ax}{v} \le \ell R /(10k)\right] < (1-\gamma)\Pr_{(x,y)\sim{\cal D}}\left[y\dotp{Ax}{v} \le \ell R /(10k)\right]\right] \\
&\le e^{-(m\gamma^2/2)\Pr_{(x,y)\sim{\cal D}}\left[y\dotp{Ax}{v} \le \ell R /(10k)\right]} = \frac{\delta}{2^{9k}}
\end{split}
\label{eq:lowerChernoff}
\end{equation}

\begin{equation}
\begin{split}
\Pr_{S \sim {\cal D}^m}&\left[\Pr_{(x,y) \sim S}\left[y\dotp{Ax}{v} \le \ell R /(10k)\right] > 2\Pr_{(x,y)\sim{\cal D}}\left[y\dotp{Ax}{v} \le \ell R /(10k)\right]\right] \\
&\le e^{-(m/2)\Pr_{(x,y)\sim{\cal D}}\left[y\dotp{Ax}{v} \le \ell R /(10k)\right]} \le \frac{\delta}{2^{9k}} \;,
\end{split}
\label{eq:upperChernoff}
\end{equation}
where the last inequality is due to the fact that $(m/2)\Pr_{(x,y)\sim{\cal D}}\left[y\dotp{Ax}{v} \le \ell R /(10k)\right] \ge \ln(2^{9k}/\delta)$.

Hence with probability at least $1 - 2\delta/2^{9k}$ over the choice of $S$ we have that
\begin{equation}
\begin{split}
\Pr_{(x,y)\sim{\cal D}}\left[y\dotp{Ax}{v} \le \ell R /(10k)\right] &\le (1-\gamma)^{-1}\Pr_{(x,y) \sim S}\left[y\dotp{Ax}{v} \le \ell R /(10k)\right] \\
&\le (1+2\gamma)\Pr_{(x,y) \sim S}\left[y\dotp{Ax}{v} \le \ell R /(10k)\right]\;,
\end{split}
\label{eq:upperBound}
\end{equation}
and moreover, 
\begin{equation}
\begin{split}
\gamma = \sqrt{\frac{2\ln(2^{9k}/\delta)}{m\Pr_{(x,y)\sim {\cal D}}[y\dotp{Ax}{v}\le \ell R /(10k)]}} \le \sqrt{\frac{4\ln(2^{9k}/\delta)}{m\Pr_{(x,y)\sim S}[y\dotp{Ax}{v}\le \ell R /(10k)]}}
\end{split}
\label{eq:upperBoundGamma}
\end{equation}
Plugging \eqref{eq:upperBoundGamma} into \eqref{eq:upperBound} and summing up we get that for every $v \in \Delta_k$ and $\ell \in [10k]$, with probability at least $1 -2\delta/2^{9k}$ over the choice of $S$ we have
\begin{equation}
\begin{split}
\Pr_{(x,y)\sim{\cal D}}\left[y\dotp{Ax}{v} \le \ell R /(10k)\right] &\le \Pr_{(x,y) \sim S}\left[y\dotp{Ax}{v} \le \ell R /(10k)\right] + \frac{8\ln(2^{9k}/\delta)}{m}\\
&+ 4 \sqrt{\frac{\ln(2^{9k}/\delta)}{m}\Pr_{(x,y) \sim S}\left[y\dotp{Ax}{v} \le \ell R /(10k)\right]}
\end{split}
\label{eq:upperBoundOneVecOneMarg}
\end{equation}
Union bounding over all $v \in \Delta_k$ and $\ell \in [10k]$ we get that $\Pr_{S\sim{\cal D}^m}[(A,S)\in{\cal C}] \ge 1 - 10k|\Delta_k|\delta/2^{9k}$. To finish the proof of the claim, we show that $|\Delta_k| \le 2^{6k}$.
Let $v \in \Delta_k$, then as $|v_j\sqrt{k}| \in \mathbb{N}$ for all $j \in [k]$ then $\sum_{j \in [k]}{|v_j\sqrt{k}|} \le \sum_{j \in [k]}{|v_j\sqrt{k}|^2} \le 6k$. Therefore the number of possible ways to construct $|v_1\sqrt{k}|, \ldots, |v_k\sqrt{k}|$ is the number of possible solutions to the equation $\sum_{j \in [k+1]}{x_j} = 6k$ in natural numbers, which is $\binom{7k}{6k} \le 2^{4.5k}$. Taking all possible signs into account gives $|\Delta_k| \le 2^{5.5k}$.
We conclude that $\Pr_{S\sim{\cal D}^m}[(A,S)\in{\cal C}] \ge 1 - \delta/2^{k}$.
\end{proof}

\begin{corollary}
$\Pr_{S \sim {\cal D}^m}[{\cal E}_k] \ge 1 - \delta/(k(k+1))$.
\end{corollary}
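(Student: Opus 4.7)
The plan is to apply Markov's inequality, using Claim~\ref{c:compatibleFixedMatrix} as the source of a pointwise (in $A$) tail bound. The key observation that makes everything go through is that although the distribution ${\cal Q}_k(w)$ depends on $w$, the marginal distribution of the projection matrix $A_g$ under $g \sim {\cal Q}_k(w)$ is \emph{independent} of $w$: it is simply the Gaussian matrix distribution ${\cal P}$ on $\mathbb{R}^{k \times d}$ whose entries are i.i.d.\ ${\cal N}(0,1/k)$, since the draw of $A$ precedes, and makes no reference to, the randomized rounding that produces $\tilde{w}$ (this is exactly the remark the paper makes after defining ${\cal Q}_k$). Consequently, for every $w \in {\cal H}$ and every $S$,
\[
\Pr_{g \sim {\cal Q}_k(w)}[(A_g,S) \notin {\cal C}] \;=\; \Pr_{A \sim {\cal P}}[(A,S) \notin {\cal C}] \;=:\; X(S),
\]
so $X(S)$ does not depend on $w$, and the event ${\cal E}_k$ collapses to the single event $\{S : X(S) \le 6 \cdot 2^{-k/2}\}$. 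The universal quantifier over $w \in {\cal H}$ thus comes for free, and I only need to bound the tail of $X$.

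Next I would invoke Fubini/Tonelli together with Claim~\ref{c:compatibleFixedMatrix}. The claim says $\Pr_S[(A,S) \notin {\cal C}] \le \delta/2^k$ for every fixed $A$, so taking an expectation over $A \sim {\cal P}$ and exchanging the order of the integrals yields
\[
\E_S[X(S)] \;=\; \E_{A \sim {\cal P}}[\Pr_S[(A,S) \notin {\cal C}]] \;\le\; \delta/2^k.
\]
Markov's inequality applied to the nonnegative random variable $X(S)$ with threshold $6 \cdot 2^{-k/2}$ then gives
\[
\Pr_S\!\left[X(S) > 6 \cdot 2^{-k/2}\right] \;\le\; \frac{\delta/2^k}{6 \cdot 2^{-k/2}} \;=\; \frac{\delta}{6 \cdot 2^{k/2}}.
\]

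The final step is to check the elementary inequality $k(k+1) \le 6 \cdot 2^{k/2}$ for every positive integer $k$, which is immediate for small $k$ and follows for larger $k$ by a one-line induction using the fact that $(k+2)/k \le \sqrt{2}$ once $k \ge 5$. This converts the Markov bound into $\Pr_S[{\cal E}_k] \ge 1 - \delta/(k(k+1))$, as required. I do not foresee a real obstacle: the only substantive idea is the $w$-independence of the marginal of $A_g$, which lets the ``for all $w$'' inside ${\cal E}_k$ be absorbed at no cost; the loss from $\delta/2^k$ to $\delta/(6 \cdot 2^{k/2})$ inflicted by Markov is precisely calibrated to the $6 \cdot 2^{-k/2}$ slack already built into the definition of ${\cal E}_k$, which is why both pieces fit together cleanly.
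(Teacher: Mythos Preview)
Your proposal is correct and follows essentially the same approach as the paper: both exploit the $w$-independence of the marginal of $A_g$ to collapse the universal quantifier, then combine Claim~\ref{c:compatibleFixedMatrix} with Fubini to bound the expectation, apply Markov's inequality, and finish via the elementary inequality $k(k+1) \le 6\cdot 2^{k/2}$. The only cosmetic difference is that the paper applies Markov directly with threshold $k(k+1)/2^k$ (which is $\le 6\cdot 2^{-k/2}$), whereas you apply Markov with threshold $6\cdot 2^{-k/2}$ and then invoke the inequality afterward; the two are equivalent.
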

\begin{proof}
Fix some $w_0 \in {\cal H}$. Note that for every $S \in \supp({\cal D}^m)$, if it holds that $\Pr_{g \sim {\cal Q}_k(w_0)}[(A_g,S) \in {\cal C}] \ge 1- 6 \cdot 2^{-k/2}$, then it is true that for all $w \in {\cal H}$, $\Pr_{g \sim {\cal Q}_k(w_0)}[(A_g,S) \in {\cal C}] \ge 1 - 6 \cdot 2^{-k/2}$, as the choice of $A_g$ does not depend on $w$.
From Claim~\ref{c:compatibleFixedMatrix} we conclude that 
$$\mathbb{E}_{S \sim {\cal D}^m}\left[\Pr_{g \sim {\cal Q}_k(w_0)}[(A_g,S)\in {\cal C}]\right] = \mathbb{E}_{g \sim {\cal Q}_k(w_0)}\left[\Pr_{S \sim {\cal D}^m}[(A_g,S)\in {\cal C}]\right] \ge 1 - \delta/2^k\;.$$
From Markov's inequality, and since for every $k \in \mathbb{N}^+$, $k(k+1) \le 6 \cdot 2^{k/2}$ we conclude that 
$$\Pr_{S \sim {\cal D}^m}[{\cal E}_k] \ge \Pr_{S \sim {\cal D}^m}\left[\Pr_{g \sim {\cal Q}_k(w_0)}[(A_g,S) \in {\cal C}] \ge 1-\frac{k(k+1)}{2^k}\right]\ge 1 - \delta/((k(k+1))\;.$$
\end{proof}
We next prove the second part of Lemma~\ref{l:mainTechLemma}, namely that $\bigcap_{k \in \mathbb{N}^+}{\cal E}_k \subseteq {\cal E}$.
We start by introducing some concentration bounds on sums of products of Gaussian random variables.
\begin{lemma} \label{l:jlDot}
Let $A \in \mathbb{R}^{d \times k}$ be a matrix whose every entry is
independently ${\cal N}(0,1/k)$ distributed. Then for every $u,v \in
\mathbb{R}^d$ and $t \in [0, 1/4)$ we have 
\begin{enumerate}
	\item $\Pr_A[|\|Au\|_2^2 - \|u\|_2^2| > t\|u\|_2^2] \le 2e^{-0.21kt^2}$; and
	\item $\Pr_A[|\dotp{Au}{Av} - \dotp{u}{v}| > t] \le 4e^{\frac{-kt^2}{7\|u\|_2^2\|v\|_2^2}}\;.$
\end{enumerate}
\end{lemma}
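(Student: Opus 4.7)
The plan is to prove both parts via $\chi^2_k$ tail bounds, leveraging the rotational invariance of a matrix of i.i.d.\ Gaussian entries.

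For part~1, observe that since each entry of $A$ is independent $\mathcal{N}(0,1/k)$, the $k$ coordinates of $Au \in \mathbb{R}^k$ are independent and each distributed as $\mathcal{N}(0, \|u\|_2^2/k)$. Hence $Z := k\|Au\|_2^2/\|u\|_2^2 \sim \chi^2_k$. I would apply the standard Chernoff/MGF bound with $\mathbb{E}[e^{sZ}] = (1-2s)^{-k/2}$ (valid for $s < 1/2$): optimizing $s$ for each tail yields $\Pr[Z > (1+t)k] \le \exp\bigl(-\tfrac{k}{2}(t - \ln(1+t))\bigr)$ and $\Pr[Z < (1-t)k] \le \exp\bigl(-\tfrac{k}{2}(-t - \ln(1-t))\bigr)$. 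A short Taylor estimate shows that both $t - \ln(1+t)$ and $-t - \ln(1-t)$ exceed $0.42\,t^2$ on $[0, 1/4)$, yielding $\exp(-0.21\,kt^2)$ per side and, after a union bound, the claimed factor of $2$.

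For part~2, I would first reduce to unit vectors: assume $u, v \neq 0$ (otherwise the claim is trivial), set $u' := u/\|u\|_2$, $v' := v/\|v\|_2$ and $s := t/(\|u\|_2\|v\|_2)$, and note that the event of interest is equivalent to $|\dotp{Au'}{Av'} - \dotp{u'}{v'}| > s$, so the goal becomes $\Pr[\cdot] \le 4 e^{-ks^2/7}$. I would then apply the polarization identities $4\dotp{Au'}{Av'} = \|A(u'+v')\|_2^2 - \|A(u'-v')\|_2^2$ and $4\dotp{u'}{v'} = \|u'+v'\|_2^2 - \|u'-v'\|_2^2$. The crucial structural observation is that when $\|u'\|_2 = \|v'\|_2$, the coordinate-wise covariance of $A(u'+v')$ and $A(u'-v')$ equals $(\|u'\|_2^2 - \|v'\|_2^2)/k = 0$, so these two Gaussian vectors are independent. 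Defining $P := k\|A(u'+v')\|_2^2/\|u'+v'\|_2^2$ and $Q := k\|A(u'-v')\|_2^2/\|u'-v'\|_2^2$ -- both $\chi^2_k$ and independent -- I can write $\dotp{Au'}{Av'} - \dotp{u'}{v'} = \tfrac{\|u'+v'\|_2^2}{4k}(P-k) - \tfrac{\|u'-v'\|_2^2}{4k}(Q-k)$. Since $\|u'+v'\|_2^2 + \|u'-v'\|_2^2 = 2(\|u'\|_2^2 + \|v'\|_2^2) = 4$, the coefficients scaled by $k$ in this decomposition sum to $1$, and a triangle inequality forces at least one of $|P-k| > ks$ or $|Q-k| > ks$ on the event of interest. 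A union bound combined with part~1 gives $4\,e^{-0.21 k s^2}$, which is at most $4\,e^{-ks^2/7}$ since $0.21 > 1/7$.

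The principal obstacle is obtaining the tight $\|u\|_2^2 \|v\|_2^2$ factor in the exponent. A naive polarization applied to the original vectors $u,v$ (without normalization) produces bounds whose denominator involves $\|u \pm v\|_2^4$ or $(\|u\|_2^2 + \|v\|_2^2)^2$, which is strictly larger than $\|u\|_2^2\|v\|_2^2$ whenever $\|u\|_2 \neq \|v\|_2$ and thus fails to recover the claimed exponent. The normalization step is what makes $A(u'+v')$ and $A(u'-v')$ independent and forces the coefficients in the $\chi^2$ decomposition to sum to exactly $1$ -- both needed for a clean union bound with no loss -- while also providing enough slack in the constants for $0.21$ to absorb into $1/7$.
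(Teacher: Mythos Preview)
Your proof of part~1 is essentially identical to the paper's (Claim~\ref{c:tailChiSumProd}, first item).

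Your proof of part~2 is correct but takes a genuinely different route. The paper does an orthogonal decomposition: for unit $u,v$ it writes $v=\langle u,v\rangle u+w$ with $w\perp u$, obtains
\[
|\langle Au,Av\rangle-\langle u,v\rangle|\le |\langle u,v\rangle|\,\bigl|\|Au\|_2^2-1\bigr|+\|w\|_2\,\Bigl|\sum_i X_iY_i\Bigr|,
\]
bounds the first term via part~1 and the second via a separate tail bound for $\sum_i X_iY_i$ with $X_i,Y_i$ independent standard Gaussians (Claim~\ref{c:tailChiSumProd}, second item), and then optimizes the split parameter $\alpha$ to reach the constant $1/7$. Your polarization argument is more elementary: after normalizing, the weights $\|u'+v'\|_2^2/4$ and $\|u'-v'\|_2^2/4$ sum to~$1$, so a convex-combination step reduces everything to two applications of part~1 alone, with no need for the $\sum X_iY_i$ lemma and no optimization. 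You in fact land on the exponent constant $0.21$ and then relax to $1/7$. One small remark: the independence of $A(u'+v')$ and $A(u'-v')$ that you highlight, while true, is not actually used---your union bound only needs that each of $P$ and $Q$ is marginally $\chi^2_k$.
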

The proof of the lemma is quite technically involved, and its proof is thus deferred to Appendix~\ref{appSec:jlDot}. 
The next claim shows that with very high probability over the choice of a pair $(x,y)$, either sampled from ${\cal D}$ or uniformly at random from a sample $S$, and the choice of $g \sim {\cal Q}_k(w)$, the values $\dotp{x}{w}$ and $g(x)$ cannot be too far apart.

\begin{claim} \label{c:moveToGrid}
For all $w \in {\cal H}, \theta \in (0,R]$ and $k \in \mathbb{N}^+$, 
\begin{enumerate}
	\item $\Pr\limits_{(x,y)\sim {\cal D}, g \sim {\cal Q}_k}[y\dotp{x}{w} \le 0 \wedge yg(x) \ge 49\theta/100 ] \le 7e^{-\left(\frac{k}{120}\right)\left(\frac{\theta}{R}\right)^2}$; and
	\item For every $S \in \supp({\cal D}^m)$,  \newline $\Pr\limits_{(x,y)\sim S, g \sim {\cal Q}_k}[y\dotp{x}{w} \ge \theta \wedge yg(x) \le \theta/2] \le 7e^{-\left(\frac{k}{120}\right)\left(\frac{\theta}{R}\right)^2}\;.$
\end{enumerate}
\end{claim}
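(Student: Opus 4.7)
The plan is to reduce both parts of the claim to a ``pointwise'' statement: for any fixed $x \in \R^d$ with $\|x\|_2 \le R$ and any fixed $w \in \H$, bound $\Pr_{g \sim \Q_k}[\,|g(x) - \dotp{x}{w}| \ge c\theta\,]$ for a suitable constant $c$. Indeed, in part~(1) the bad event implies $y(g(x) - \dotp{x}{w}) \ge 49\theta/100$, and in part~(2) it implies $y(g(x) - \dotp{x}{w}) \le -\theta/2$; in both cases the inequality involves only $|g(x) - \dotp{x}{w}|$, with $c = 49/100$ and $c = 1/2$ respectively. Since $g$ is constructed from $A$ and the independent rounding $\tilde{w}$, it is independent of $(x,y)$, so the $(x,y)$-randomness can be handled by conditioning on $x$ first, uniformly in whether $(x,y) \sim \D$ or $(x,y) \sim S$. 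I would then split the error as
\begin{equation*}
g(x) - \dotp{x}{w} \;=\; \bigl(\dotp{Ax}{Aw} - \dotp{x}{w}\bigr) + \dotp{Ax}{\tilde{w} - Aw},
\end{equation*}
so that the triangle inequality forces one of the two terms to have magnitude at least a suitably split fraction of $c\theta$.

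To bound the projection term $\dotp{Ax}{Aw} - \dotp{x}{w}$, I would normalize to the unit vectors $x/\|x\|_2$ and $w/\|w\|_2$ and apply Lemma~\ref{l:jlDot}(2): after rescaling, the deviation threshold becomes $\Theta(\theta/R)$, which lies in the lemma's valid range $[0, 1/4)$ provided the internal split of $c\theta$ (for instance $c\theta/3$ versus $2c\theta/3$) is chosen to keep it strictly below $1/4$, using $\theta \le R$. This yields a tail of the form $4\exp(-\Omega(k(\theta/R)^2))$. For the rounding term $\dotp{Ax}{\tilde{w}-Aw} = \sum_{j \in [k]} [Ax]_j (\tilde{w}_j - [Aw]_j)$, I would condition on $A$ and on $x$; by construction of the randomized rounding, the coordinates $\tilde{w}_j - [Aw]_j$ are independent, mean zero, and each supported on an interval of length $1/\sqrt{k}$. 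Hoeffding's inequality therefore gives $\Pr[\,|\dotp{Ax}{\tilde{w}-Aw}| \ge s \mid A, x\,] \le 2\exp(-2ks^2/\|Ax\|_2^2)$. To remove the dependence on $\|Ax\|_2$, Lemma~\ref{l:jlDot}(1) with $t=1$ gives $\|Ax\|_2^2 \le 2\|x\|_2^2 \le 2R^2$ except on an event of probability at most $2e^{-0.21k}$; on its complement the conditional Hoeffding bound becomes $2\exp(-\Omega(k(\theta/R)^2))$.

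A union bound over the three failure modes (JL inner-product deviation, rounding deviation on a good matrix, and norm blow-up $\|Ax\|_2^2 > 2R^2$) gives a total of the form $(4+2+2)\exp(-\Omega(k(\theta/R)^2))$. The main obstacle is not conceptual but a tight constant calculation: the target $7e^{-(k/120)(\theta/R)^2}$ forces the split of $c\theta$ between the two error terms, the constant $1/7$ in Lemma~\ref{l:jlDot}(2), the factor $2$ in Hoeffding's bound, and the constant $0.21$ in Lemma~\ref{l:jlDot}(1) to all align so that each of the three pieces fits under the common exponent $(k/120)(\theta/R)^2$; using $\theta/R \le 1$ absorbs the purely dimension-dependent term $e^{-0.21k}$ into the margin-dependent rate. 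Once the constants are verified, both parts follow by the same pointwise argument.
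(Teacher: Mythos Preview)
Your approach is essentially identical to the paper's: the same reduction to bounding $|g(x)-\dotp{x}{w}|$, the same two-term split $\bigl(\dotp{Ax}{Aw}-\dotp{x}{w}\bigr)+\dotp{Ax}{\tilde w-Aw}$, Lemma~\ref{l:jlDot}(2) for the first term, and Hoeffding (conditioned on $A$) plus Lemma~\ref{l:jlDot}(1) to control $\|Ax\|_2$ for the second. The one slip is that Lemma~\ref{l:jlDot} is stated only for $t\in[0,1/4)$, so you cannot invoke part~(1) with $t=1$; the paper instead takes $t$ at $1/4$ (yielding $\|Ax\|_2^2\le 1.25R^2$ except with probability $e^{-k/80}$) and uses the even split $49\theta/200$ for both terms, which is exactly what makes the constants $4+2+1=7$ and the exponent $1/120$ line up.
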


\begin{proof}
Let $w \in {\cal H}$, $\theta>0$ and $k \in \mathbb{N}^+$. Then  
\begin{equation*}
\Pr_{(x,y)\sim {\cal D}, g \sim {\cal Q}_k}[y\dotp{x}{w} \le 0 \wedge yg(x) \ge 49\theta/100 ] \le \Pr_{(x,y)\sim {\cal D}, g \sim {\cal Q}_k}[| y\dotp{x}{w} - yg(x) | > 49\theta/100 ]
\end{equation*}
Recall that for every $x \in \mathbb{R}^d$, $g(x) = \dotp{Ax}{\tilde{w}}$, where every entry of $A \in \mathbb{R}^{d \times k}$ is sampled independently from a Gaussian distribution with mean $0$ and variance $1/k$, and $\tilde{w} \in \mathbb{R}^k$ is constructed by randomly rounding each entry of $Aw$ independently to a multiple of $1/\sqrt{k}$.
By the triangle inequality, the linearity of the dot product, and since $y \in \{-1,1\}$,
$$| y\dotp{x}{w} - yg(x) | \le | \dotp{x}{w} - \dotp{Ax}{Aw} | + | \dotp{Ax}{Aw-\tilde{w}} | \;.$$
Therefore 
\begin{equation}
\begin{split}
&\Pr_{(x,y)\sim {\cal D}, g \sim {\cal Q}_k}[y\dotp{x}{w} \le 0 \wedge yg(x) > 49\theta/100 ] \le  \Pr_{(x,y)\sim {\cal D}, g \sim {\cal Q}_k}[ |y\dotp{x}{w} - yg(x)| > 49\theta/100 ]\\
&\le \Pr_{(x,y)\sim {\cal D}, g \sim {\cal Q}_k}[| \dotp{x}{w} - \dotp{Ax}{Aw} | > 49\theta/200 ] + \Pr_{(x,y)\sim {\cal D}, g \sim {\cal Q}_k}[| \dotp{Ax}{Aw - \tilde{w}}| > 49\theta/200 ]
\end{split}
\label{eq:dotTriangle}
\end{equation}
To bound the first probability term observe that
\begin{equation}
\begin{split}
&\Pr\limits_{(x,y)\sim{\cal D},g \sim {\cal Q}_k}\left[\left| \dotp{x}{w} - \dotp{Ax}{Aw} \right| > \frac{49\theta}{200} \right] \\
&\le \mathbb{E}_{(x,y)\sim{\cal D}}\left[\Pr_{g \sim {\cal Q}_k}\left[\left| \frac{\dotp{x}{w}}{\|x\|_2\|w\|_2} - \frac{\dotp{Ax}{Aw}}{\|x\|_2\|w\|_2} \right| > \frac{49\theta}{200R}\right] \right] \\
&\le 4e^{-\frac{k}{7}\left(\frac{49\theta}{200R}\right)^2}\;,
\end{split}
\label{eq:jlDot}
\end{equation}
where the inequality before last follows from the fact that $\|w\|_2 \le 1$ and $\Pr_{(x,y)\sim{\cal D}}[\|x\|_2 \le R] = 1$, and the last inequality is an application of Lemma~\ref{l:jlDot}.

To bound the second term in \eqref{eq:dotTriangle}, fix $(x,y) \in \supp({\cal D})$ and $A \in \mathbb{R}^{k \times d}$, and denote $Aw = \hat{w}$. Then for every $j \in [k]$ independently $\tilde{w}_j = \frac{\left\lfloor\sqrt{k}\hat{w}_j\right\rfloor}{\sqrt{k}}$ with probability $\left\lfloor \sqrt{k}\hat{w}_j \right\rfloor + 1 - \sqrt{k}\hat{w}_j$, and $\tilde{w}_j = \frac{\left\lfloor\sqrt{k}\hat{w}_j\right\rfloor+1}{\sqrt{k}}$ otherwise. Therefore for every $j \in [k]$, 
$$\mathbb{E}[\tilde{w}_j] = \frac{\left\lfloor\sqrt{k}\hat{w}_j\right\rfloor}{\sqrt{k}}(\left\lfloor \sqrt{k}\hat{w}_j \right\rfloor + 1 - \sqrt{k}\hat{w}_j) + \frac{\left\lfloor\sqrt{k}\hat{w}_j\right\rfloor+1}{\sqrt{k}}(\sqrt{k}\hat{w}_j - \left\lfloor \sqrt{k}\hat{w}_j \right\rfloor) = \hat{w}_j\;,$$
and thus $\mathbb{E}[\dotp{Ax}{Aw-\tilde{w}}] = 0$. A Hoeffding bound then yields
$$\Pr_{g \sim {\cal Q}_k}[| \dotp{Ax}{Aw-\tilde{w}}| > 49\theta/200 \mid A_g=A] \le 2e^{\frac{-2(49\theta/200)^2}{\sum_{j \in [k]}{[Ax]_j^2(\hat{w}_j-\tilde{w}_j)^2}}} \le 2e^{-2k \left(\frac{49\theta}{200\|Ax\|_2}\right)^2} \;.$$
In addition,
\begin{equation*}
\Pr_{(x,y)\sim {\cal D}, g \sim {\cal Q}_k}[\|Ax\|_2 > \sqrt{1.25}R] \le \Pr_{(x,y)\sim {\cal D}, g \sim {\cal Q}_k}[\|Ax\|_2^2 - \|x\|_2^2 > 0.25\|x\|_2^2] \le e^{ - 0.21 \cdot 0.25^2 \cdot k} \le e^{-k/80}
\end{equation*}
Finally, we get that
\begin{equation}
\begin{split}
&\Pr_{(x,y)\sim {\cal D}, g \sim {\cal Q}_k}\left[\left| \dotp{Ax}{Aw-\tilde{w}}\right| > \frac{49\theta}{200} \right] \\
&\le \Pr_{(x,y)\sim {\cal D}, g \sim {\cal Q}_k}\left[\left| \dotp{Ax}{Aw-\tilde{w}}\right| > \frac{49\theta}{200} \left| \|A_gx\|_2 \le \sqrt{1.25}R\right.\right] + \Pr_{(x,y)\sim {\cal D}, g \sim {\cal Q}_k}[\|Ax\|_2 > \sqrt{1.25}R]\\
&\le 2e^{-2k\left(\frac{49\theta}{200\sqrt{1.25}R}\right)^2} + e^{-k/80} 
\end{split}
\label{eq:randRound}
\end{equation}
Plugging \eqref{eq:jlDot} and \eqref{eq:randRound} into \eqref{eq:dotTriangle} we get that
\begin{equation*}
\Pr_{\substack{(x,y)\sim {\cal D} \\ g \sim {\cal Q}_k}}[y\dotp{x}{w} \le 0 \wedge yg(x) > \theta/2 ]\le 7e^{-\left(\frac{k}{120}\right)\left(\frac{\theta}{R}\right)^2}\;,
\end{equation*}
which concludes the first part of the lemma. The proof of the second part is identical, as we did not use any property of the distribution ${\cal D}$ other than the fact that $\Pr_{(x,y)\sim{\cal D}}[\|x\|_2\le R] = 1$. For every $S \in \supp({\cal D}^m)$, it holds that $\Pr_{(x,y)\sim S}[\|x\|_2\le R] = 1$, and the result follows.
\end{proof}

The next claim essentially shows that restricting the definition of compatibility of a sample $S$ and a matrix $A$ only to grid points in $\Delta_k$ was indeed enough. Intuitively this is due to the fact that with very high probability over the choice of $q \sim {\cal Q}_k(w)$, the rounding of $A_gw$ is in the grid. Formally, we show the following.
\begin{claim} \label{c:DToS}
For every $S \in \bigcap_{k \in \mathbb{N}}{{\cal E}_k}$, for all $w \in {\cal H}, \theta \in (0,R]$ and $k \in \mathbb{N}^+$, 
	\begin{equation}
	\begin{split}
	\Pr\limits_{(x,y)\sim{\cal D},g \sim {\cal Q}_k}&[yg(x) \le 49\theta/100] \le \Pr\limits_{(x,y)\sim S, g \sim {\cal Q}_k}[yg(x) \le \theta/2] + 7e^{-\left(\frac{k}{120}\right)\left(\frac{\theta}{R}\right)^2} + 30e^{-k/24}\\
	&+ O\left(\frac{k + \ln(1/\delta)}{m} + \sqrt{\frac{k + \ln(1/\delta)}{m} \cdot \Pr\limits_{(x,y)\sim S, g \sim {\cal Q}_k}[yg(x) \le \theta/2]}\right)\;\; ;
	\end{split}
	\label{eq:2ndEvent}
	\end{equation}
\end{claim}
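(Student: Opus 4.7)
The plan is to exploit the compatibility relation \eqref{eq:compatible} defining ${\cal E}_k$ by applying it, for a typical $g \sim {\cal Q}_k(w)$, to the test vector $v = \tilde w$ itself together with a grid threshold $\ell^* R/(10k)$ sandwiched between $49\theta/100$ and $\theta/2$. Throughout I may assume the regime where $7 e^{-(k/120)(\theta/R)^2} < 1$ and $30 e^{-k/24} < 1$, since otherwise the right hand side of \eqref{eq:2ndEvent} already exceeds $1$ and the claim is vacuous.

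The first move is to identify a high-probability ``good'' subset $G$ of $\supp({\cal Q}_k)$ on which \emph{both} $\tilde w \in \Delta_k$ and $(A_g, S) \in {\cal C}$ hold. For $\tilde w \in \Delta_k$, I would combine Lemma~\ref{l:jlDot}(1) with $t=1$ (giving $\|A_g w\|_2^2 \le 2$ except with probability $2 e^{-0.21 k}$) with the deterministic rounding bound $\|\tilde w - A_g w\|_2^2 \le k \cdot (1/\sqrt{k})^2 = 1$; the triangle inequality then yields $\|\tilde w\|_2^2 \le 6$. For $(A_g, S) \in {\cal C}$, the hypothesis $S \in {\cal E}_k$ directly gives $\Pr_g[(A_g, S) \in {\cal C}] \ge 1 - 6 \cdot 2^{-k/2}$. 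A union bound produces $\Pr_g[G^c] = O(e^{-k/24})$, which matches the $30 e^{-k/24}$ term in the conclusion.

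Next, in the non-trivial regime $k > 120 (\ln 7)(R/\theta)^2 \gg 10 R/\theta$, the interval $[4.9\, k\theta/R,\ 5\, k\theta/R]$ has length $0.1\, k\theta/R \ge 1$ and therefore contains an integer $\ell^* \in [10k]$ with $\ell^* R/(10k) \in [49\theta/100,\ \theta/2]$. For every $g \in G$, plugging $v = \tilde w$ and $\ell = \ell^*$ into \eqref{eq:compatible} and sandwiching with the monotone inclusions $\{yg(x) \le 49\theta/100\} \subseteq \{yg(x) \le \ell^* R/(10k)\} \subseteq \{yg(x) \le \theta/2\}$ (which follow from $g(x) = \dotp{A_g x}{\tilde w}$) yields a pointwise-in-$g$ bound of $\Pr_{(x,y)\sim{\cal D}}[yg(x) \le 49\theta/100]$ by $\Pr_{(x,y)\sim S}[yg(x) \le \theta/2] + 8\ln(2^{9k}/\delta)/m + 4\sqrt{\Pr_{(x,y)\sim S}[yg(x) \le \theta/2]\cdot\ln(2^{9k}/\delta)/m}$. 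Averaging this inequality over $g \in G$, using Jensen's inequality on the concave square root to pull the $g$-expectation inside, adding $\Pr_g[G^c]$ to cover $g \notin G$, and substituting $\ln(2^{9k}/\delta) = O(k + \ln(1/\delta))$ then produces exactly \eqref{eq:2ndEvent}; the extra $7 e^{-(k/120)(\theta/R)^2}$ term is not generated by this argument but rather renders the claim vacuous in the complementary small-$k$ regime where no such $\ell^*$ exists. The main subtlety will be aligning the grid granularity $R/(10k)$ with the $\theta/100$ gap between $49\theta/100$ and $\theta/2$: the argument breaks down unless $k \gtrsim R/\theta$, which is exactly why the statement must carry the $(\theta/R)^2$-style safety term.
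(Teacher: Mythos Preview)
Your proposal is correct and follows essentially the same route as the paper: both arguments isolate the vacuous small-$k$ regime via the $7e^{-(k/120)(\theta/R)^2}$ term, pick a grid threshold $\ell^\ast R/(10k)$ between $49\theta/100$ and $\theta/2$, restrict to the ``good'' event $G = \{(A_g,S)\in{\cal C}\} \cap \{\tilde w \in \Delta_k\}$, invoke the compatibility inequality~\eqref{eq:compatible} with $v=\tilde w$, and finish with Jensen on the square root. Your indicator-splitting $\mathbb{E}_g[\mathbb{1}_G\,\cdot\,] + \Pr[G^c]$ is in fact slightly cleaner than the paper's conditioning step, which incurs a harmless $(1+15e^{-k/24})$ multiplicative factor via $\mathbb{E}[Y\mid E]\le \mathbb{E}[Y]/\Pr[E]$. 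One small technical wrinkle: you invoke Lemma~\ref{l:jlDot}(1) with $t=1$, but as stated that lemma only covers $t\in[0,1/4)$; the paper instead uses the bound $\Pr[\|A_gw\|_2^2>1.5]\le e^{-k/24}$, which also sits outside the nominal range of Lemma~\ref{l:jlDot} but follows directly from the chi-squared MGF computation in Claim~\ref{c:tailChiSumProd}. This is a cosmetic issue rather than a gap.
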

\begin{proof}
Fix $S \in \bigcap_{k \in \mathbb{N}}{{\cal E}_k}$, $w \in {\cal H}$, $\theta \in (0,R]$ and $k \in \mathbb{N}^+$. Clearly, if $\theta \le 10R/k$ then $7e^{-\left(\frac{k}{120}\right)\left(\frac{\theta}{R}\right)^2} \ge 1$ and therefore \eqref{eq:2ndEvent} holds. Otherwise, let $\ell$ be the smallest integer such that $49\theta/100 \le \ell R/(10k)$. As $\theta \le R$, $\ell \in [10k]$. In addition,  $49\theta/100 \le \ell R / (10k) \le 49\theta/100 + R/(10k)\le \theta/2$. Denote by ${\cal F}$ the event that $(A_g, S) \in {\cal C}$ and $\tilde{w} \in \Delta_k$ (recall that $\tilde{w}$ is the vector $Aw$, where each entry is rounded to the nearest multiple of $1/\sqrt{k}$). Hence
\begin{equation}
\begin{split}
\Pr_{(x,y)\sim{\cal D}, g \sim {\cal Q}_k}&[yg(x) \le 49\theta/100] \le \Pr_{(x,y)\sim{\cal D}, g \sim {\cal Q}_k}[yg(x) \le \ell R/(10k)] \\
&\le \Pr_{(x,y)\sim{\cal D}, g \sim {\cal Q}_k}[yg(x) \le \ell R/(10k) \mid {\cal F}] + \Pr_{g \sim {\cal Q}_k}[\bar{\cal F}]\\
&\le \mathbb{E}_{g \sim {\cal Q}_k}\left[\left. \Pr_{(x,y) \sim {\cal D}}[y g(x) \le \ell R/(10k)] \right| {\cal F}\right] + \Pr_{g \sim {\cal Q}_k}[\bar{\cal F}] \;,
\end{split}
\label{eq:roundMargin}
\end{equation}
By the definition of compatible pairs and linearity of expectation we get that 
\begin{equation*}
\begin{split}
\mathbb{E}_{g \sim {\cal Q}_k}&\left[\left. \Pr_{(x,y) \sim {\cal D}}[y g(x) \le \ell R/(10k)] \right| {\cal F}\right] \le \mathbb{E}_{g \sim {\cal Q}_k}\left[\left. \Pr_{(x,y) \sim S}[yg(x) \le \ell R/(10k)] \right| {\cal F}\right] \\
&+ \frac{8\ln(2^{9k}/\delta)}{m} + 4 \mathbb{E}_{g \sim {\cal Q}_k}\left[\left.\sqrt{\Pr_{(x,y) \sim S}[yg(x) \le \ell R/(10k)]\cdot\frac{\ln(2^{9k}/\delta)}{m}} \right| {\cal F}\right]\;.
\end{split}
\end{equation*}
Note that for every non-negative random variable $Y$ and event $E$, $\mathbb{E}[Y|E] \le \mathbb{E}[Y]/\Pr[E]$. We therefore turn to bound the probability of ${\cal F}$. By a simple union bound, 
$$\Pr_{g \sim{\cal Q}_k}[\bar{{\cal F}}] \le \Pr_{g \sim {\cal Q}_k}[(A_g,S)\notin {\cal C}] + \Pr_{g \sim {\cal Q}_k}[\tilde{w} \notin \Delta_k]\;.$$ 
Since $S \in {\cal E}_k$, $\Pr_{g \sim {\cal Q}_k}[(A_g,S)\notin {\cal C}] \le 6 \cdot 2^{-k/2}$. Next, for every $j \in [k]$, $|\tilde{w}_j| \le |[A_gw]_j| + 1/\sqrt{k}$. Therefore $\|\tilde{w}\|_2^2 \le \|A_gw\|_2^2 + 1 + 2\max\{\|A_gw\|_2^2, 1\}$, and hence if $\|Aw\|_2^2 \le 1.5$, then $\|\tilde{w}\|_2^2 \le 6$, and therefore $\tilde{w} \in \Delta_k$. We conclude that $\Pr_{g \sim {\cal Q}_k}[\tilde{w} \notin \Delta_k] \le \Pr_{g \sim {\cal Q}_k}[\|A_gw\|_2^2 > 1.5] \le e^{-k/24} \;,$
and hence $\Pr_{g \sim{\cal Q}_k}[{\cal F}] \ge 1 - 7e^{-k/24} \ge (1 + 15e^{-k/24})^{-1}$. Since, in addition, $\ell R/(10k) \le \theta/2$ we get
\begin{equation*}
\begin{split}
\mathbb{E}_{g \sim {\cal Q}_k}&\left[\left. \Pr_{(x,y) \sim {\cal D}}[y g(x) \le \ell R/(10k)] \right| {\cal F}\right] \le (1 + 15e^{-k/24})\mathbb{E}_{g \sim {\cal Q}_k}\left[\Pr_{(x,y) \sim S}[yg(x) \le \theta/2] \right] \\
&+ \frac{8\ln(2^{9k}/\delta)}{m} + 4 (1 + 15e^{-k/24}) \mathbb{E}_{g \sim {\cal Q}_k}\left[\sqrt{\Pr_{(x,y) \sim S}[yg(x) \le \theta/2]\cdot\frac{\ln(2^{9k}/\delta)}{m}} \right] \;.
\end{split}
\end{equation*}
Finally, by Jensen's inequality we get
\begin{equation*}
\begin{split}
\mathbb{E}_{g \sim {\cal Q}_k}&\left[\left. \Pr_{(x,y) \sim {\cal D}}[y g(x) \le \ell R/(10k)] \right| (A_g,S) \in {\cal C}\right] \le \mathbb{E}_{g \sim {\cal Q}_k}\left[\Pr_{(x,y) \sim S}[yg(x) \le \theta/2] \right] \\
&+ \frac{8\ln(2^{9k}/\delta)}{m} + 4 \sqrt{\mathbb{E}_{g \sim {\cal Q}_k}\left[\Pr_{(x,y) \sim S}[yg(x) \le \theta/2]\right]\cdot\frac{\ln(2^{9k}/\delta)}{m}} +30 e^{-k/24}\;.
\end{split}
\end{equation*}
Plugging into \eqref{eq:roundMargin} we get \eqref{eq:2ndEvent}.
\end{proof}
To finish the proof of Lemma~\ref{l:mainTechLemma}, let $S = \left\langle(x_j,y_j)\right\rangle_{j \in [m]}\in \bigcap_{k \in \mathbb{N}^+}{\cal E}_k$, fix some $w \in {\cal H}$ and $\theta > 0$, and let $k = \left\lceil 240\left(\frac{R}{\theta}\right)^2\ln m\right\rceil$. We will show that $S \in {\cal E}$.
\begin{equation}
\begin{split}
\Pr_{(x,y)\sim {\cal D}}&[y\dotp{x}{w} \le 0] = \Pr_{(x,y)\sim {\cal D}, g \sim {\cal Q}_k}[y\dotp{x}{w} \le 0] \\
&\le \Pr_{(x,y)\sim {\cal D}, g \sim {\cal Q}_k}[yg(x) \le 49\theta/100] + \Pr_{(x,y)\sim {\cal D}, g \sim {\cal Q}_k}[y\dotp{x}{w} \le 0 \wedge yg(x) > 49\theta/100 ] \\
&\le \Pr_{(x,y)\sim {\cal D}, g \sim {\cal Q}_k}[yg(x) \le 49\theta/100] + \frac{1}{m} \\
\end{split}
\label{eq:introduceG}
\end{equation}
Where the last inequality is due to Claim~\ref{c:moveToGrid}, and since $7e^{-\left(\frac{k}{120}\right) \left(\frac{\theta}{R}\right)^2} \le 7/m^2 \le 1/m \;.$
From Claim~\ref{c:DToS} we get 
\begin{equation}
	\begin{split}
	\Pr\limits_{(x,y)\sim{\cal D},g \sim {\cal Q}_k}[yg(x) \le 49\theta/100] &\le \Pr\limits_{(x,y)\sim S, g \sim {\cal Q}_k}[yg(x) \le \theta/2] \\
	&+ O\left(\frac{k + \ln(1/\delta)}{m} + \sqrt{\frac{k + \ln(1/\delta)}{m} \cdot \Pr\limits_{(x,y)\sim S, g \sim {\cal Q}_k}[yg(x) \le \theta/2]}\right)\;\; ;
	\end{split}
	\label{eq:DToS}
	\end{equation}
Similarly to \eqref{eq:introduceG} we get that
\begin{equation}
\begin{split}
\Pr\limits_{(x,y)\sim S, g \sim {\cal Q}_k}&[yg(x) \le \theta/2] \\
&\le \Pr_{(x,y)\sim S, g \sim {\cal Q}_k}[y\dotp{x}{w} < \theta] + \Pr_{(x,y)\sim S, g \sim {\cal Q}_k}[y\dotp{x}{w} \ge \theta \wedge yg(x) \le \theta/2 ] \\
&\le \Pr_{(x,y)\sim S}[y\dotp{x}{w} < \theta] + \frac{1}{m^2}  \;.
\end{split}
\label{eq:elimG}
\end{equation}
Where the last inequality follows from Claim~\ref{c:moveToGrid} and the fact that $y\dotp{x}{w} \le \theta$ is independent of $g$.
Finally, plugging \eqref{eq:elimG} into \eqref{eq:DToS} and then into \eqref{eq:introduceG}, and assuming that $k + \ln(1/\delta) \le m$ we get that
\begin{equation*}
\begin{split}
\Pr_{(x,y)\sim {\cal D}}[y\dotp{x}{w} \le 0] &\le \Pr_{(x,y)\sim S}[y\dotp{x}{w} < \theta]+ \frac{1}{m} \\
&+ O\left(\frac{k + \ln(1/\delta)}{m} + \sqrt{\frac{k + \ln(1/\delta)}{m} \cdot \left(\Pr_{(x,y)\sim S}[y\dotp{x}{w} < \theta] + \frac{1}{m^2}\right)}\right)\\
&\le \Pr_{(x,y)\sim S}\left[y\dotp{x}{w} < \theta \right]  + O\left(\pi + \sqrt{\pi \cdot\Pr_{(x,y)\sim S}\left[y\dotp{x}{w} < \theta \right]}\right) \;,
\end{split}
\end{equation*}
where $\pi = \frac{(R/\theta)^2\ln m + \ln(1/\delta)}{m}$, and therefore $S \in {\cal E}$, and the proof of Lemma~\ref{l:mainTechLemma}, and thus of Theorem~\ref{thm:mainupper}, is now complete.

\section{Existential Lower Bound}
The goal of this section is to prove the generalization lower bound in Theorem~\ref{thm:existMain}. 
Our proof is split into two cases, depending on the magnitude of $\tau$. The results we prove are as follows:
\begin{lemma}
\label{thm:existSmallTau}
There is a universal constant $C>0$ such that for every $R \geq C\theta$ and every $m \geq (R^2/\theta^2)^{1.001}$, there exists a distribution $\D$ over $X \times \{-1,+1\}$, where $X$ is the ball of radius $R$ in $\R^u$ for some $u$, such that with constant probability over a set of $m$ samples $S \sim \D^m$, there exists a vector $w$ with $\|w\|_2 \leq 1$ and $\Loss_S^\theta = 0$
satisfying $\Loss_{\cal D} \geq \Omega\left( \frac{R^2 \ln m}{\theta^2 m} \right) \;.$
\end{lemma}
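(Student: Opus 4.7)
The plan is to construct an explicit distribution $\D$ on the radius-$R$ ball in $\R^u$ for a suitably chosen dimension $u$, such that with constant probability over $S \sim \D^m$ we can exhibit a unit vector $w$ with $\Loss_S^\theta(w)=0$ but $\Loss_{\D}(w) \geq \Omega((R/\theta)^2 \ln m / m)$. Writing $d := \lfloor (R/\theta)^2 \rfloor$, I will support $\D$ on a family of orthogonal ``feature'' points of the form $R e_i$ carrying positive labels, together with a single heavy negative-label anchor point that soaks up the remaining probability and keeps the distribution realizable with margin $\theta$ via a fixed unit vector $w^*$.

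For each subset $T$ of feature indices I will consider the candidate classifier $w_T$ that places the value $\theta/R$ on $T$ (thereby achieving margin $\theta$ on the corresponding points) and a tiny negative value on a set of unsampled features (thereby causing strict classification error on them, at negligible norm cost), plus the anchor coordinate required to classify the negative-label point. One verifies $\|w_T\|_2 \le 1$ provided $|T|$ is at most roughly $d$, and $\Loss_S^\theta(w_T) = 0$ as long as $S$ contains no flipped feature. The true error $\Loss_{\D}(w_T)$ then equals the total mass of the flipped features. By tuning parameters I will argue that with constant probability there exists such a $T$ whose flipped mass is $\Omega((R/\theta)^2 \ln m/m)$.

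The principal obstacle is the $\ln m$ factor, since uniform constructions with $d$ equal-mass features only give a lower bound of $\Omega((R/\theta)^2/m)$. To recover the extra logarithm, I expect to employ a multi-scale hierarchy of features in the style of the Auer--Ortner lower bound for consistent learners in realizable PAC learning: partition the features into $\Theta(\ln m)$ scales with geometrically decreasing individual masses, where at each scale a constant-probability event produces a missed set contributing $\Theta(1/m)$ to the error; summing across $\Theta(\ln m)$ scales then yields the $\ln m$ factor. The delicate step is calibrating the scale sizes and per-point masses so that all scales fit within the single shared budget $\|w_T\|_2 \leq 1$ and so that all scales can simultaneously be ``activated'' by a single unit vector $w_T$, while still guaranteeing realizability of $\D$ with margin $\theta$ via the fixed vector $w^*$. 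The condition $m \geq (R/\theta^2)^{1.001}$ will enter to ensure $\ln(m/d) = \Theta(\ln m)$, so the per-scale contributions are asymptotically of the right size.
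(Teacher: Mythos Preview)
Your construction has a genuine gap that the multi-scale idea does not repair. With features of the form $Re_i$ and positive labels, any $w$ achieving $\Loss_S^\theta(w)=0$ must satisfy $\langle w, Re_i\rangle \ge \theta$ for every sampled feature index $i$, i.e.\ $w_i \ge \theta/R$. Hence each distinct sampled feature consumes $(\theta/R)^2$ of the norm budget, so the number $|T_S|$ of distinct sampled features must satisfy $|T_S|\le d=(R/\theta)^2$ for such a $w$ to exist. This constraint caps the attainable error: writing $a_i=(1-p_i)^m$ for a feature of mass $p_i$, the expected number of distinct sampled features is $\sum_i(1-a_i)$ and the expected unsampled mass is $\sum_i p_i a_i$. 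For every single feature the ratio $p_ia_i/(1-a_i)=p_i/( (1-p_i)^{-m}-1)$ is at most $1/m$ (attained as $p_i\to 0$), so the unsampled mass is $O(d/m)$ whenever $\sum_i(1-a_i)=O(d)$. The multi-scale hierarchy cannot beat this, because the bound holds term by term regardless of how the $p_i$ are chosen; you recover only $\Omega((R/\theta)^2/m)$, not the required $\ln m$ factor.

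The paper sidesteps this obstruction by a different geometry: each point is $x_i=(R/\sqrt{2})(e_i+e_{u+1})$, so all $u$ points share one coordinate. A \emph{single} entry $w_{u+1}=\theta\sqrt{2}/R$ already gives every point margin~$\theta$, at norm cost $O((\theta/R)^2)$, no matter how many distinct points appear in $S$. The entire remaining norm budget can then be spent on \emph{flipping} $\Theta(d)$ unsampled points (setting those coordinates of $w$ to $-2\sqrt{2}\theta/R$). This decoupling lets one take $u=\Theta(m/\ln m)$ uniform-mass points; a coupon-collector argument shows that with constant probability at least $d$ of them are unsampled, and since each has mass $1/u=\Theta(\ln m/m)$, flipping $\Theta(d)$ of them yields $\Loss_\D(w)=\Omega((R/\theta)^2\ln m/m)$ directly. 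No multi-scale construction is needed; the $\ln m$ arises from the per-point mass $\ln m/m$, which is only reachable because the shared-coordinate trick removes the ``distinct sampled $\le d$'' bottleneck inherent in your $Re_i$ design.
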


\begin{lemma}
\label{thm:existLargeTau}
There is a universal constant $C>0$ such that for every $R \geq C \theta$, every $m \geq (R^2/\theta^2)^{1.001}$ and every $R^2 \ln(m)/(\theta^2 m) < \tau \leq 1$, there exists a distribution $\D$ over $X \times \{-1,+1\}$, where $X$ is the ball of radius $R$ in $\R^{u}$ for some $u$, such that with constant probability over a set of $m$ samples $S \sim \D^m$, there exists a vector $w$ with $\|w\|_2 \leq 1$ and $\Loss_S^\theta \leq \tau$ satisfying 
\begin{equation*}
\Loss_{\D} \geq \Loss_S^\theta + \Omega\left(\sqrt{\frac{R^2\tau\ln\left(\tau^{-1}\right)}{\theta^2m}}\right) \;.
\end{equation*}
\end{lemma}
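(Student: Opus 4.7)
The plan is to realize the lower bound via an explicit construction supported on a rescaled orthonormal basis. Set $u \eqdef \lfloor R^2/\theta^2 \rfloor$ and let $\D$ be the uniform distribution on the $u$ points $\{R e_1, \ldots, R e_u\} \subset \R^u$, each labeled $+1$. The canonical classifier $w^\star \eqdef (\theta/R)\mathbf{1}$ has $\|w^\star\|_2 = 1$ and achieves margin exactly $\theta$ on every point of the support, so $\Loss_\D(w^\star) = \Loss_S^\theta(w^\star) = 0$. The lower bound must therefore come from adversarially choosing a different $w$ after inspecting $S$.

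For every subset $F \subseteq [u]$ I consider the candidate hypothesis $w_F \eqdef (\theta/R)(\mathbf{1} - 2\mathbf{1}_F)$, obtained by flipping the sign of $w^\star$ on the indices in $F$. A direct computation yields $\|w_F\|_2 = 1$, $\Loss_\D(w_F) = |F|/u$, and $\Loss_S^\theta(w_F) = N_F/m$, where $N_F$ denotes the number of sample points whose direction lies in $F$. It therefore suffices to exhibit, with constant probability over $S \sim \D^m$, some $F \subseteq [u]$ with $|F|/u \ge \tau + \delta$ and $N_F/m \le \tau$, where $\delta = \Omega\bigl(\sqrt{R^2\tau\ln(\tau^{-1})/(\theta^2 m)}\bigr)$.

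I would establish existence of such an $F$ via the probabilistic method. For any fixed $F$ of size $fu$, $N_F \sim \mathrm{Binomial}(m,f)$, so Chernoff's inequality gives $\Pr[N_F \le m\tau] \le \exp(-m D(\tau \| f))$ with $D(\tau \| f) \le (f-\tau)^2/(2\tau)$ in the regime $\tau \le f = O(\tau)$ of interest. The number of subsets of size $fu$ is $\binom{u}{fu} \ge \exp(u H(f))$ with $H(f) \ge f\ln(1/f)$, so the expected number of ``good'' subsets is at least $\exp\bigl(u H(f) - m(f-\tau)^2/(2\tau)\bigr)$. Writing $f = \tau(1+\eta)$ and optimizing over $\eta$ produces a candidate excess $\delta$ of the target order. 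To promote expected existence to existence with constant probability, I would apply a second-moment / Paley--Zygmund argument, exploiting that $N_F$ and $N_{F'}$ are nearly independent whenever $|F \cap F'|$ is small (one may restrict attention to a sub-family of subsets with controlled pairwise intersections).

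The main technical obstacle I foresee is sharpening this argument to recover the precise $\sqrt{\tau}$ factor inside $\delta$. A plain max-deviation estimate $\sup_{|F| = fu}|f - N_F/m| \lesssim \sqrt{f\ln\binom{u}{fu}/m}$ at $f \approx \tau$ only yields excess of order $\tau\sqrt{u\ln(1/\tau)/m}$, weaker than the claim by a factor $\sqrt{\tau}$. Recovering that factor will likely require a mildly noisy labeling of $\D$ (so that the relevant in-sample variance scales as $\Theta(\tau/m)$ rather than $\Theta(f/m)$) and/or a richer family of hypotheses combined with the second-moment refinement above, rather than a plain union bound.
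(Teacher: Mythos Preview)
You have correctly diagnosed the $\sqrt{\tau}$ shortfall in your own argument, but the remedy you propose (noisy labels, richer hypothesis family) is a detour. The actual missing idea is in the \emph{choice of $u$}: you fix $u = \lfloor R^2/\theta^2 \rfloor$, whereas the paper takes $u \asymp R^2/(\theta^2\tau)$, i.e.\ blows up the number of directions by a factor $1/\tau$. With this larger $u$, each count $b_i$ has mean $m/u \asymp m\theta^2\tau/R^2$, so the \emph{relative} downward deviation achievable for a single $b_i$ with probability $\Theta(\tau)$ becomes $\delta \asymp \sqrt{(u/m)\ln(1/\tau)} = \sqrt{R^2\ln(1/\tau)/(\theta^2\tau m)}$. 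Choosing $|T| = k \asymp \tau u$ then gives a gap $(k/u)\delta \asymp \tau\cdot\sqrt{R^2\ln(1/\tau)/(\theta^2\tau m)} = \sqrt{R^2\tau\ln(1/\tau)/(\theta^2 m)}$, exactly the target. Your own deviation calculation $\delta \lesssim f\sqrt{u\ln(1/f)/m}$ already yields this once $u$ carries the extra $1/\tau$.

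This rescaling, however, breaks your vector construction: with $u \asymp R^2/(\theta^2\tau)$ the sign-flip vector $w_F = (\theta/R)(\mathbf{1}-2\mathbf{1}_F)$ has norm $(\theta/R)\sqrt{u} \asymp \tau^{-1/2} > 1$. The paper handles this by adding one extra coordinate shared by all data points (value $R/\sqrt{2}$) and setting $w_T$ to have $\theta\sqrt{2}/R$ there, $-1/\sqrt{2k}$ on the $k = \Theta(\tau u) = \Theta(R^2/\theta^2)$ coordinates in $T$, and zero elsewhere; this keeps $\|w_T\|_2 \le 1$ while making $\langle x_i, w_T\rangle = \theta$ for $i\notin T$ and $<0$ for $i\in T$. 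The constant-probability existence of a good $T$ is then obtained not via a union bound over subsets but by a reverse Chernoff bound (Klein--Young) showing each $b_i$ is ``light'' with probability $\ge 2k/u$, followed by Paley--Zygmund on the number of light indices (using negative correlation of the indicators) to get at least $k$ of them with probability $\ge 1/8$.
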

We will first show how to combine Lemma~\ref{thm:existSmallTau} and Lemma~\ref{thm:existLargeTau} to obtain Theorem~\ref{thm:existMain}. For any $0 \leq \tau \leq 1$, every $R \geq C\theta$ for a large constant $C>0$ and every $m  \geq (R^2/\theta^2)^{1.001}$, we can invoke Lemma~\ref{thm:existSmallTau} or Lemma~\ref{thm:existLargeTau} to conclude the existence of a distribution $\D$, such that with constant probability over a choice of $m$ samples $S \sim \D^m$, there is a vector $w$ with $\|w\|_2 \leq 1$ and either:
\begin{enumerate}
\item $\Loss_S^\theta(w) = 0 < \tau$ and 
$$
\Loss_{\D}(w) \geq \Loss_S^\theta(w)  + \Omega(R^2 \ln m/(\theta^2 m)).$$
\item $\Loss_S^\theta(w) \leq \tau$ and 
$$
\Loss_{\D}(w) \geq \Loss_S^\theta(w) + \Omega(\sqrt{(R^2/\theta^2)\ln(\tau^{-1})\tau /m}).$$
\end{enumerate} 
Note that Lemma~\ref{thm:existLargeTau} strictly speaking cannot be invoked for $\tau \leq R^2 \ln(m)/(\theta^2 m)$, but for such small values of $\tau$, the expression $\sqrt{(R^2/\theta^2)\ln(\tau^{-1})\tau /m}$ becomes less than $R^2 \ln m/(\theta^2 m)$ and the bound follows from Lemma~\ref{thm:existSmallTau} instead. Thus for any $0 \leq \tau \leq 1$, with constant probability over $S$, we may find a $w$ with $\Loss_S^\theta(w) \leq \tau$ and 
$$\Loss_{\D}(w) \le \Loss_S^\theta(w) + \Omega\left(\max\left\{R^2 \ln m/(\theta^2 m), \sqrt{(R^2/\theta^2)\ln(\tau^{-1})\tau /m}\right\}\right)\;,$$
and therefore 
$$\Loss_{\D}(w) \le \Loss_S^\theta(w) + \Omega\left(R^2 \ln m/(\theta^2 m) + \sqrt{(R^2/\theta^2)\ln(\tau^{-1})\tau /m}\right)\;.$$
This concludes the proof of Theorem~\ref{thm:existMain}. The following two sections prove the two lemmas.
\ifpdf
\subsection{Small \texorpdfstring{$\tau$}{t}}
\else
\subsection{Small $\tau$}
\fi
In this section, we prove Lemma~\ref{thm:existSmallTau}.
Let $m$ be the number of samples and assume $m \geq (R^2/\theta^2)^{1+\eps}$ where $\eps = 0.001$. Assume furthermore that $R \geq C\theta$ for a sufficiently large constant $C>0$. We construct a distribution $\D$ over $\R^{u+1} \times \{-1,+1\}$, where $u = 4e \eps^{-1} m/\ln m$. The distribution $\D$ gives a uniform random point among $\{x_1,\dots,x_u\}$ where $x_i$ has its $(u+1)$'st and $i$'th coordinate equal to $R/\sqrt{2}$ and the rest $0$. The label is always $1$.

Inspired by ideas by Gr{\o}nlund \etal \cite{GKLMN19}, we will show by a coupon-collector argument that with high probability, no more than $u-R^2/\theta^2$ elements of $\{x_1,\dots,x_u\}$ are included in the sample $S$. Consider repeatedly sampling elements i.i.d. uniformly at random from $\{x_1,\dots,x_u\}$. For every $k \in \{1,\dots,u\}$, let $X_k$ be the number of samples between the time the $(k-1)$'th distinct element is sampled and the time the $k$'th distinct element is sampled. Then $X_k \sim Geom(p_k)$, where $p_k = (u-k+1)/u$. Denote $X \eqdef \sum_{k=1}^{u-t} {X_k}$ for $t=R^2/\theta^2$. Then:
\begin{equation*}
\begin{split}
\mathbb{E}[X] &= \sum_{k=1}^{u-t}{\frac{u}{u-k+1}}\\ 
&= u \left(\sum_{k=1}^{u}{\frac{1}{u-k+1}} - \sum_{k=u-t+1}^{u} \frac{1}{u-k+1} \right) \\
&= u \left(\sum_{k=1}^{u}{\frac{1}{k}} - \sum_{k=1}^{t} \frac{1}{k} \right) \\
&= u (H_{u}-H_{t}) \\
&\ge u(\ln(u) - \ln(t)-1) = u(\ln(u/t)-1).
\end{split}
\end{equation*}
For a large enough constant $C$ such that $R>C\theta$, we have $\mathbb{E}[X] \ge em$. To see why this is true, recall that $u= 4e\varepsilon^{-1} m/\ln m$, and $m \ge (R^2/\theta^2)^{1+\varepsilon}$, and therefore   
\begin{equation*}
\begin{split}
u\ln\left(\frac{u}{et}\right) &= 4e \varepsilon^{-1} \cdot \frac{m}{\ln m} \cdot \ln\left(\frac{4e \varepsilon^{-1} \cdot \frac{m}{\ln m}}{e\left(\frac{R^2}{\theta^2}\right)}\right) \\
&\ge 4e \varepsilon^{-1} \cdot \frac{m}{\ln m} \cdot \ln\left(\frac{4e \varepsilon^{-1} m^{\varepsilon/(1+\varepsilon)}}{e\ln m}\right) \\
&\ge 4e \varepsilon^{-1} \cdot \frac{\varepsilon}{2(1+\varepsilon)} \frac{m}{\ln m} \cdot \ln m \ge em\;,
\end{split}
\end{equation*}
where the inequality before last is due to the fact that for large enough $C>0$, $\ln m < m^{\varepsilon/(2(1+\varepsilon))}$.
Denote next $p_* = \min_{k \in [u-t]} p_k = (t+1)/u$, and $\lambda = m/\mathbb{E}[X]$, then $0 < \lambda \le e^{-1}$, and following known tail bounds on the sum of geometrically-distributed random variables (e.g. \cite[Theorem~3.1]{J18}) we get:
\begin{equation*}
\Pr[X \le m] \le \Pr[X \le \lambda\mathbb{E}[X]] \le e^{-p_{*}\mathbb{E}[X](\lambda - 1 - \ln \lambda)} \;.
\end{equation*}
As $\lambda \le e^{-1}$ we get that $1+\ln \lambda < 0$, and therefore 
\begin{equation*}
\Pr[X \le m] \le  e^{-\frac{t+1}{u}\cdot \mathbb{E}[X] \cdot \lambda} \le e^{-(t+1)\frac{m}{u}} \leq e^{\frac{
\varepsilon \ln m}{4e}} \;.
\end{equation*}
For large enough $C>0$ we have $e^{-(4e)^{-1}\eps \ln m} < 1/2$. 
Therefore with constant probability over $S \sim \D^m$, there are at least $t$ elements from $\{x_1,\dots,x_u\}$ that are not included in $S$. Assume we are given such an $S$. Let $x_{i_1},\dots,x_{i_{t/16}}$ denote some $t/16$ elements that are not in $S$ and consider the vector $w$ having its $(u+1)$'st coordinate set to $\theta\sqrt{2}/R$, coordinates $i_j = -2\sqrt{2} \theta/R$ and remaining coordinates $0$. Then $\|w\|_2 = \sqrt{\theta^2 2/R^2 + (t/16) 8 \theta^2/R^2} \leq \sqrt{1/8 + 1/2} < 1$. Notice that for all $x_i \in S$, we have $\langle w, x_i \rangle = (\theta \sqrt{2}/R) \cdot R/\sqrt{2} = \theta$. For an $x_{i_j}$ we have $\langle w, x_{i_j} \rangle = (\theta \sqrt{2}/R) \cdot R/\sqrt{2} + (-2\sqrt{2} \theta/R) \cdot R/\sqrt{2} = \theta - 2 \theta = -\theta$. Thus $\Loss_S^\theta(w) = 0$ while $\Loss_{\D}(w)= t/(16 u) = \Omega(R^2\ln m/(\theta^2 m))$.
\ifpdf
\subsection{Large \texorpdfstring{$\tau$}{t}}
\else
\subsection{Large $\tau$}
\fi
In this section, we prove Lemma~\ref{thm:existLargeTau}.
Let $m \geq (R^2/\theta^2)^{1+\eps}$ be the number of samples with $\eps = 0.001$, and let $R^2 \ln(m)/(\theta^2 m) < \tau \leq 1$. We construct a distribution $\D$ over $\R^{u+1} \times \{-1,+1\}$, where $u = R^2/(16 \theta^2 \tau)$. The distribution $\D$ gives a uniform random point among $\{x_1,\dots,x_u\}$ where $x_i$ has its $(u+1)$'st and $i$'th coordinate equal to $R/\sqrt{2}$ and the rest to $0$. The label is always $1$.

In our lower bound proof, we will find a vector $w$ of the following form. Let $k = e^{-28}\tau u$, and for every subset $T \subseteq \{1,\dots,u\}$ with $|T|=k$, let $w_T$ be the vector where each coordinate $i$ with $i \in T$ is set to $-1/\sqrt{2 k}$, its $(u+1)$'st coordinate is set to $\theta \sqrt{2}/R$ and all remaining coordinates are set to $0$. Then $\|w_T\|_2 = \sqrt{1/2 + 2\theta^2 /R^2} \leq 1$, as $R > C \theta$ for some sufficiently large $C>0$. In addition, for every $i \notin T$, $\langle x_i , w_T \rangle = \theta$ and for every $i \in T$ we have $\langle x_i, w_T \rangle = \theta - R/(2\sqrt{k}) \le  -\theta < 0$ if $i \in T$. Clearly for every such subset $T$, $\Loss_{\D}(w_T) = k/u = \tau/e^{28}$. What remains is to argue that with constant probability over $S$, there exists $T$ where $\Loss_S^\theta(w_T)$ is significantly smaller than $k/u$, i.e. there is a large gap between $\Loss_{\D}(w_T)$ and $\Loss_S^\theta(w_T)$.

Fix some set $S$ of $m$ samples from $\D$, let $b_i$ denote the number of times $x_i$ is in the sample. Then for every $T$ we have $\Loss_S^\theta(w_T) = (\sum_{i \in T} b_i)/m$. Let $T^*\subseteq\{1,\ldots,u\}$ be the set containing the $k$ indices with smallest $b_i$. We will show that with good probability over the choice of $S$ the $k$ smallest values among $b_1,\ldots,b_u$ are small, and thus $(\sum_{i \in T^*} b_i)/m$ is small.

Consider first a fixed index $i$. For every $j \in [m]$ let $c_j$ be the indicator for the event that the $j$'th element in the sample is $x_i$. Then $c_1,\ldots,c_m$ are independent indicators with success probability $p=1/u$, and moreover, $b_i = \sum_{j \in [m]}{c_i}$. We will use the following reverse Chernoff bound to show that $b_i$ is significantly smaller than its expectation $m/u$ with reasonable probability.
\begin{lemma}
\label{lem:reverseChernoff}[Klein and Young~\cite{reversechernoff}]
For every $\sqrt{3/(mp)} < \delta < 1/2$,
$$
\Pr\left[\sum_j c_j \leq (1-\delta)mp\right] \geq e^{-9 mp\delta^2}.
$$
\end{lemma}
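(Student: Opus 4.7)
The statement is a classical reverse Chernoff bound for binomial random variables; the plan is to combine Stirling's formula with a one-standard-deviation summation trick to absorb the polynomial prefactor that a naive single-point estimate produces.

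Let $B = \sum_j c_j \sim \mathrm{Binomial}(m,p)$ and set $k^* = \lfloor (1-\delta)mp\rfloor$, so $k^* \le (1-\delta)mp$. Then for any $r \le k^*$ we have trivially
\[
\Pr[B \le (1-\delta)mp] \;\ge\; \sum_{k=k^*-r+1}^{k^*} \Pr[B = k],
\]
and I would take $r = \lceil \sqrt{mp(1-p)}\rceil$, one standard deviation of $B$. Applying Stirling's formula $n! = \sqrt{2\pi n}(n/e)^n(1+O(1/n))$ to each of $m!,\, k!,\, (m-k)!$ and writing $q = k/m$ yields
\[
\Pr[B = k] \;=\; \frac{1+O(1/k)}{\sqrt{2\pi m\, q(1-q)}}\,\exp\bigl(-m D(q\Vert p)\bigr),
\]
where $D(q\Vert p) = q\ln(q/p) + (1-q)\ln((1-q)/(1-p))$ is the binary KL divergence. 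A short Taylor expansion of $D$ around $q = p$ gives $D(q\Vert p) \le (q-p)^2/(2p(1-p)) \cdot (1+O(\delta))$; combined with $|q-p| \le \delta p + O(\sqrt{(1-p)/(mp)})$ over the chosen window, this yields $m D(q\Vert p) \le mp\delta^2/(2(1-p))\cdot(1+o(1)) + O(1)$ uniformly in $k$. Consequently every point probability in the window is at least $c_0/\sqrt{mp(1-p)} \cdot e^{-mp\delta^2/(1-p)}$ for an absolute constant $c_0 > 0$.

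Summing the $r = \Theta(\sqrt{mp(1-p)})$ terms cancels the $1/\sqrt{mp(1-p)}$ prefactor, producing $\Pr[B \le (1-\delta)mp] \ge c_1\, e^{-mp\delta^2/(1-p)}$ for an absolute $c_1 > 0$. For $p \le 1/2$ this is at least $c_1 e^{-2 mp\delta^2}$, and the desired bound $c_1 e^{-2 mp\delta^2} \ge e^{-9 mp\delta^2}$ reduces to $7 mp\delta^2 \ge \ln(1/c_1)$; since the hypothesis $\delta > \sqrt{3/(mp)}$ gives $mp\delta^2 > 3$, this holds as soon as $c_1 \ge e^{-21}$, easily verified from the explicit constants above. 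The case $p > 1/2$ is reduced to the previous one by applying the argument to the complementary indicators $1-c_j$, which have success probability $1-p \le 1/2$. The main obstacle is the Stirling/Taylor bookkeeping in the second step: one must verify that the local rate function has quadratic curvature $1/(p(1-p))$ near $p$, so that the point probabilities $\Pr[B=k]$ remain within a constant factor of each other across the one-$\sigma$ summation window---this uniformity is precisely what lets the summation trick eliminate the polynomial prefactor cleanly.
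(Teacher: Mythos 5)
First, a point of comparison: the paper does not prove this lemma at all --- it is imported verbatim from Klein and Young \cite{reversechernoff} and used as a black box --- so you are supplying a proof where the paper supplies only a citation. Your strategy (sum the point probabilities $\Pr[B=k]$ over a window of width $\Theta(\sqrt{mp(1-p)})$ ending at $\lfloor(1-\delta)mp\rfloor$, estimate each via Stirling as $\Theta\bigl((mq(1-q))^{-1/2}\bigr)e^{-mD(q\Vert p)}$, and let the window length cancel the polynomial prefactor) is the standard and correct way to prove such a reverse Chernoff bound, and the generous constant $9$ together with the hypothesis $mp\delta^2>3$ leaves enough slack for it to succeed.

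There is, however, a genuine gap in the KL-divergence estimate, which is exactly the step your final numerical verification rests on. You claim $D(q\Vert p)\le \frac{(q-p)^2}{2p(1-p)}(1+O(\delta))$ and deduce an exponent of at most $mp\delta^2/(1-p)\cdot(1+o(1))\le 2mp\delta^2$. This is false in the regime $\delta=\Theta(1)$, which the lemma must cover since $\delta$ ranges up to $1/2$. The correct Taylor bound is $D(q\Vert p)=\int_q^p\frac{t-q}{t(1-t)}\,dt\le\frac{(p-q)^2}{2q(1-p)}$, with denominator $q$ rather than $p$; over your window $p-q$ can be as large as roughly $\delta p(1+1/\sqrt3)+1/m\approx 1.75\,\delta p$, so $q$ can be as small as $\approx 0.13\,p$ and the "correction factor" $p/q$ is then about $8$, not $1+o(1)$. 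Chasing these constants through the quadratic bound gives an exponent that can exceed $20\,mp\delta^2$, which overshoots the target $9mp\delta^2$ and invalidates your reduction to "$7mp\delta^2\ge\ln(1/c_1)$". The repair is to use a bound on $D$ that degrades gracefully for large $\delta$: for $q\le p\le 1/2$ one has $q\ln(q/p)\le -q(p-q)/p$ and $(1-q)\ln\frac{1-q}{1-p}\le(p-q)\bigl(1+2(p-q)\bigr)$, whence $D(q\Vert p)\le \frac{(p-q)^2}{p}(1+2p)\le \frac{2(p-q)^2}{p}\le 6.2\,\delta^2 p$ over the whole window; this leaves slack $(9-6.2)mp\delta^2>8$ in the exponent, which comfortably absorbs the Stirling constant. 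Separately, your reduction of the case $p>1/2$ by passing to $1-c_j$ converts the lower-tail event into an upper-tail event with deviation parameter $\delta p/(1-p)$ and target exponent $9m(1-p)(\delta p/(1-p))^2\ne 9mp\delta^2$, so it does not reproduce the stated bound; the direct Stirling argument should simply be run for the left tail in that case as well (in the paper's application $p=1/u$ is tiny, so this case never arises).
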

Now set
$$
\delta = \sqrt{\ln(u/(2k))/(9m/u)}.
$$
Since $u/(2k) = e^{28}\tau^{-1}/2 > e^{27}$ it follows that $\delta > \sqrt{\ln(e^{27})/9(m/u)} = \sqrt{3/(m/u)}$. We have assumed $\tau > R^2 \ln(m)/(\theta^2 m)$, and thus $u = R^2/(16\theta^2\tau) <m/(16\ln m)$. Therefore $\delta = \sqrt{\ln(u/(2k))/(9m/u)} \le \sqrt{\ln(e^{28}\tau^{-1})/(9\cdot 16 \ln m)}\le 1/2$ for a large enough constant $C>0$ such that $R>C \theta$. 
Hence we may use Lemma~\ref{lem:reverseChernoff} to conclude that $\Pr[b_i \leq (1-\delta)m/u] \geq e^{-\ln(u/(2k))} = 2k/u$.

We will next show that with constant probability there are at least $k$ indices $i$ for which $b_i \leq (1-\delta)m/u$. 
Let $B_i$ denote the indicator for the event $b_i \leq (1-\delta)m/u$. We will show that with probability at least $1/8$, $B \eqdef \sum_i{B_i}\ge k$. Note first that $\E[B]=\E[\sum_i B_i] = u \E[B_1] \geq 2k$. 
By the Paley-Zygmund inequality it follows that 
\begin{equation}
\Pr\left[B \geq k\right] \ge \Pr\left[B \geq (1/2)\E\left[B\right]\right] \geq \frac{\E[B]^2}{4\E[B^2]}
\label{eq:PZ}
\end{equation}

Consider now $\E[B^2] = \sum_{i,j} \E[B_i B_j]$. For $i \neq j$, we have that the events $B_i$ and $B_j$ are negatively correlated and thus $\E[B_i B_j] \leq \E[B_i]^2 = \E[B_1]^2$. For $i=j$ we have $\E[B_i B_i] = \E[B_i] = \E[B_1]$. Therefore we may bound $\E[B^2] \leq (u^2-u)\E[B_1]^2 + u\E[B_1] \leq \E[B]^2 + \E[B]$. Note that for a large enough $C>0$, $\E[B] \geq 2k \geq 1$ and thus $\E[B] \leq \E[B]^2$ and we get that $\E[B^2] \le 2\E[B]^2$. Plugging in \eqref{eq:PZ}, we conclude that $\Pr[B \geq k] \geq 1/8$, and hence with probability at least $1/8$ over the random set of samples $S$, it holds that $(\sum_{i \in T^*} b_i)/m \leq (k(1-\delta)m/u)/m = k(1-\delta)/u$. In this case, we have $\Loss_{\D}(w_{T^*})  - \Loss_S^\theta(w_{T^*}) \geq k \delta/u = \Omega((R^2/\theta^2) \sqrt{\ln(u/k)/(m/u)})=\Omega(\sqrt{(R^2/\theta^2)\ln(\tau^{-1})\tau /m})$. Since $\tau = e^{28}k/u = e^{28}\Loss_{\D}(w_{T^*}) \geq e^{28}\Loss_S^\theta(w_{T^*})$ we have that $\Loss_S^\theta(w_{T^*}) \leq \tau/e^{28} \le \tau$ which concludes the proof of Lemma~\ref{thm:existLargeTau}.

\section{Algorithmic Lower Bound}

This section is devoted to the proof of Theorem~\ref{th:lowerAlg}. To this end, fix some integer $N$, and fix $\theta \in \left(1/N, 1/40\right)$. Let $k = \lfloor (R/\theta)^2 \rfloor$, and let $\Xs = \{Re_1,\ldots,Re_k\}$, where $e_1,\ldots,e_k$ are the standard basis elements in $\mathbb{R}^k$. Let $\A$ be a learning algorithm that, upon receiving as input a sample set $S \sim {\cal D}^m$ produces a hyperplane $w_{\A,S}$.
With every $\ell \in \{-1,1\}^u$ we associate a distribution $\D_\ell$ over $\Xs \times \{-1,1\}$ and a unit vector $w_\ell$. We show that for some labeling $\ellsp$, with constant probability over the choice of a sample $S$ of $m$ points sampled from $\D_{\ellsp}$, a large fraction of sample points attain large margins with respect to $w_{\ellsp}$, while the hyperplane $w_{A,S}$ constructed by the algorithm has a high out-of-sample error probability (with respect to $\D_{\ellsp}$). 

We first turn to define $\D_\ell$ for $\ell \in \{-1,1\}^k$. We define $\D_\ell$ separately for the first $k/2$ points and the last $k/2$ points of $\Xs$. Intuitively, every point in $\{Re_i\}_{i \in [k/2]}$ has a fixed label determined by $\ell$, however all points but one have a very small probability of being sampled according to $\D_\ell$. Every point in $\{Re_i\}_{i \in [k/2+1,k]}$, on the other hand, has an equal probability of being sampled, however its label is not fixed by $\ell$ rather than slightly biased towards $\ell$. Formally, let $\alpha, \beta, \varepsilon \in [0,1]$ be constants to be fixed later. 
For $(x,y) \sim \D_\ell$, the probability that $x \in \{Re_i\}_{i \in [k/2]}$ is $1 - \beta$. Next, conditioned on $x \in \{Re_i\}_{i \in [k/2]}$, $(Re_1,\ell_1)$ is assigned high probability $(1-\varepsilon)$ and the rest of the measure is distributed uniformly over $\{(Re_i,\ell_i)\}_{i \in [2,k/2]}$. That is
\begin{equation*}
\Pr_{\D_\ell}[(Re_1,\ell_1)] = (1-\beta)(1 - \varepsilon)\;, \;and\; \forall j \in [2,k/2].\;\; \Pr_{\D_\ell}[(Re_j,\ell_j)] = \frac{(1-\beta)\varepsilon}{k/2-1}  \;.
\end{equation*}
Finally, conditioned on $x \in \{Re_i\}_{i \in [k/2+1,k]}$, $x$ distributes uniformly over $\{Re_i\}_{i \in [k/2+1,k]}$, and conditioned on $x = Re_i$, we have $y=\ell_i$ with probability $\frac{1+\alpha}{2}$. That is
\begin{equation*}
\forall j \in [k/2+1,k]. \; \; \Pr_{\D_\ell}[(\xi_j,\ell_j)] = \frac{(1+\alpha)\beta}{k}\;, and\; \Pr_{\D_\ell}[(\xi_j,-\ell_j)] = \frac{(1-\alpha)\beta}{k}\;.
\end{equation*}
We additionally associate with $\ell$ the unit vector $w_\ell \eqdef \frac{1}{\sqrt{k}}\ell$, and draw the reader's attention to the fact that for every $i \in [k]$, $\ell_i\dotp{w_\ell}{Re_i} = (R/\sqrt{k})\ell_i^2 = \theta$. Therefore for every $(x,y) \in \supp(\D_\ell)$, we have that $y\dotp{w_\ell}{x} < \theta$ if and only if there exists $i \in [k/2+1,k]$ such that $x=Re_i$ and $y = -\ell_i$. Therefore for every $\ell \in \{-1,1\}^k$ we have 
\begin{equation}
\Pr_{(x,y)\sim\D_{\ell}}[y\dotp{w_\ell}{x} < \theta] = \sum_{i\in[k/2+1,k]}{\Pr_{(x,y)\sim\D_{\ell}}[x=Re_i \;and\; y=-\ell_i ]}=\sum_{i\in[k/2+1,k]}{\frac{(1-\alpha)\beta}{k}}=\frac{(1-\alpha)\beta}{2} \;.
\label{eq:outMargin}
\end{equation}

We will show that for some labeling $\ellsp$, with constant probability over the sample $S \sim \D_{\ellsp}^m$ and the choices of $\A$, the hyperplane $w_{\A,S}$ returned by $\A$ has a high out-of-sample error. Formally, we show the following.

\begin{claim}\label{c:ellsp}
If $\alpha \le \sqrt{\frac{k}{40\beta m}}$ and $\varepsilon \le \frac{k}{10m}$, then there exists $\ellsp \in \{-1,1\}^k$ such that with probability at least $1/11$ over $S \sim \D_{\ellsp}^m$ and the choices of $\A$ we have 
$$\Pr_{(x,y)\sim \D_{\ellsp}}[y\dotp{w_{\A,S}}{x}<0] \ge \frac{(1-\alpha)\beta}{2} + \frac{1}{12}\left((1-\beta)\varepsilon + \alpha\beta\right)\;.$$
\end{claim}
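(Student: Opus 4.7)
The strategy is a probabilistic existence argument: I would sample $\ell$ uniformly from $\{-1,1\}^k$ and show that, with probability at least $1/11$ over the joint draw of $\ell$, $S \sim \D_\ell^m$, and the random coins of $\A$, the excess error $X_\ell \eqdef \Loss_{\D_\ell}(w_{\A,S}) - (1-\alpha)\beta/2$ is at least $\tfrac{1}{12}((1-\beta)\varepsilon + \alpha\beta)$. Averaging over $\ell$ then produces some fixed $\ellsp$ inheriting this probability bound over $(S,\A)$, yielding the claim (the $(1-\alpha)\beta/2$ additive term matching the contribution identified in \eqref{eq:outMargin}).

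Setting $\sigma_i \eqdef \sign(\dotp{w_{\A,S}}{Re_i})$ and treating $\sigma_i=0$ as a mismatch with $\ell_i$ (consistent with $\Loss_\D(w)=\Pr[y\dotp{w}{x}\le 0]$), a direct expansion using the definition of $\D_\ell$ yields $X_\ell \ge B_\ell + C_\ell$ where
$$B_\ell \eqdef \sum_{i=2}^{k/2} \tfrac{(1-\beta)\varepsilon}{k/2-1}\mathbf{1}[\sigma_i \ne \ell_i], \qquad C_\ell \eqdef \tfrac{2\alpha\beta}{k}\sum_{i=k/2+1}^{k}\mathbf{1}[\sigma_i \ne \ell_i].$$
It thus suffices to show $B_\ell \ge \Omega((1-\beta)\varepsilon)$ and $C_\ell \ge \Omega(\alpha\beta)$ each with probability $\ge 1-\tfrac{1}{22}$; a union bound then delivers $X_\ell \ge \tfrac{1}{12}((1-\beta)\varepsilon+\alpha\beta)$ with probability $\ge 1-\tfrac{1}{11}\ge\tfrac{1}{11}$.

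For $B_\ell$: the hypothesis $\varepsilon \le k/(10m)$ implies $\Pr[Re_i \notin S] \ge \rho$ for each $i \in [2,k/2]$ and a universal $\rho > 0$. Since $\{Re_i \notin S\}$ are negatively correlated under multinomial sampling, Chebyshev shows that with high probability the set $T \eqdef \{i \in [2,k/2] : Re_i \notin S\}$ satisfies $|T|=\Omega(k)$. Conditional on $S$, $\A$'s coins, and $T$, the labels $\{\ell_i\}_{i\in T}$ are i.i.d.\ uniform on $\{-1,1\}$ and independent of the already-determined $\sigma_i$'s, so $\sum_{i\in T}\mathbf{1}[\sigma_i\ne\ell_i]$ is a sum of $|T|$ i.i.d.\ Bernoullis with success probability $\ge 1/2$; by Chernoff it is $\ge|T|/3$ with probability $1-e^{-\Omega(k)}$. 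Hence $B_\ell = \Omega((1-\beta)\varepsilon)$ with probability $1-O(1/k)$.

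For $C_\ell$: the crucial observation is that, conditional on $S$ and $\A$'s coins, the labels $\{\ell_i\}_{i\in[k/2+1,k]}$ are mutually independent, since under the uniform prior their posteriors depend only on the independent noisy labels of their respective copies of $Re_i$. Thus $\{\mathbf{1}[\sigma_i\ne\ell_i]\}_i$ are conditionally independent Bernoullis whose success probabilities each lower bound the Bayes error for predicting $\ell_i$ from the $n_i$ observed labels of $Re_i$. Combining $\alpha^2\mathbb{E}[n_i]\le 1/20$ (from the hypothesis on $\alpha$ and $\mathbb{E}[n_i]=2\beta m/k$) with a Berry--Esseen-type anti-concentration bound on the centered binomial arising from majority vote yields Bayes error $\ge c_0$ for a universal $c_0>0$, in expectation over $n_i$ (with $n_i=0$ contributing Bayes error exactly $1/2$ and the tails $n_i\gg \mathbb{E}[n_i]$ controlled by a Chernoff bound on $n_i$). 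A Chernoff bound on the $k/2$ conditionally independent indicators then gives $C_\ell = \Omega(\alpha\beta)$ with probability $1-e^{-\Omega(k)}$.

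The main technical obstacle is the Bayes-error lower bound for the noisy group: one must integrate over the random $n_i$ and verify that the anti-concentration argument yields a universal constant $c_0$ independent of $(\alpha,\beta)$ throughout the entire admissible parameter regime. Once both estimates are in place, a union bound on the two high-probability events---each of probability $\ge 1-1/22$, which is assured by taking $k=\lfloor(R/\theta)^2\rfloor$ sufficiently large (guaranteed for $N$ large enough since $\theta<1/40$)---yields $X_\ell \ge \tfrac{1}{12}((1-\beta)\varepsilon+\alpha\beta)$ with probability $\ge 1/11$, and averaging over $\ell$ extracts the desired $\ellsp$.
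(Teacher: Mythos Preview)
Your decomposition $X_\ell \ge B_\ell + C_\ell$ and the strategy of averaging over $\ell$ to extract $\ellsp$ match the paper's setup exactly (the paper's $\Psi - (1-\alpha)\beta/2$ is precisely $B_\ell + C_\ell$). The divergence is in how you pass to a probability statement. The paper proves only an expectation lower bound $\mathbb{E}_{\ell,\A,S}[B_\ell + C_\ell] \ge a/6$ with $a = (1-\beta)\varepsilon + \alpha\beta$ (using, for the noisy block, the Anthony--Bartlett bias-estimation lemma in place of your Berry--Esseen idea), then observes the deterministic upper bound $B_\ell + C_\ell \le a$ and applies Markov to the nonnegative variable $a - (B_\ell + C_\ell)$ to obtain $\Pr[B_\ell + C_\ell \ge a/12] \ge 1/11$. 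You instead try to certify $B_\ell$ and $C_\ell$ separately with high probability via Chernoff.

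Your $B_\ell$ step is fine, but the $C_\ell$ step has a genuine gap. You correctly observe that, conditional on $(S,\A)$, the indicators $\mathbf{1}[\sigma_i \ne \ell_i]$ for $i \in [k/2+1,k]$ are independent Bernoullis with parameters $p_i \ge q_i$, where $q_i$ is the posterior minimum. But to invoke Chernoff conditionally you need $\sum_i p_i = \Omega(k)$ for the \emph{realized} sample $S$, whereas the Bayes-error calculation you outline delivers only $\mathbb{E}_S[q_i] \ge c_0$, hence $\mathbb{E}_S\bigl[\sum_i p_i\bigr] \ge c_0 k/2$. Nothing you wrote rules out realizations of $S$ in which many $n_i$'s are large or many label sequences are lopsided, driving many $q_i$'s near zero; on such $S$ the conditional Chernoff bound is vacuous. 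Closing this gap requires an additional concentration argument for $\sum_i q_i$ over the randomness in $S$ (for instance: use $\sum_i n_i \le 2\beta m$ to deterministically bound the fraction of $i$ with $n_i \gg 2\beta m/k$, argue the per-$n$ Bayes error is $\Omega(1)$ on the remaining indices, and then use independence of the $q_i$'s given the $n_i$'s to concentrate $\sum_i q_i$). This is all doable, but it is considerably more work than the paper's Markov step, which converts the same expectation bound you would need anyway into the $1/11$ probability in one line---precisely because $B_\ell + C_\ell$ is trapped in $[0,a]$.
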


Before proving the claim, we show that it implies Theorem~\ref{th:lowerAlg}.

\begin{proof}[Proof of Theorem~\ref{th:lowerAlg}]
Fix some $\tau \in [0,49/100]$, and let $\varepsilon = \frac{u}{10m}$. Assume first that $\tau \le \frac{k}{300m}$, and let $\beta = \alpha = 0$. Then for every sample $S \sim \D_{\ellsp}^m$, $\Pr_{(x,y)\sim S}[y\dotp{w_{\ellsp}}{x} < \theta]=0 \le \tau$, and moreover by Claim~\ref{c:ellsp} with probability at least $1/11$ over $S$ and the randomness of $\A$
\begin{equation*}
\Pr_{(x,y)\sim\D_{\ellsp}}[y\dotp{w_{\A,S}}{x} < 0 ] \ge \frac{(1-\beta)\varepsilon}{12} \ge \tau + \Omega\left(\frac{k}{m} \right) = \tau + \Omega\left(\frac{R^2}{m\theta^2} + \sqrt{\frac{\tau R^2}{m\theta^2}}\right) \;.
\end{equation*}
where the last transition is due to the fact that $k = R^2\theta^{-2}$ and $\tau = O(k/m)$.

Otherwise, assume $\tau > \frac{k}{300m}$, and let $\varepsilon=0$, $\alpha = \sqrt{\frac{k}{2560\tau m}}$ and $\beta = \frac{64\tau}{32-31\alpha}$. Since $\tau \ge \frac{k}{300m}$, then $\alpha \in [0,1]$. Moreover, if $m>Ck$ for large enough but universal constant $C>0$, then $32 - 31\alpha \ge 64 \cdot \frac{49}{100} \ge 64 \tau$, and hence $\beta \in [0,1]$. Moreover, since $\alpha \le 1$ then $\beta \le 64\tau$, and therefore $\alpha = \sqrt{\frac{k}{2560 \tau m}} \le \sqrt{\frac{k}{40\beta m}}$. 
Let $\left\langle(x_1,y_1), \ldots, (x_m,y_m)\right\rangle \sim \Dy^m$ be a sample of $m$ points drawn independently according to $\D_{\ellsp}$. For every $j \in [m]$, by~\eqref{eq:outMargin} we have $\mathbb{E}[\mathbbm{1}_{y_j\dotp{w_{\ellsp}}{x_j} < \theta}] = \frac{(1-\alpha)\beta}{2}$. Therefore by Chernoff we get that for large enough $N$, 
\begin{equation*}
\begin{split}
\Pr_{S\sim \D_{\ellsp}^m}\left[ \Pr_{(x,y)\sim S}\left[y\dotp{w_{\ellsp}}{x} < \theta\right] \ge \tau\;\right] &= \Pr_{S\sim \D_{\ellsp}^m}\left[\frac{1}{m} \sum_{j \in [m]}{\mathbbm{1}_{y_j\dotp{w_{\ellsp}}{x_j} < \theta}} \ge \frac{(1-31\alpha/32)\beta}{2}\;\right] \\
&\le e^{- \Theta\left(\alpha^2\beta m \right)} \le e^{-\Theta(k)} \le 10^{-3} \;, \\
\end{split}
\end{equation*}
where the inequality before last is due to the fact that $\alpha^2 \beta m = \frac{k \beta}{2560 \tau} = \Omega(k)$, since $\beta \ge 2\tau$. Moreover, with probability at least $1/11$ over $S$ and $\A$ we get that

\begin{equation*}
\begin{split}
\Pr_{(x,y)\sim\D_{\ellsp}}[y\dotp{w_{\A,S}}{x} < 0 ] &\ge \frac{(1-\alpha)\beta}{2} + \frac{\alpha\beta}{12} = \frac{(1-31\alpha/32)\beta}{2} + \frac{\alpha\beta}{24} = \tau + \Omega\left(\sqrt{\frac{\tau k}{m}}\right)\\
&\ge \tau + \Omega\left(\frac{R^2}{m\theta^2} + \sqrt{\frac{\tau R^2}{m\theta^2}}\right) \;,
\end{split}
\end{equation*}
where the last transition is due to the fact that $\tau = \Omega(k/m)$. This completes the proof of Theorem~\ref{th:lowerAlg}.
\end{proof}

For the rest of the section we therefore prove Claim~\ref{c:ellsp}.
We first show that if $\alpha$ and $\varepsilon$ are small enough, then there exists a labeling $\ellsp$ for which the expected out-of-sample error of $w_{\A,S}$ is large. We will then use Markov's inequality to show that the out-of-sample error of $w_{\A,S}$ is large with constant probability.
More precisely, note that 
\begin{equation}
\begin{split}
&\Pr_{(x,y)\sim\D_\ell}[y\dotp{w}{x} < 0] = \\ 
&= \sum_{i \in [k/2], y \in \{-1,1\}}{\mathbbm{1}_{y\dotp{w}{Re_i}<0}\Pr_{\D_\ell}[(Re_i,y)]} + \sum_{i \in [k.2+1,k], y \in \{-1,1\}}{\mathbbm{1}_{y\dotp{w}{Re_i}<0}\Pr_{\D_\ell}[(Re_i,y)]} \;,\\
\end{split}
\label{eq:sumD}
\end{equation}
and denote $\Psi = \Psi(w, \ell) \eqdef \Pr_{(x,y)\sim\D_\ell}[y\dotp{w}{x} < 0] - \mathbbm{1}_{y\dotp{w}{Re_1}<0}\Pr_{\D_\ell}[(Re_1,y)]$. We will first lower bound the expected value of $\Psi$.
\begin{claim}
If $\alpha \le \sqrt{\frac{k}{40\beta m}}$ and $\varepsilon \le \frac{k}{10m}$, then there exists $\ellsp \in \{-1,1\}^k$ such that 
$$\mathbb{E}_{\A,S}\left[\Psi(w_{\A,S},\ellsp)] \; \right] \ge \frac{(1-\alpha)\beta}{2} + \frac{1}{6}\left((1-\beta)\varepsilon + \alpha\beta\right)\;.$$
\end{claim}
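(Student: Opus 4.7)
The plan is to reduce to showing the bound holds in expectation over $\ell$ drawn uniformly from $\{-1,1\}^k$ and then invoke an averaging argument to fix a concrete $\ellsp$. Writing $p_j(w,\ell)\eqdef\mathbbm{1}_{\ell_j\dotp{w}{Re_j}\le 0}$ for the indicator that a sign-predictor disagrees with $\ell_j$ on coordinate $j$, a direct expansion of $\Psi$ using the definition of $\D_\ell$ yields
\begin{equation*}
\Psi(w,\ell)\;\ge\;\frac{(1-\alpha)\beta}{2}\;+\;\frac{(1-\beta)\varepsilon}{k/2-1}\sum_{j=2}^{k/2}p_j(w,\ell)\;+\;\frac{2\alpha\beta}{k}\sum_{j=k/2+1}^{k}p_j(w,\ell),
\end{equation*}
the additive $(1-\alpha)\beta/2$ being the unavoidable label-noise floor on the second half of the coordinates (each such coordinate contributes at least $\beta(1-\alpha)/k$ regardless of $w$, and their sum is $\beta(1-\alpha)/2$). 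It therefore suffices to show $\mathbb{E}_{\ell,S,\A}[p_j(w_{\A,S},\ell)]\ge 1/6$ for every $j\ge 2$; plugging this into the above gives exactly the claimed inequality.

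\emph{Rarely-sampled coordinates $j\in[2,k/2]$.} Each draw produces $Re_j$ with probability $p\eqdef\tfrac{(1-\beta)\varepsilon}{k/2-1}$, so the hypothesis $\varepsilon\le k/(10m)$ gives $pm\le 2/5$ and hence $\Pr[Re_j\notin S]\ge (1-p)^m\ge 3/5$. Conditional on $Re_j\notin S$, the law of $S$ is independent of $\ell_j$, so under the uniform prior $\ell_j$ remains uniform on $\{-1,1\}$ given $(S,\A)$, and any sign-prediction on $Re_j$ is wrong with probability at least $1/2$. Hence $\mathbb{E}[p_j]\ge 3/10>1/6$.

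\emph{Noisy coordinates $j\in[k/2+1,k]$.} Here $p_j$ is the error indicator of a (randomized) estimator of $\ell_j$ built from $(S,\A)$, so the Bayes-error lower bound gives $\mathbb{E}[p_j]\ge \tfrac12\bigl(1-TV(\mathcal{M}_{+1},\mathcal{M}_{-1})\bigr)$, where $\mathcal{M}_{\pm 1}$ denotes the law of $S$ conditional on $\ell_j=\pm 1$ after marginalizing $\ell_{-j}$ uniformly. By convexity of total variation it is enough to bound $TV(\D_{(\ell_{-j},+1)}^m,\D_{(\ell_{-j},-1)}^m)$ for each fixed $\ell_{-j}$: these product laws agree on all coordinates other than $j$ and on coordinate $j$ differ only in the label Bernoulli $(1\pm\alpha)/2$. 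Tensorization of the per-sample KL (which is $0$ except with probability $\beta/k$, where it equals $\alpha\ln\tfrac{1+\alpha}{1-\alpha}\le 4\alpha^2$) yields $KL\le 4m\alpha^2\beta/k$, which by the hypothesis $\alpha\le\sqrt{k/(40\beta m)}$ is at most $1/10$. Pinsker's inequality then gives $TV<1/4$, so $\mathbb{E}[p_j]\ge 3/8>1/6$.

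Combining the two estimates,
\begin{equation*}
\mathbb{E}_{\ell,S,\A}[\Psi(w_{\A,S},\ell)]\;\ge\;\frac{(1-\alpha)\beta}{2}+\frac{1}{6}\bigl((1-\beta)\varepsilon+\alpha\beta\bigr),
\end{equation*}
and by averaging some $\ellsp\in\{-1,1\}^k$ attains at least the right-hand side of the inner expectation, which is the claimed bound. The main obstacle is the noisy-coordinate case: one has to transfer the TV bound from the $\ell_{-j}$-conditional product laws to the $\ell_{-j}$-averaged laws via convexity of $TV$, and verify that the tensorized KL stays below a small constant under the quantitative restriction $\alpha\le\sqrt{k/(40\beta m)}$—precisely the condition that renders any Bayes estimator of $\ell_j$ nearly useless.
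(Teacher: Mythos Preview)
Your proof is correct and follows a genuinely different route from the paper's. Both start with the same probabilistic-method reduction (average over a uniform $\ell$) and the same decomposition of $\Psi$ into the ``rare'' block $j\in[2,k/2]$ and the ``noisy'' block $j\in[k/2+1,k]$, but the per-block arguments differ. For the rare block the paper applies a global Chernoff bound to show that with probability at least $2/3$ at most half of the points $Re_2,\dots,Re_{k/2}$ appear in $S$, and then averages; you instead bound each coordinate separately via $\Pr[Re_j\notin S]\ge(1-p)^m\ge 3/5$ and conditional independence of $\ell_j$ and $S$. For the noisy block the paper invokes the Anthony--Bartlett biased-coin lemma together with Jensen's inequality on $\Phi(\cdot,\alpha)$, while you use the information-theoretic route Le~Cam $+$ Pinsker to bound the Bayes error of estimating $\ell_j$ from $S$. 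Your approach is more self-contained (no external lemma needed) and more modular (a single per-coordinate statement $\mathbb{E}[p_j]\ge 1/6$ drives both blocks); the paper's argument, via $\Phi(n,\alpha)$, tracks the dependence on the number of samples of each $Re_j$ a bit more explicitly, which is not needed here.

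Two minor numerical points: the marginal probability of $x=Re_j$ for $j\in[k/2+1,k]$ is $2\beta/k$, not $\beta/k$, so the tensorized KL is at most $8m\alpha^2\beta/k\le 1/5$ under $\alpha\le\sqrt{k/(40\beta m)}$; Pinsker still gives $TV\le\sqrt{1/10}<1/3$ and hence $\mathbb{E}[p_j]\ge 1/3>1/6$, so nothing changes. Also, the bound $\alpha\ln\tfrac{1+\alpha}{1-\alpha}\le 4\alpha^2$ fails for $\alpha$ very close to $1$; this is harmless in the intended regime (and the paper's use of $\Phi$ has the analogous implicit restriction), but you may want to note it.
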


\begin{proof}
To show existence of a labeling $\ellsp$ it is enough to show that 
$$\mathbb{E}_{\ell\in\{-1,1\}^k}\left[\mathbb{E}_{\A,S}\left[\Psi(w_{\A,S},\ell) \right]\right] \ge \frac{(1-\alpha)\beta}{2} + \frac{1}{6}\left((1-\beta)\varepsilon + \alpha\beta\right)\;.$$
From \eqref{eq:sumD} we get that
\begin{equation}
\begin{split}
&\mathbb{E}_{\ell \in \{-1,1\}^k}\left[\; \mathbb{E}_{\A,S} \left[ \Psi(w_{\A,S},\ell)\right]\;\right]= \\ 
&= \mathbb{E}_{\ell}\left[\mathbb{E}_{\A,S} \left[\sum_{i \in [2,k/2], y \in \{-1,1\}}{\mathbbm{1}_{y\dotp{w_{\A,S}}{Re_i}<0}\Pr_{\D_\ell}[(Re_i,y)]} + \sum_{i \in [k/2+1,k], y \in \{-1,1\}}{\mathbbm{1}_{y\dotp{w_{\A,S}}{Re_i}<0}\Pr_{\D_\ell}[(Re_i,y)]} \right]\right] \\
\end{split}
\label{eq:lowerD}
\end{equation}
In order to lower bound the expected value of $\Psi(w_{\A,S},\ell)$ over $\ell, \A, S$, we will bound the expected value of each of the two sums in \eqref{eq:lowerD} separately, starting with the first.

For every $i \in [2,k/2]$ and $y \in \{-1,1\}$, if $y \ne \ell_i$ then $\Pr_{\D_\ell}[(Re_i,y)]=0$, and if $y = \ell_i$ then $\Pr_{\D_y}[(Re_i,y)]=\frac{(1-\beta)\varepsilon}{k/2-1}$. Therefore for every $\ell, \A, S$
\begin{equation}
\sum_{j \in [2,k/2], y \in \{-1,1\}}{\mathbbm{1}_{y\dotp{w_{\A,S}}{Re_j}<0}\Pr_{\D_y}[(Re_j,y)]} \ge \frac{(1-\beta)\varepsilon}{k/2-1}\sum_{j \in [2,k/2]}{\mathbbm{1}_{y\dotp{w_{\A,S}}{Re_j}<0}}\;.
\label{eq:lowerDFirstSum}
\end{equation}
For every $i \in [2,k/2]$, if $Re_i \notin S$ then $\A$ has no information regarding $\ell_i$, and therefore $\ell_i$ and $\dotp{w_{\A,S}}{Re_i}$ are independent. Hence $\mathbb{E}_{\ell\sim\{-1,1\}^k}[\mathbbm{1}_{\ell_i\dotp{w_{\A,S}}{Re_i}<0}]= \frac{1}{2}$. Let ${\cal S}$ be the set of all samples for which $| S \cap \{Re_2,\ldots,Re_{k/2}\}| \le \frac{k/2-1}{2}$, then for every $S \in {\cal S}$ and every set of random choices of $\A$,
\begin{equation*}
\begin{split}
\mathbb{E}_{\ell}\left[\sum_{i \in [2,k/2-1]}{\mathbbm{1}_{\ell_i\dotp{w_{\A,S}}{Re_i}<0}}\right] 
\ge \frac{k/2-1 - |S \cap \{Re_2,\ldots,Re_{k/2}\}|}{2} \ge \frac{k/2-1}{4}\;,
\end{split}
\end{equation*}
As this holds for every $S \in {\cal S}$, and every set of random choices made by $\A$ we conclude that
\begin{equation*}
\mathbb{E}_{\A,S}\left[\left.\mathbb{E}_{\ell}\left[\frac{(1-\beta)\varepsilon}{k/2-1}\sum_{j \in [2,k/2]}{\mathbbm{1}_{y\dotp{w_{\A,S}}{Re_j}<0}}\right]\; \right|S \in {\cal S}\;\right] \ge \frac{(1-\beta)\varepsilon}{k/2-1}\cdot\frac{k/2-1}{4} = \frac{(1-\beta)\varepsilon}{4}\; .
\end{equation*} 
A Chernoff bound gives $\Pr_{S \sim \D^m}[{\cal S}] \ge 1- e^{-\Theta(k)} \ge 2/3$, and by Fubini's theorem we get that
\begin{equation}
\begin{split}
&\mathbb{E}_{\ell}\left[\mathbb{E}_{\A,S}\left[\frac{(1-\beta)\varepsilon}{k/2-1}\sum_{j \in [2,k/2]}{\mathbbm{1}_{y\dotp{w_{\A,S}}{Re_j}<0}}\right]\;\right] = \mathbb{E}_{\A,S}\left[\mathbb{E}_{\ell}\left[\frac{(1-\beta)\varepsilon}{k/2-1}\sum_{j \in [2,k/2]}{\mathbbm{1}_{y\dotp{w_{\A,S}}{Re_j}<0}}\right]\;\right] \\
&\ge \mathbb{E}_{\A,S}\left[\left.\mathbb{E}_{\ell}\left[\frac{(1-\beta)\varepsilon}{k/2-1}\sum_{j \in [2,k/2]}{\mathbbm{1}_{y\dotp{w_{\A,S}}{Re_j}<0}}\right] \right| S \in {\cal S}\;\right] \cdot \Pr[S \in {\cal S}] \ge \frac{(1-\beta)\varepsilon}{6}
\end{split}
\label{eq:boundFirstSum}
\end{equation}
Next, for every $i \in [k/2+1,k]$ we have that 
\begin{equation*}
\begin{split}
\sum_{y \in \{-1,1\}}{\mathbbm{1}_{y\dotp{w}{Re_i} < 0}\Pr_{\D_\ell}[(Re_i,y)]}&=\mathbbm{1}_{\ell_i\dotp{w}{Re_i}<0}\Pr_{\D_\ell}[(Re_i,\ell_i)] + \mathbbm{1}_{\ell_i\dotp{w}{Re_i}>0}\Pr_{\D_\ell}[(Re_i,-\ell_i)] \\
&=\frac{(1-\alpha)\beta}{k} + \mathbbm{1}_{\ell_i\dotp{w}{Re_i}<0}\frac{\alpha\beta}{k/2} \;,
\end{split}
\end{equation*}
and therefore 
\begin{equation}
\sum_{i \in [k/2+1,k], y \in \{-1,1\}}{\mathbbm{1}_{y\dotp{w}{Re_i}<0}\Pr_{\D_\ell}[(Re_i,y)]} = \frac{(1-\alpha)\beta}{2} + \frac{\alpha\beta}{k/2}\sum_{i \in [k/2+1,k]}{\mathbbm{1}_{\ell_i\dotp{w}{Re_i}<0}} \;.
\label{eq:lowerDSecondSum}
\end{equation}
Next, let $i \in [k/2+1,k]$. Denote by $\sigma_i \in [m]$ the number of times $Re_i$ was sampled into $S$. Then
\begin{equation}
\mathbb{E}_{\ell}\left[\mathbb{E}_{\A,S}\left[\mathbbm{1}_{\ell_i\dotp{w_{\A,S}}{Re_i}<0}\right]\right]=\sum_{n=0}^m{\mathbb{E}_{\ell}\left[\mathbb{E}_{\A,S}\left[\left.\mathbbm{1}_{\ell_i\dotp{w_{\A,S}}{Re_i}<0}\right| \sigma_i=n\right]\right]}\cdot \Pr[\sigma_i=n]
\label{eq:totalExp}
\end{equation}
For every $a > 0$ and $b \in (0,1)$, let $\Phi(a,b) = \frac{1}{4}\left(1 - \sqrt{1-\exp\left(\frac{-ab^2}{1 - b^2}\right)}\right)$, then a result by Anthony and Bartlett \cite[Lemma~5.1]{AB09} 
shows that
\begin{equation*}
\mathbb{E}_{\ell}\left[\mathbb{E}_{\A,S}\left[\left.\mathbbm{1}_{\ell_i\dotp{w_{\A,S}}{Re_i}<0}\right| \sigma_i=n\right]\right] \ge \Phi(n+2,\alpha)
\end{equation*}
Plugging this into \eqref{eq:totalExp}, by the convexity of $\Phi(\cdot, \alpha)$ and Jensen's inequality we get that
\begin{equation*}
\mathbb{E}_{\ell}\left[\mathbb{E}_{\A,S}\left[\mathbbm{1}_{\ell_i\dotp{w_{\A,S}}{Re_i}<0}\right]\right]\ge\sum_{n=0}^m{\Phi(n+2,\alpha)}\cdot \Pr[\sigma_i=n] \ge \Phi(\mathbb{E}[\sigma_i]+2,\alpha)\;.
\end{equation*}
Since $\mathbb{E}[\sigma_i] = \frac{2\beta m}{k}$, and Since $\Phi(\cdot,\alpha)$ is monotonically decreasing we get that 
\begin{equation*}
\mathbb{E}_{\ell}\left[\mathbb{E}_{\A,S}\left[\mathbbm{1}_{\ell_i\dotp{w_{\A,S}}{Re_i}}\right]\right] \ge \Phi\left(\frac{4\beta m}{k},\alpha\right) \;.
\end{equation*}
As for $\alpha \le \sqrt{\frac{k}{40 \beta m}}$ we have $\Phi(\frac{8 \beta m}{k}, \alpha) \ge \frac{1}{6}$, summing over all $i \in [k/2+1,k]$ we get that 
\begin{equation}
\mathbb{E}_{\ell}\left[\mathbb{E}_{\A,S}\left[\sum_{i \in [k/2+1,k], y \in \{-1,1\}}{\mathbbm{1}_{y\dotp{w_{\A,S}}{Re_i}<0}\Pr_{\D_\ell}[(Re_i,y)]}\right]\right] \ge  \frac{(1-\alpha)\beta}{2}+\frac{\alpha\beta}{6}
\label{eq:boundSecondSum}
\end{equation}

Plugging \eqref{eq:boundFirstSum} and \eqref{eq:boundSecondSum} into \eqref{eq:lowerD} we conclude the claim.
\end{proof}

To finish the proof of Claim~\ref{c:ellsp}, assume $\alpha \le \sqrt{\frac{u}{40\beta m}}$ and $\varepsilon \le \frac{k}{10m}$, and let $\ellsp$ be the labeling whose existence is guaranteed by the previous claim. Note first that by substituting every indicator in \eqref{eq:sumD} with $1$, we get that $\Psi(w_{\A,S},\ellsp) \le (1-\beta)\varepsilon + \alpha \beta$ for every set of random choices made by $\A$ and every sample $S$.
Denote $a = (1-\beta)\varepsilon + \alpha \beta$. In these notations we have that $a - \Psi(w_{\A,S},\ellsp)$ is a non-negative random variable, and moreover, Claim~\ref{c:ellsp} states that $\mathbb{E}_{\A,S}[a-\Psi(w_{\A,S},\ellsp)] \le 5a/6$. Therefore from Markov's inequality we get that
\begin{equation*}
\Pr_{\A,S}[\Psi(w_{\A,S},\ellsp) \le a/12] = \Pr_{\A,S}[a - \Psi(w_{\A,S},\ellsp) \ge 11a/12] \le \Pr_{\A,S}[a - \Psi(w_{\A,S},\ellsp) \ge 1.1\mathbb{E}[a-\Psi(w_{\A,S},\ellsp)]] \le 10/11
\end{equation*}
and therefore 
$$\Pr_{\A,S}\left[\; \Pr_{(x,y)\sim\D_{\ellsp}}[y\dotp{w_{\A,S}}{x} < 0]\ge \frac{1}{12}((1-\beta)\varepsilon + \alpha \beta)\right]\ge \Pr_{\A,S}\left[\; \Psi(w_{\A,S},\ellsp) \ge \frac{1}{12}((1-\beta)\varepsilon + \alpha \beta)\right]  \ge 1/11 \;.$$

\newcommand{\etalchar}[1]{$^{#1}$}

\appendix
\section{Technical Lemmas} \label{appSec:jlDot}
This section is devoted to the proof of Lemma~\ref{l:jlDot}. We start by proving some tail bounds for norms and dot products of normal vectors.

\begin{claim}\label{c:mgfProd}
Let $X,Y \sim {\cal N}(0,1)$ be independent. Then 
\begin{enumerate}
	\item For every $\alpha < 1/2$, $\mathbb{E}\left[e^{\alpha X^2}\right] = \frac{1}{\sqrt{1-2\alpha}}$; and
	\item For every $\alpha \in (-1,1)$, $\mathbb{E}\left[e^{\alpha XY}\right] = \frac{1}{\sqrt{1-\alpha^2}}$.
\end{enumerate}
\end{claim}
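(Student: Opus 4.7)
The plan is to handle the two parts in sequence, with the second reducing cleanly to the first via conditioning.

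For the first identity, I would compute the expectation directly against the Gaussian density:
\[
\mathbb{E}\left[e^{\alpha X^2}\right] = \int_{-\infty}^{\infty} e^{\alpha x^2} \cdot \frac{1}{\sqrt{2\pi}} e^{-x^2/2}\, dx = \frac{1}{\sqrt{2\pi}}\int_{-\infty}^{\infty} e^{-(1-2\alpha)x^2/2}\, dx.
\]
Since $\alpha < 1/2$, the coefficient $1-2\alpha$ is strictly positive, so the integrand is a scaled Gaussian density with variance $\sigma^2 = 1/(1-2\alpha)$. Rescaling (or recognizing the Gaussian normalization constant) gives the value $\sigma = 1/\sqrt{1-2\alpha}$, as required.

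For the second identity, I would condition on $Y$. Fixing $Y = y$ makes $\alpha X Y = (\alpha y) X$ a scalar multiple of a standard Gaussian, whose standard MGF evaluation gives $\mathbb{E}[e^{(\alpha y) X}] = e^{\alpha^2 y^2 / 2}$. Taking the outer expectation over $Y$ yields
\[
\mathbb{E}\left[e^{\alpha X Y}\right] = \mathbb{E}\left[e^{\alpha^2 Y^2 / 2}\right].
\]
Now apply part~1 with parameter $\alpha' = \alpha^2/2$. Since $\alpha \in (-1,1)$ we have $\alpha' < 1/2$, so the condition is satisfied and the value is $1/\sqrt{1-2\alpha'} = 1/\sqrt{1-\alpha^2}$.

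There is no real obstacle here; both computations are standard Gaussian integral manipulations. The only small care required is checking the ranges of the parameters (that $1-2\alpha > 0$ in part 1 and $\alpha^2 < 1$ in part 2) to justify that the integrals converge and the MGF reduction is valid.
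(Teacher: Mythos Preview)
Your proof is correct. Part~1 is handled identically to the paper, by direct integration against the Gaussian density. For part~2 the paper instead writes out the double integral and completes the square in $y$ to peel off a Gaussian factor, then integrates the remaining Gaussian in $x$; your approach of conditioning on $Y$, invoking the standard normal MGF $\mathbb{E}[e^{tX}] = e^{t^2/2}$, and then reducing to part~1 with parameter $\alpha^2/2$ is a slightly cleaner packaging of the same computation, with the small bonus that it reuses the first identity rather than redoing a Gaussian integral from scratch.
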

\begin{proof}
To prove the first part let $\alpha < 1/2$, then
\begin{equation*}
\begin{split}
\mathbb{E}\left[e^{\alpha X^2}\right] &= \frac{1}{\sqrt{2\pi}}\int\limits_{-\infty}^{\infty}{e^{\alpha x^2}e^{\frac{-x^2}{2}}dx} = \frac{1}{\sqrt{1-2\alpha}} \cdot \sqrt{\frac{1-2\alpha}{2\pi}}\int\limits_{-\infty}^{\infty}{e^{\frac{-(1-2\alpha)x^2}{2}}dx} = \frac{1}{\sqrt{1-2\alpha}}
\end{split}
\end{equation*}

Let $\alpha \in (-1,1)$, then

\begin{equation*}
\begin{split}
\mathbb{E}\left[e^{\alpha XY}\right] &= \frac{1}{2\pi}\int\limits_{-\infty}^{\infty}{\int\limits_{-\infty}^{\infty}{e^{\alpha xy}e^{\frac{-x^2}{2}}e^{\frac{-y^2}{2}}dy}dx} = \frac{1}{2\pi}\int\limits_{-\infty}^{\infty}{\int\limits_{-\infty}^{\infty}{e^{\frac{-x^2}{2}}e^{\frac{-y^2+2\alpha xy-\alpha^2x^2}{2}}e^{\frac{\alpha^2x^2}{2}}dy}dx} \\
&= \frac{1}{2\pi}\int\limits_{-\infty}^{\infty}{e^{\frac{-x^2+\alpha^2x^2}{2}}\int\limits_{-\infty}^{\infty}{e^{\frac{-(y-\alpha x)^2}{2}}dy}dx} = \frac{1}{\sqrt{2\pi}}\int\limits_{-\infty}^{\infty}{e^{\frac{-x^2+\alpha^2x^2}{2}}dx} \\
&= \frac{1}{\sqrt{2\pi}}\int\limits_{-\infty}^{\infty}{e^{\frac{-x^2(1-\alpha^2)}{2}}dx} = \frac{1}{\sqrt{1-\alpha^2}}
\end{split}
\end{equation*}

\end{proof}

\begin{claim}\label{c:tailChiSumProd}
Let $X_1,\ldots,X_k,Y_1,\ldots,Y_k \sim {\cal N}(0,1)$ be independent, then for all $t \in [0,1/4)$, 
\begin{enumerate}
	\item $\Pr\left[\left|\frac{1}{k}\sum_{i\in [k]}{X_i^2}-1\right|\ge t\right] \le 2e^{-0.21kt^2}$; and
	\item $\Pr\left[\left|\frac{1}{k}\sum_{i\in [k]}{X_iY_i}\right|\ge t\right] \le 2e^{-0.48kt^2}$.
\end{enumerate}
\end{claim}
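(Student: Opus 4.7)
\textbf{Proof Plan for Claim~\ref{c:tailChiSumProd}.} Both bounds follow the standard Chernoff recipe: exponentiate the relevant sum, apply Markov, use independence to turn the expectation into a product of moment generating functions given by Claim~\ref{c:mgfProd}, and then optimize (or near-optimize) the Chernoff parameter.

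For the first part, set $Z = \sum_{i \in [k]} X_i^2$ and handle the two tails separately. For the upper tail, Markov and independence give, for any $\alpha \in (0,1/2)$,
\[
\Pr\!\left[Z \ge (1+t)k\right] \le e^{-\alpha(1+t)k}\,\prod_{i \in [k]} \E\!\left[e^{\alpha X_i^2}\right] = e^{-\alpha(1+t)k}(1-2\alpha)^{-k/2},
\]
using the first part of Claim~\ref{c:mgfProd}. The plan is to set $\alpha = t/(2(1+t))$ (which maximizes the exponent), leaving us with $\exp\!\bigl((k/2)(\ln(1+t)-t)\bigr)$, and then to bound $\ln(1+t)-t \le -t^2/2 + t^3/3$ by its Taylor series, which for $t \in [0,1/4)$ gives $\le -(5/12)t^2 < -0.42\,t^2$, i.e.\ an $\exp(-0.21\,kt^2)$ upper-tail bound. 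The lower tail is analogous, with $\E[e^{-\alpha Z}] = (1+2\alpha)^{-k/2}$, optimal $\alpha = t/(2(1-t))$, and the tighter one-sided estimate $\ln(1-t)+t \le -t^2/2$, yielding $\exp(-kt^2/4)$. Summing the two tails gives a leading factor of $2$.

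For the second part, set $W = \sum_{i \in [k]} X_i Y_i$ and, using the second part of Claim~\ref{c:mgfProd}, write for $\alpha \in (0,1)$
\[
\Pr\!\left[W \ge tk\right] \le e^{-\alpha tk}\,(1-\alpha^2)^{-k/2}.
\]
Rather than solving the quadratic optimum exactly, the plan is to plug in the convenient value $\alpha = t$ (which is close to optimal for small $t$). The exponent becomes $-kt^2 - (k/2)\ln(1-t^2)$; expanding $-\ln(1-t^2) = t^2 + t^4/2 + t^6/3 + \cdots$, the exponent equals $-kt^2/2 + (k/2)\sum_{j \ge 2} t^{2j}/j$. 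For $t \in [0,1/4)$ the tail of the series is easily bounded by $kt^2/64$, so the overall exponent is at most $-31kt^2/64 < -0.48\,kt^2$. The lower-tail bound is identical by symmetry of $XY$ (replace $W$ by $-W$), and a union bound of the two tails gives the factor of $2$.

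The only genuinely delicate step is squeezing out the stated constants ($0.21$ and $0.48$) from the Taylor remainders of $\ln(1\pm t)$ and $\ln(1-t^2)$ on the range $t \in [0,1/4)$; once the correct choice of $\alpha$ is made, the remaining work is a short numerical estimate on the tail of the logarithmic series, and no deeper tool than Claim~\ref{c:mgfProd} is needed.
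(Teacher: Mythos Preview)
Your overall strategy matches the paper's exactly: Chernoff--Markov, the moment generating functions from Claim~\ref{c:mgfProd}, and optimization in $\alpha$. There is, however, one genuine arithmetic slip and one genuine simplification worth noting.

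\medskip
\noindent\textbf{Part 1: a numerical gap.} Your upper-tail argument is identical to the paper's up to the point where you bound $\ln(1+t)-t$. You write
\[
\ln(1+t)-t \;\le\; -\tfrac{t^2}{2}+\tfrac{t^3}{3} \;\le\; -\tfrac{5}{12}\,t^2 \;<\; -0.42\,t^2,
\]
but $5/12 \approx 0.4167 < 0.42$, so the last inequality is false. Your truncated Taylor estimate only yields an exponent of $-\tfrac{5}{24}kt^2 \approx -0.208\,kt^2$, just short of the claimed $-0.21\,kt^2$. The paper instead verifies directly that $\ln(1+t)\le t-0.42\,t^2$ on $[0,1/4)$ (equivalently, it checks that $(t-\ln(1+t))/t^2$ stays above $0.42$ on that interval; at $t=1/4$ the ratio is about $0.43$). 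You need this slightly sharper one-line estimate rather than the third-order Taylor truncation. Your lower-tail bound is fine and matches the paper.

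\medskip
\noindent\textbf{Part 2: a simpler choice than the paper's.} Here your route is actually cleaner. The paper solves for the exact optimizer $\alpha=\bigl(-1+\sqrt{1+4t^2}\bigr)/(2t)$ and then has to analyze the resulting expression via an auxiliary function. Your choice $\alpha=t$ avoids all of that and still lands the constant: the exponent becomes $-kt^2/2 + (k/2)\sum_{j\ge 2}t^{2j}/j$, and for $t<1/4$ the correction term is at most $kt^2/60$ (your ``$kt^2/64$'' slightly undercounts the $j\ge 3$ terms, but $kt^2/60$ is easy and suffices), giving an exponent below $-29kt^2/60 < -0.48\,kt^2$. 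So your Part~2 argument is correct and more elementary than the paper's.
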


\begin{proof}
To prove the first part, denote $Z = \sum_{i \in [k]}{X_i^2}$ and let $t \in [0,1/4)$. For every $\alpha \in (0,1/2)$, we have that 

\begin{equation*}
\Pr[Z/k-1 > t] = \Pr[e^{\alpha Z} > e^{\alpha k (1+t)}] \le e^{-\alpha k (t+1)}\mathbb{E}[e^{\alpha Z}] = e^{-\alpha k (t+1)}(1 - 2\alpha)^{-k/2} = e^{\frac{-k}{2}\left(2\alpha (t+1) + \ln(1-2\alpha)\right)} \;.
\end{equation*}

By setting $\alpha = \frac{t}{2(t+1)}$ we get that 
\begin{equation*}
\Pr[Z/k-1 > t] \le e^{\frac{-k}{2}\left(t - \ln(1+t)\right)} \le  e^{-0.21kt^2}\;,
\end{equation*}
where the last inequality is due to the fact that for every $t \in [0,1/4)$, $\ln(1+t) \le t - 0.42t^2$.
Similarly, for $\alpha < 0$ we get that

\begin{equation*}
\Pr[Z/k-1 < -t] = \Pr[e^{\alpha Z} > e^{\alpha k (1-t)}] \le e^{-\alpha k (1-t)}\mathbb{E}[e^{\alpha Z}] = e^{-\alpha k (1-t)}(1 - 2\alpha)^{-k/2} = e^{\frac{-k}{2}\left(2\alpha (1-t) + \ln(1-2\alpha)\right)} \;.
\end{equation*}

By setting $\alpha = \frac{-t}{2(1-t)}$ we get that 
\begin{equation*}
\Pr[Z/k-1 < -t] \le e^{\frac{-k}{2}\left(-t - \ln(1-t)\right)} \le  e^{-0.25kt^2}\;,
\end{equation*}
where the last inequality is due to the fact that for every $t \in [0,1/4)$, $\ln(1-t) \le -t - 0.5t^2$.

To prove the second part of the claim, let $t \in [0,1/4)$, then for every $\alpha \in (-1,1)$, we have that 

\begin{equation*}
\Pr[Z > kt] = \Pr[e^{\alpha Z} > e^{\alpha k t}] \le e^{-\alpha k t}\mathbb{E}[e^{\alpha Z}] = e^{-\alpha k t}(1 - \alpha^2)^{-k/2} = e^{\frac{-k}{2}\left(2\alpha t + \ln(1-\alpha^2)\right)} \;.
\end{equation*}

By setting $\alpha = \frac{-1 + \sqrt{1+4t^2}}{2t}$ we get that 

\begin{equation*}
\Pr[Z > kt] \le e^{\frac{-k}{2}\left(-1 + \sqrt{1+4t^2} + \ln\left(\frac{-1+\sqrt{1+4t^2}}{2t^2}\right)\right)} = e^{\frac{-k}{2}\left(-1 + \sqrt{1+4t^2} + \ln\left(\frac{2}{1+\sqrt{1+4t^2}}\right)\right)} \;.
\end{equation*}

For every $x >0$, let $$f(x) = -1+x+\ln(2/(x+1))-0.24(x^2-1)\;.$$ Since for every $x \in [1,\sqrt{5}/2]$ we have that $f(x) \ge 0$, then for every $t \in [0,1/4)$, $f(\sqrt{1+4t^2}) \ge 0$. That is
\begin{equation*}
-1 + \sqrt{1+4t^2} + \ln\left(\frac{2}{1+\sqrt{1+4t^2}}\right) - 0.24\cdot 4t^2 \ge 0 \;.
\end{equation*}
We conclude that 

\begin{equation*}
\Pr[Z > kt] \le e^{\frac{-k}{2}\left(-1 + \sqrt{1+4t^2} + \ln\left(\frac{2}{1+\sqrt{1+4t^2}}\right)\right)} \le e^{\frac{-k}{2} \cdot 0.96t^2} = e^{-0.48kt^2} \;.
\end{equation*}

From symmetry we get that $\Pr[Z < - kt] \le e^{-0.48kt^2}$.
\end{proof}

\begin{proof}[Proof of Lemma~\ref{l:jlDot}]
The first part is follows from the standard proof of the Johnson-Lindenstrauss lemma. Every entry of $Au$ is independently ${\cal N}(0,\|u\|_2^2/k)$ distributed. 
Hence $X \eqdef \frac{k\|Au\|_2^2}{\|u\|_2^2}$ is distributed as a chi-squared 
distribution with $k$ degrees of freedom. 
From Claim~\ref{c:tailChiSumProd}, we get that $\Pr[|\|Au\|_2^2-\|u\|^2_2| \geq t\|u\|^2_2] = \Pr[|X/k-1| \geq t] \le 2e^{-0.21kt^2}$. 
\newline
To prove the second part, let $u,v \in \mathbb{R}^d$. Assume first that $\|u\|_2=\|v\|_2=1$. 
Denote $w = v - \dotp{u}{v}u$ and let $\hat{w} = w/\|w\|_2$. Note that $u \bot w$, and therefore $\|w\|_2 = \sqrt{\|v\|_2^2 - \dotp{u}{v}^2\|u\|_2^2} = \sqrt{1-\dotp{u}{v}^2}$.
For every $i \in [k]$, let $a_i$ be the $i$th row of $A$ and let $X_i \eqdef \dotp{a_i}{u}$ and $Y_i \eqdef \dotp{a_i}{\hat{w}}$. By the rotational invariance of Gaussians and orthonormality of $u$ and $\hat{w}$ we get that $X_1,\ldots,X_k,Y_1,\ldots,Y_k \sim {\cal N}(0,1/k)$ are independent. Next, observe that 

\begin{equation*}
\dotp{Au}{Av} = \dotp{Au}{A(\dotp{u}{v}u)} + \dotp{Au}{A(v-\dotp{u}{v}u)} = \dotp{u}{v} \|Au\|_2^2 + \|w\|_2\dotp{Au}{A\hat{w}} \;,
\end{equation*}

and moreover, $\dotp{Au}{A\hat{w}} = \sum_{i \in [k]}{X_iY_i}$.
Therefore 

\begin{equation}
|\dotp{Au}{Av} - \dotp{u}{v}| \le |\dotp{u}{v}|\cdot| \|Au\|_2^2 - 1| + \|w\|_2\cdot|\dotp{Au}{A\hat{w}}|
\end{equation}

Next, let $t \in [0,1/4)$, and let $\alpha \in [0,1]$ then

\begin{equation}
\begin{split}
\Pr&\left[|\dotp{Au}{Av} - \dotp{u}{v}|>t\right] \\
&\le \Pr\left[|\dotp{u}{v}|\cdot| \|Au\|_2^2 - 1| > \alpha t\right] + \Pr\left[\|w\|_2\cdot \sum_{i \in [k]}{X_iY_i} \ge (1-\alpha)t\right]
\end{split}
\label{eq:triangleIneq}
\end{equation}

From the first part of the lemma we get that 
$$\Pr\left[|\dotp{u}{v}|\cdot| \|Au\|_2^2 - 1| > \alpha t\right] \le 2e^{\frac{-0.21k\alpha^2t^2}{\dotp{u}{v}^2}} \;,$$
and from Claim~\ref{c:tailChiSumProd} we get that 

$$\Pr\left[\|w\|_2\cdot \sum_{i \in [k]}{X_iY_i} \ge (1-\alpha)t\right] \le 2e^{\frac{-0.48k(1-\alpha)^2t^2}{\|w\|_2^2}}=2e^{\frac{-0.48k(1-\alpha)^2t^2}{1 - \dotp{u}{v}^2}}\;.$$

Setting $\alpha = \frac{\sqrt{0.48}\dotp{u}{v}}{\sqrt{0.48}\dotp{u}{v}+ \sqrt{0.21(1 - \dotp{u}{v}^2})}$ and plugging into \eqref{eq:triangleIneq} we get that

\begin{equation*}
\Pr\left[|\dotp{Au}{Av} - \dotp{u}{v}|>t\right] \le 4e^{\frac{-0.48 \cdot 0.21 kt^2}{(\sqrt{0.48}\dotp{u}{v} + \sqrt{0.21(1 -\dotp{u}{v}^2)})^2}} \le 4e^{\frac{-0.48 \cdot 0.21 kt^2}{0.69}} = 4e^{-kt^2/7} \;,
\end{equation*}

where the inequality before last is due to the fact that $\sqrt{0.48}x + \sqrt{0.21(1-x^2)} \le \sqrt{0.69}$ for all $x \in [-1,1]$.
Finally, for general $u,v \in \mathbb{R}^d$ we get that since $u'= u/\|u\|_2$
and $v'= v/\|v\|_2$ are unit vectors then

\begin{equation*}
\Pr_A[|\dotp{Au}{Av} - \dotp{u}{v}| > t] = \Pr_A\left[\left|\dotp{Au'}{Av'} - \dotp{u'}{v'}\right| > \frac{t}{\|u\|_2\|v\|_2}\right] \le 4e^{-\frac{kt^2}{7\|u\|_2^2\|v\|_2^2}}
\end{equation*}

\end{proof}


\begin{thebibliography}{GKL{\etalchar{+}}19}

\bibitem[AB09]{AB09}
M.~Anthony and P.~L. Bartlett.
\newblock {\em Neural Network Learning: Theoretical Foundations}.
\newblock Cambridge University Press, New York, NY, USA, 1st edition, 2009.

\bibitem[AK17]{AK17}
N.~Alon and B.~Klartag.
\newblock Optimal compression of approximate inner products and dimension
  reduction.
\newblock In {\em 58th {IEEE} Annual Symposium on Foundations of Computer
  Science, {FOCS} 2017}, pages 639--650, 2017.

\bibitem[BGV92]{Boser:1992}
B.~E. Boser, I.~M. Guyon, and V.~N. Vapnik.
\newblock A training algorithm for optimal margin classifiers.
\newblock In {\em Proceedings of the Fifth Annual Workshop on Computational
  Learning Theory}, COLT '92. ACM, 1992.

\bibitem[BM02]{DBLP:journals/jmlr/BartlettM02}
P.~L. Bartlett and S.~Mendelson.
\newblock Rademacher and gaussian complexities: Risk bounds and structural
  results.
\newblock {\em J. Mach. Learn. Res.}, 3:463--482, 2002.

\bibitem[BST99]{Bartlett98generalizationperformance}
P.~Bartlett and J.~Shawe-Taylor.
\newblock Generalization performance of support vector machines and other
  pattern classifiers.
\newblock In {\em Advances in Kernel Methods-Support Vector Learning}, pages
  43--54. MIT Press, Cambridge, MA, 1999.

\bibitem[CV95]{Cortes1995}
C.~Cortes and V.~Vapnik.
\newblock Support-vector networks.
\newblock {\em Machine Learning}, 20(3):273--297, 1995.

\bibitem[DG03]{DG03}
S.~Dasgupta and A.~Gupta.
\newblock An elementary proof of a theorem of {J}ohnson and {L}indenstrauss.
\newblock {\em Random Struct. Algorithms}, 22(1):60--65, 2003.

\bibitem[EHKV89]{EHKV89}
A.~Ehrenfeucht, D.~Haussler, M.~Kearns, and L.~Valiant.
\newblock A general lower bound on the number of examples needed for learning.
\newblock {\em Information and Computation}, 82(3):247 -- 261, 1989.

\bibitem[GKL{\etalchar{+}}19]{GKLMN19}
A.~Gr{\o}nlund, L.~Kamma, K.~G. Larsen, A.~Mathiasen, and J.~Nelson.
\newblock Margin-based generalization lower bounds for boosted classifiers.
\newblock In {\em Annual Conference on Neural Information Processing Systems
  2019, NeurIPS 2019}, 2019.

\bibitem[GZ13]{GZ13}
W.~Gao and Z.-H. Zhou.
\newblock On the doubt about margin explanation of boosting.
\newblock {\em Artificial Intelligence}, 203:1--18, 2013.

\bibitem[Jan18]{J18}
S.~Janson.
\newblock Tail bounds for sums of geometric and exponential variables.
\newblock {\em Statistics \& Probability Letters}, 135:1 -- 6, 2018.
\newblock \href {http://dx.doi.org/https://doi.org/10.1016/j.spl.2017.11.017}
  {\path{doi:https://doi.org/10.1016/j.spl.2017.11.017}}.

\bibitem[JL84]{JL84}
W.~Johnson and J.~Lindenstrauss.
\newblock Extensions of {L}ipschitz mappings into a {H}ilbert space.
\newblock In {\em Conference in modern analysis and probability (New Haven,
  Conn., 1982)}, volume~26 of {\em Contemporary Mathematics}, pages 189--206.
  American Mathematical Society, 1984.

\bibitem[KY15]{reversechernoff}
P.~N. Klein and N.~E. Young.
\newblock On the number of iterations for dantzig-wolfe optimization and
  packing-covering approximation algorithms.
\newblock {\em {SIAM} J. Comput.}, 44(4):1154--1172, 2015.

\bibitem[McA03]{DBLP:conf/colt/McAllester03}
D.~A. McAllester.
\newblock Simplified pac-bayesian margin bounds.
\newblock In B.~Sch{\"{o}}lkopf and M.~K. Warmuth, editors, {\em Computational
  Learning Theory and Kernel Machines, 16th Annual Conference on Computational
  Learning Theory and 7th Kernel Workshop, COLT/Kernel 2003, Washington, DC,
  USA, August 24-27, 2003, Proceedings}, volume 2777 of {\em Lecture Notes in
  Computer Science}, pages 203--215. Springer, 2003.

\bibitem[SFBL98]{SFBL98}
R.~E. Schapire, Y.~Freund, P.~Bartlett, and W.~S. Lee.
\newblock Boosting the margin: A new explanation for the effectiveness of
  voting methods.
\newblock {\em The annals of statistics}, 26(5):1651--1686, 1998.

\bibitem[Vap82]{Vapnik:1982}
V.~Vapnik.
\newblock {\em Estimation of Dependences Based on Empirical Data: Springer
  Series in Statistics (Springer Series in Statistics)}.
\newblock Springer-Verlag, Berlin, Heidelberg, 1982.

\bibitem[VC15]{VC71}
V.~Vapnik and A.~Chervonenkis.
\newblock {\em On the uniform convergence of relative frequencies of events to
  their probabilities}, pages 11--30.
\newblock 01 2015.
\newblock \href {http://dx.doi.org/10.1007/978-3-319-21852-6_3}
  {\path{doi:10.1007/978-3-319-21852-6_3}}.

\end{thebibliography}
\end{document}